\definecolor{BlueIMT}{RGB}{0,42,72}
    \let\Cref\crtCref
    \let\cref\crtcref
\newtheorem{definition}[thm]{Definition}
\newtheorem{assumption}[thm]{Assumption}
\theoremstyle{remark}
\newtheorem{remark}[thm]{Remark}
\crefname{thm}{theorem}{theorems}
\crefname{assumption}{assumption}{assumptions}
\crefname{cor}{corollary}{corollaries}
\crefname{prop}{proposition}{propositions}
\crefname{lemma}{lemma}{lemmas}
\crefname{fact}{fact}{facts}
\newmdtheoremenv{algo}{Algorithm}
\newmdtheoremenv{procedure}{Decision process}
\newlist{assnum}{enumerate}{1} %
\setlist[assnum]{label=(\roman*), ref=\theassumption(\roman*)}
\newlist{lemnum}{enumerate}{1} %
\setlist[lemnum]{label=(\roman*), ref=\thelemma(\roman*)}
\newlist{thmnum}{enumerate}{1} %
\setlist[thmnum]{label=(\roman*), ref=\thethm(\roman*)}
\newlist{cornum}{enumerate}{1} %
\setlist[cornum]{label=(\roman*), ref=\thecor(\roman*)}
\newlist{definitionnum}{enumerate}{1} %
\setlist[definitionnum]{label=(\roman*), ref=\thedefinition(\roman*)}
\newlist{propnum}{enumerate}{1} %
\setlist[propnum]{label=(\roman*), ref=\theproposition(\roman*)}
\newlist{examplenum}{enumerate}{1} %
\setlist[examplenum]{label=(\roman*), ref=\theexample(\roman*)}
\newcommand\numberthis{\addtocounter{equation}{1}\tag{\theequation}}		%
\newcommand\dueto[1]{\text{\footnotesize#1}}
\newcommand\duetop[2]{\stackrel{\text{\clap{\itshape\tiny#1}}}{#2}}
\DeclareMathOperator*{\argmax}{arg\,max}
\DeclareMathOperator*{\argmin}{arg\,min}
\newcommand{\R}{\mathbb{R}}
\newcommand{\lmo}{\operatorname{lmo}}
\newcommand{\sign}{\operatorname{sign}}
\newcommand{\tr}{\operatorname{tr}}
\newcommand{\norm}[1]{\left\Vert{#1}\right\Vert}
\newtcolorbox{defbox}{colback=black!5!white,colframe=black!75!black}
\newtcolorbox{asmbox}{colback=black!5!white,colframe=black!75!black}
\newtcolorbox{thmbox}{colback=red!5!white,colframe=red!75!black}
\newcommand{\diag}{\operatorname{diag}}
\newcommandx{\QC}[2][1={},2={}]{\ifstrempty{#1}{Q#2}{Q_{#1}\ifstrempty{#2}{}{(#2)}}}
\newcommandx{\PC}[2][1={},2={}]{\ifstrempty{#1}{P#2}{P_{#1}\ifstrempty{#2}{}{(#2)}}}
\newcommandx{\HC}[2][1={},2={}]{\ifstrempty{#1}{H#2}{H_{#1}\ifstrempty{#2}{}{(#2)}}}
\newcommandx{\MC}[2][1={},2={}]{\ifstrempty{#1}{M#2}{M_{#1}\ifstrempty{#2}{}{(#2)}}}
\newcommandx{\EF}[2][1={k},2={}]{\mathbb E\ifstrempty{#1}{}{_{#1}}#2}
\newcommand{\algfont}[1]{\textbf{#1}}
\title{Generalized Gradient Norm Clipping \\
\& Non-Euclidean $(L_0,L_1)$-Smoothness}
\author{%
  Thomas Pethick\thanks{Equal contribution.} \\
  EPFL (LIONS) \\
  \texttt{thomas.pethick@epfl.ch}
  \And
  Wanyun Xie\footnotemark[1] \\
  EPFL (LIONS)\\
  \texttt{wanyun.xie@epfl.ch}
  \And
  Mete Erdogan \\
  EPFL (LIONS)\\
  \texttt{mete.erdogan@epfl.ch}
  \And
  Kimon Antonakopoulos \\
  EPFL (LIONS)\\
  \texttt{kimon.antonakopoulos@epfl.ch}
  \And
  Antonio Silveti-Falls \\
  Université Paris-Saclay (CVN) \\
  \texttt{tonys.falls@gmail.com}
  \And 
  Volkan Cevher \\
  EPFL (LIONS) \\
  \texttt{volkan.cevher@epfl.ch}
}
\begin{document}

\maketitle

\begin{abstract}

This work introduces a hybrid non-Euclidean optimization method which generalizes gradient norm clipping by combining steepest descent and conditional gradient approaches.
The method achieves the best of both worlds by establishing a descent property under a generalized notion of ($L_0$,$L_1$)-smoothness.
Weight decay is incorporated in a principled manner by identifying a connection to the Frank-Wolfe short step.
In the stochastic case, we show an order optimal $O(n^{-1/4})$ convergence rate by leveraging a momentum based gradient estimator.
We discuss how to instantiate the algorithms for deep learning, which we dub Clipped Scion, and demonstrate their properties on image classification and language modeling.
The code is available at \url{https://github.com/LIONS-EPFL/ClippedScion}.

\end{abstract}

\etocdepthtag.toc{mtchapter}
\etocsettagdepth{mtchapter}{subsection}
\etocsettagdepth{mtappendix}{none}

\section{Introduction}
\vspace{-2mm}

Recent work \citep{pethick2025training} has shown that conditional gradient methods\footnote{By conditional gradient based methods, we mean those methods which leverage a linear minimization oracle $\lmo(d) = \argmin\limits_{x\in\mathcal{D}}\langle d,x\rangle$ when updating their parameters with an open-loop stepsize.}, traditionally used for constrained optimization, can also solve unconstrained problems—offering an alternative to steepest descent.
From their analysis it becomes apparent that the two methods have distinct properties:
whereas steepest descent requires the stepsize $\gamma$ for $L$-smooth objectives to be taken as $\gamma < \nicefrac{2}{L}$, conditional gradient methods have no such requirement, thus allowing for large stepsizes, while remaining stable.

The price to pay for the stability is that conditional gradient based methods are not descent methods and thus eventually needs a diminishing stepsize to converge, even in the deterministic case.
The problem becomes very apparent if the iterates are close to the solution, since the iterates always move by a fixed magnitude and are thus pushed away from the solution.
Steepest descent does not suffer from the same problem since the effective stepsize automatically becomes smaller as the iterates approach a solution.
This observation naturally raises the following question:

\begin{center}
\emph{Can we combine the two methods and get the best of both worlds?
That is, does a stable method exist which takes large steps initially but adapts the stepsize when near a solution?
}
\end{center}

In this paper we answer the above in the affirmative by considering a hybrid method that combines a conditional gradient method with steepest descent.
The proposed method generalizes gradient norm clipping \citep{mikolov2012statistical} beyond the Euclidean case.
In practice, gradient norm clipping has been widely adopted to stabilize training of recurrent neural networks (RNNs), Transformers and diffusion models, especially in large-scale settings.
Theoretically, a precise characterization of the benefits has emerged under the $(L_0,L_1)$-smoothness assumption \citep{zhang2019gradient,zhang2020improved,koloskova2023revisiting}.
Expanding on this, we show that these benefits of clipping can be made compatible with non-Euclidean methods.
Besides clipping, we provide a novel analysis of conditional gradient methods without clipping under these same smoothness assumptions.

Concretely, we make the following contributions:
\begin{enumerate}[label=(\roman*)]
\item 
    We introduce a hybrid method between a conditional gradient method and steepest descent (\Cref{alg:GGNC}), which in the Euclidean case recovers gradient norm clipping.
    The benefit of the hybrid method is made precise by showing a descent property under a generalized $(L_0,L_1)$-smoothness condition.
\item 

    In the stochastic case we show an order optimal $O(n^{-1/4})$ rate by leveraging a momentum estimator.
    Convergence for a clipped algorithm with stochastic feedback appears to be new even in the Euclidean case.
\item 
    We establish a connection between clipping and the short step from the Frank-Wolfe literature, which similarly enjoys a descent property.
    The connection enables us to combine clipping with weight decay in a principled manner that maintains convergence guarantees.
    We propose a stochastic variant of the short step (\Cref{alg:SCGSS}) and establish a $O(n^{-1/4})$ rate.
\item 
    We explicitly instantiate the algorithms for deep learning through a product norm over layers (\Cref{alg:uClippedScion,alg:ClippedScion}) and demonstrate their properties through experiments on image classification and language modeling.
\end{enumerate}

\begin{table*}[t]
\caption{Special instantiations of \Cref{alg:GGNC} according to different choices of norm. 
Control on the norm of the parameters is guaranteed by the constrained variant of the method (\Cref{alg:SCGSS}).
}
\label{tbl:lmo}
{\centering
\bgroup
\def\arraystretch{1.2}
\resizebox{\textwidth}{!}{
\begin{tabular}{|l|c|c|c|c|c|c|}
\hline
Method & Norm type & Norm ball & $\lmo(d)$ & $\|d\|_*$ & Reference \\
\hline
\hline
\ref{eq:ClipGD} & Vector & Euclidean $\|\cdot\|_2$-ball & $- \tfrac{d}{\|d\|_2}$  & $\|d\|_2$ & \citep{mikolov2012statistical} \\
\hline
\ref{eq:ClippedSign} & Vector & Max-norm $\|\cdot\|_\infty$-ball & $- \sign(d)$ & $\|d\|_1$ & \textbf{This paper} \\
\hline
\ref{eq:ClippedSpectral}  & Matrix & Spectral norm $\|\cdot\|_{\mathcal S_\infty}$-ball & $-UV^\top${\color{blue}$^1$} & $-\tr(\lmo(d)^\top d)$ & \textbf{This paper} \\
\hline
Clipped Scion (\Cref{alg:uClippedScion,alg:ClippedScion}) & Product  & Max-norm ball over layers & $\{r_l\lmo_{\|\cdot\|_{\mathcal{W}_l}}(d_l)\}_{l\in[D]}$ & $-\sum_l \braket{r_l\lmo(d_l),d_l}$ & \textbf{This paper} \\
\hline
\end{tabular}
}
\egroup}
\footnotesize{\color{blue}$^1$} The reduced SVD is given as $d=U\diag(\sigma) V^\top$.
\end{table*}
\vspace{-2mm}
\begin{toappendix}
\section{Preliminaries}\label{app:prelim}
\begin{algorithm}[t]
\caption{Unconstrained ClippedScion}
\label{alg:uClippedScion}
\textbf{Input:} Horizon $n$, init. $x^1 = (W_1^1,...,W_D^1)$, $d^0 = 0$, momentum $\alpha_k \in (0,1]$, stepsize $\gamma \in (0,1)$, radii $r_l\in \R_+$, and $\rho > 0$.
\begin{algorithmic}[1]
    \For{$k = 1, \dots, n-1$}
        \State Sample $\xi_{k}\sim \mathcal P$
        \State $d^{k} \gets \alpha_{k} \nabla f(x^{k}, \xi_{k}) + (1 - \alpha_{k})d^{k-1}$
        \State $v^k_l \gets -r_l\lmo_{\|\cdot\|_{\mathcal{W}_l}}(d^k_l) \quad \forall 1\leq l \leq D$ 
        \State $\eta_k \gets \min\{\rho, \sum_{l=1}^D \braket{d^k_l,v^k_l}\}$
        \State $x^{k+1} \gets x^k - \gamma \eta_k v^k$
    \EndFor
    \State Choose $\bar{x}^n$ uniformly at random from $\{x^1, \dots, x^n\}$
    \item[\algfont{Return}] $\bar x^n$
\end{algorithmic}
\end{algorithm}

\begin{algorithm}[t]
\caption{ClippedScion}
\label{alg:ClippedScion}
\textbf{Input:} Horizon $n$, init. $x^1 = (W_1^1,\ldots,W_D^1) \in r_1\mathcal{D}_1\times\cdots\times r_D\mathcal{D}_D$, $d^0 = 0$, stepsize $\gamma\in(0,1)$, momentum $\alpha_k \in (0,1]$
\begin{algorithmic}[1]
    \For{$k = 1, \dots, n$}
        \State Sample $\xi_{k}\sim \mathcal P$
        \State $d^{k} \gets \alpha_{k} \nabla f(x^{k}, \xi_{k}) + (1 - \alpha_{k})d^{k-1}$
        \State $v^k_l \gets x^k_l - r_l\lmo_{\|\cdot\|_{\mathcal{W}_l}}(d^k_l) \quad \forall 1\leq l \leq D$ 
        \State Variant 1: $\eta_k \gets \min\{\rho, \tfrac{\sum_{l=1}^D \braket{d^k_l,v^k_l}}{\max_{l=1}^D \|v^k_l\|^2_{\mathcal{W}_l}}\}$
        \State Variant 2: $\eta_k \gets \min\{\rho, \tfrac{\sum_{l=1}^D \braket{d^k_l,v^k_l}}{4}\}$
        \State $x^{k+1} \gets x^k-\gamma\eta_kv^k$
    \EndFor
    \State Choose $\bar{x}^n$ uniformly at random from $\{x^1, \dots, x^n\}$
    \item[\algfont{Return}] $\bar{x}^n$
\end{algorithmic}
\end{algorithm}

Throughout, $L$-smoothness is defined as follows.
\begin{definition}\label{def:Lip} A gradient mapping $\nabla f: \mathcal X \rightarrow \R^d$ is said to be L-smooth with $L \in (0,\infty)$ if for all $x,y \in \mathcal X$ it holds that,
    \begin{equation}
    \|\nabla f(x) - \nabla f(y)\|_*
    \leq
    L\|x-y\|.
    \end{equation}
\end{definition}

The sharp operator has the following properties
\begin{equation}\label{eq:sharp:props}
\braket{s,s^\sharp}=\|s^\sharp\|^2 =\|s\|_*^2
\end{equation}
See \citet[App. A.1]{kelner2014almost} for the proof.

\end{toappendix}
\section{Preliminaries}
\vspace{-2mm}
Given a continuously differentiable objective function $f:\mathcal X \rightarrow \R$, the classical gradient descent method (GD) with a stepsize $\gamma>0$ can be written as
\begin{equation}\label{eq:GD}
\tag{GD}
x^{k+1} = \argmin_{x\in\mathcal{X}}\gamma\langle \nabla f(x^k),x\rangle + \tfrac{1}{2}\|x-x^k\|_2^2 = x^k - \gamma \nabla f(x^k).
\end{equation}
The normalized gradient descent method with radius $\rho>0$ is, in comparison, defined as follows
\begin{equation*}\label{eq:NGD}
\tag{Normalized GD}
x^{k+1}=\argmin\limits_{\|x-x^k\|_2\leq \rho}\gamma\langle \nabla f(x^k),x\rangle = x^k + \rho\argmin\limits_{\|x\|_2\leq 1}\gamma\langle \nabla f(x^k),x\rangle = x^k - \gamma \left[\rho \tfrac{\nabla f(x^k)}{\|\nabla f(x^k)\|_2}\right].
\end{equation*}
A hybrid variant is much more popular in practice,
\begin{equation*}\label{eq:ClipGD}
\tag{Clipped GD}
x^{k+1} = \argmin\limits_{\|x-x^k\|_2\leq \rho}\gamma\langle \nabla f(x^k),x\rangle + \tfrac{1}{2}\|x-x^k\|_2^2 = x^k - \gamma \min\{1, \tfrac{\rho}{\|\nabla f(x^k)\|_2}\} \nabla f(x^k),
\end{equation*}
which we notice can be rewritten by combining \ref{eq:GD} and \ref{eq:NGD}. Indeed, all three of these algorithms correspond to minimizing 
\begin{equation*}
    \gamma\langle \nabla f(x^k),x\rangle + R(x)
\end{equation*}
for different choices of $R$. For \ref{eq:GD}, $R(x)=\frac{1}{2}\|x-x^k\|_{2}^2$ while for \ref{eq:NGD}, $R(x)=\iota_{\mathcal{\rho \mathcal{D}}}(x-x^k)$, the indicator function for Euclidean ball $\mathcal{D}=\{x\colon\|x\|_2\leq1\}$ scaled by the radius $\rho$; \ref{eq:ClipGD} combines both by taking $R(x) = \frac{1}{2}\|x-x^k\|_2^2+\iota_{\mathcal{\rho \mathcal{D}}}(x-x^k)$. This results in the iterates of \ref{eq:ClipGD} being generated by the update in \ref{eq:NGD} if $\|\nabla f(x^k)\|_2$ is large, but reducing to the update in \ref{eq:GD} when $\|\nabla f(x^k)\|_2$ is small enough.

\paragraph{Observation I}
Our first observation is that both \ref{eq:GD} and \ref{eq:NGD} can be generalized to the non-Euclidean case. %
Define the sharp-operator \citep{nesterov2012efficiency,kelner2014almost}, 
\begin{equation*}
d^\sharp \in \argmax_{x \in \mathcal X} \{ \braket{d,x} - \tfrac{1}{2}\|x\|^2 \}.
\end{equation*}
Then, we can write the (possibly non-Euclidean) steepest descent method (SD) as follows
\begin{equation*}\label{eq:SD}
\tag{SD}
x^{k+1} = x^k - \gamma [\nabla f(x^k)]^\sharp
\end{equation*}
Observe that we recover \ref{eq:GD} when choosing the Euclidean $\ell_2$ norm.

Generalizing \ref{eq:NGD} to non-Euclidean norms is possible by noticing that the normalization can be written in terms of the \emph{linear minimization oracle} ($\lmo$)
\begin{equation*}
\lmo(d) \in \argmin_{x \in \mathcal D} \braket{d,x}
\end{equation*}
where the constraint is a (now assumed to be non-Euclidean) norm-ball $\mathcal D := \{ x \mid \|x\| \leq 1 \}$.
By choosing the $\ell_2$-norm ball, \ref{eq:NGD} can be seen as an instance of the so-called unconstrained conditional gradient method (uCG) \citep{pethick2025training},
\begin{equation*}\label{eq:uSCG}
\tag{uCG}
\begin{split}
x^{k+1} &= x^k + \gamma\rho \lmo(\nabla f(x^k)).
\end{split}
\end{equation*}

\paragraph{Observation II}
Our second central observation is that \ref{eq:uSCG} can \emph{in general} be considered a normalized version of steepest descent.
This relationship follows from noticing that the sharp operator and $\lmo$ can be defined in terms of each other.
Specifically, we have that
\begin{equation}\label{eq:lmo:sharp}
\lmo(d) = -\tfrac{d^\sharp}{\|d\|_*}\quad\quad\mbox{or, equivalently,}\quad\quad d^\sharp=-\|d\|_{\ast}\lmo(d).
\end{equation}
In the following section we use this observation to generalize \ref{eq:ClipGD} to the non-Euclidean case.
\vspace{-2mm}
\section{Method}\label{sec:method}
\vspace{-2mm}

We propose the \emph{generalized gradient norm clipping} method (GGNC)
\begin{equation}\label{eq:GGNC}
\tag{GGNC}
\begin{split}
x^{k+1} &= x^k - \gamma \tau_k [d^k]^\sharp
\quad \text{with} \quad
\tau_k := \min \{1, \tfrac{\rho}{\|d^k\|_*}\}.
\end{split}
\end{equation}
There is freedom in how to compute the dual norm $\|d^k\|_*$ due to the following equivalence property for the sharp operator, $\|s\|_*^2 = \|s^\sharp\|^2= \braket{s,s^\sharp}$.
This form is useful, e.g., in the Euclidean case where the sharp-operator is readily available, since then $[d^k]^\sharp=d^k$.

For norm choices where the $\lmo$ is more naturally available we can equivalently write \ref{eq:GGNC} as
\begin{equation*}\label{eq:GGNC:lmo}
\begin{split}
x^{k+1} = x^k + \gamma \eta_k \lmo(d^k)
\quad \text{with} \quad
\eta_k := \min \{\rho, \|d^k\|_*\}.
\end{split}
\end{equation*}
We have that $\|d^k\|_*=-\braket{d^k,\lmo(d^k)}$ due to the definition of the dual norm and the optimality of $\lmo(d^k)$.
So, provided that $\lmo$ has been computed, we can obtain $\|d^k\|_*$ with very little overhead.
From this rewriting we also see that $\rho$ can also be interpreted as the radius of the norm-ball constraint over which we compute the $\lmo$.

The \ref{eq:GGNC} update rule can be seen as the solution to the following optimization problem:
\begin{equation*}
x^{k+1} \in \argmin_{\|x-x^k\|\leq \rho} \gamma\braket{d^k,x-x^k} + \tfrac{1}{2}\|x-x^k\|^2
\end{equation*}
The objective is the same quadratic approximation that gives rise to \ref{eq:SD}, but the iterates are further constrained to a trust-region of radius $\rho$ in the chosen norm, as in \ref{eq:uSCG}.

\paragraph{Stochastic case}
In the deterministic case we can simply take the direction to be $d^k= \nabla f(x^k)$. 
In the stochastic case, one has to proceed with more care, since $\lmo(d^k)$ can be biased even when $d^k$ is unbiased, due to its potential nonlinearity.
With $\alpha_k \in (0,1]$, we define the momentum based gradient estimator
\begin{equation*}\label{eq:momentum}
d^k = (1-\alpha_k)d^{k-1} + \alpha_k \nabla f(x^k, \xi_k).
\end{equation*}

The final algorithm involving the momentum based gradient estimator is presented in \Cref{alg:GGNC}.

\begin{algorithm}[t]
\caption{Generalized Gradient Norm Clipping (GGNC)}
\label{alg:GGNC}
\textbf{Input:} Horizon $n$, init. $x^1 \in \mathcal X$, $d^0 = 0$, momentum $\alpha_k \in (0,1]$, stepsize $\gamma \in (0,1)$
\begin{algorithmic}[1]
    \For{$k = 1, \dots, n$}
        \State Sample $\xi_{k}\sim \mathcal P$
        \State $d^{k} \gets \alpha_{k} \nabla f(x^{k}, \xi_{k}) + (1 - \alpha_{k})d^{k-1}$
        \State $v^k \gets -\lmo(d^k)$ 
        \State $\eta_k \gets \min\{\rho, \braket{d^k,v^k}\}$
        \State $x^{k+1} \gets x^k - \gamma \eta_k v^k$
    \EndFor
    \State Choose $\bar{x}^n$ uniformly at random from $\{x^1, \dots, x^n\}$
    \item[\algfont{Return}] $\bar{x}^n$
\end{algorithmic}
Equivalently to step 4-6: $x^{k+1} \leftarrow x^k - \gamma \tau_k v^k$ with $\tau_k =\min \{1, \tfrac{\rho}{\braket{d^k,v^k}^{1/2}}\}$ and $v^k=[d^k]^\sharp$.
\end{algorithm}

\begin{algorithm}[t]
\caption{Stochastic Short Step Conditional Gradient (S$^3$CG)}
\label{alg:SCGSS}
\textbf{Input:} Horizon $n$, init. $x^1 \in \beta\mathcal D = \{x\in\mathcal{X}\colon \|x\|\leq \beta\}$, $d^0 = 0$, momentum $\alpha_k \in (0,1]$, stepsize $\gamma\in(0,1]$, ball radius $\beta>0$
\begin{algorithmic}[1]
    \For{$k = 1, \dots, n$}
        \State Sample $\xi_{k}\sim \mathcal P$
        \State $d^{k} \gets \alpha_{k} \nabla f(x^{k}, \xi_{k}) + (1 - \alpha_{k})d^{k-1}$
        \State $v^k \gets x^k - \beta\lmo(d^k)$ 
        \State Variant 1: $\eta_k \gets \min\{\rho, \tfrac{\braket{d^k,v^k}}{\|v^k\|^2}\}$
        \State Variant 2: $\eta_k \gets \min \{\rho,\tfrac{\braket{d^k, v^k}}{4\beta^2}\}$
        \State $x^{k+1} \gets x^k - \gamma\eta_kv^k$
    \EndFor
    \State Choose $\bar{x}^n$ uniformly at random from $\{x^1, \dots, x^n\}$
    \item[\algfont{Return}] $\bar{x}^n$
\end{algorithmic}
\end{algorithm}

\paragraph{Weight decay \& constrained problems}
Weight decay is a very popular technique, both as a regularizer to avoid overfitting and for ensuring numerical stability.
A precise characterization exists for weight decay when combined with the conditional gradient based schemes like \ref{eq:uSCG}, since the resulting update reduces to the classical conditional gradient method (a.k.a. Frank-Wolfe) designed for solving constrained problems \citep{chen2023lion,d2023we,xie2024implicit,pethick2025training},
\begin{equation*}\label{eq:CG}
\tag{CG}
x^{k+1} = (1-\gamma_k)x^k + \gamma_k \beta\lmo(\nabla f(x^k)),
\end{equation*}
where $\beta>0$ is the radius of norm-ball constraint and $\gamma_k>0$ is some stepsize to be defined.
The simplicial combination ensures that the iterates remain within the constraint set $\beta\mathcal{D}$ and, as a result, ensure that $\|x^k\|\leq \beta$ for all $k$.

The \ref{eq:CG} method is not necessarily a descent method. For the classical open-loop stepsize choice $\gamma_k = \nicefrac{2}{k+2}$, it is possible to step too far in the direction given by the $\lmo$, since the stepsize does not decrease near a critical point.
Naively adopting the adaptive stepsize choice from \ref{eq:GGNC} does not seem appropriate in the constrained case, since $\|d^k\|_*$ might not necessarily be zero at a solution.
Instead, we will argue that the correct analog of clipping in the constrained setting corresponds to a clipped version of the Frank-Wolfe \emph{short step}. Like \ref{eq:GGNC}, this stepsize ensures an analogous descent property.

The short step is almost an immediate consequence of the $L$-smoothness descent lemma, from which we have
\begin{align*}
f(x^{k+1}) 
    &\leq f(x^k) - \gamma_k\braket{\nabla f(x^k), x^k - \beta\lmo(\nabla f(x^k))} + \gamma_k^2\tfrac{L}{2} \|x^k - \beta\lmo(\nabla f(x^k)\|^2 
    \numberthis \label{eq:fw:norm} \\
    &\leq f(x^k) - \gamma_k\braket{\nabla f(x^k), x^k - \beta\lmo(\nabla f(x^k))} + 2\gamma_k^2L\beta^2.
    \numberthis \label{eq:fw:diam}
\end{align*}
By optimizing this bound with respect to $\gamma_k$, we arrive at two variants of the short step
\begin{equation*}
\gamma_k \duetop{\eqref{eq:fw:norm}}{=} \min \{1,\tfrac{\braket{\nabla f(x^k), x^k - \beta\lmo(\nabla f(x^k))}}{L\|x^k - \beta\lmo(\nabla f(x^k)\|^2}\}
\quad \text{or}\quad
\gamma_k \duetop{\eqref{eq:fw:diam}}{=} \min \{1,\tfrac{\braket{\nabla f(x^k), x^k - \beta\lmo(\nabla f(x^k))}}{4L\beta^2}\}
\end{equation*}
where the second variant is useful when the norm $\|\cdot\|$ is expensive to compute.
What is particularly noteworthy of these stepsize choices is that they lead to descent, i.e., $f(x^{k+1}) \leq f(x^k)$, by construction.
We extend these stepsize choices to the stochastic case with \Cref{alg:SCGSS}, where we propose a slightly different parameterization given by
\begin{equation*}
\eta_k = \min \{\rho, \tfrac{\braket{d^k, x^k - \beta\lmo(d^k)}}{\|x^k - \beta\lmo(d^k)\|^2}\}
\quad \text{or}\quad
\eta_k = \min \{\rho, \tfrac{\braket{d^k, x^k - \beta\lmo(d^k)}}{4\beta^2}\}.
\end{equation*}

A careful reader might have noticed the similarity between the short step in \Cref{alg:SCGSS} and gradient clipping in \Cref{alg:GGNC}. These schemes are indeed equivalent when $v^k$ is appropriately modified in \Cref{alg:SCGSS} to be $-\beta\lmo(d^k)$. This connection motivates our parameterization of the updates in \Cref{alg:SCGSS}, which are scaled by $\beta\gamma\eta_k$, so that the following holds
\begin{equation*}
\beta\gamma\eta_k
 = \beta\gamma\min \{\rho,\tfrac{ -\braket{d^k,\beta\lmo(d^k)}}{\|\beta\lmo(d^k)\|^2}\}
 = \beta\gamma\min \{\rho,\tfrac{\beta\|d^k\|_*}{\|\beta\lmo(d^k)\|^2}\}
 = \beta\gamma\min \{\rho,\tfrac{\beta\|d^k\|_*}{\beta^2}\}
 = \gamma\min \{\rho, \|d^k\|_*\}.
\end{equation*}
The modified Step~7 of \Cref{alg:SCGSS} then becomes
\begin{equation*}
x^{k+1} = x^k + \gamma\min \{\rho,\|d^k\|_*\} \lmo(d^k)
\end{equation*}
which is exactly what is used in \ref{eq:GGNC}.

\subsection{Norm choices}
\Cref{alg:GGNC} and \Cref{alg:SCGSS} crucially generalize beyond the Euclidean case of \ref{eq:ClipGD}.
The following section focuses on the unconstrained variant (\Cref{alg:GGNC}) for simplicity, but its constrained counterpart follows in a straightforward way through \Cref{alg:SCGSS}.
\vspace{-3mm}
\paragraph{Sign}
A simple non-Euclidean example is the $\ell_\infty$ vector norm for which \ref{eq:GGNC} reduces to a \emph{sign-based} update
\begin{equation}\label{eq:ClippedSign}
\tag{Clipped Sign}
x^{k+1} = x^k - \gamma \eta_k \sign(d^k)
\end{equation}
where $\eta_k := \min \{\rho, \|d^k\|_1\}$. 
The update is dense in the sense that each coordinate undergoes the same magnitude change.

\vspace{-3mm}
\paragraph{Spectral}
The matrix analog of the $\ell_\infty$ norm is the Schatten-$\infty$ matrix norm, a.k.a. the spectral norm, which induces the following update
\begin{equation}\label{eq:ClippedSpectral}
\tag{Clipped Spectral}
x^{k+1} = x^k - \gamma \eta_k U^k (V^k)^\top
\end{equation}
where the reduced singular value decomposition (SVD) is given as $d^k = U^k \diag(\sigma^k) (V^k)^\top$.
The dual norm can be computed given the $\lmo$ as $\|\sigma^k\|_1 = \|d^k\|_{\mathcal S_1}=-\braket{d^k,\lmo(d^k)}=-\tr(\lmo(d^k)^\top d^k)=-\operatorname{flatten}(\lmo(d^k))^\top \operatorname{flatten}(d^k)$, where $\|\cdot\|_{\mathcal S_1}$ is the Schatten-1 norm, a.k.a. the nuclear norm.
This scheme is a clipped variant of the stochastic spectral descent method \citep{carlson2015bstochastic,carlson2015stochastic}.

\vspace{-3mm}
\paragraph{Product norm}
The neural networks in deep learning consist of multiple layers and it will therefore be useful to consider what we will call a \emph{product norm}. 
Consider $x=(W_1, ..., W_D)$.
A norm of $x$ can be composed using norms on $\{W_l \}_{l \in [D]}$:
\begin{equation*}
\|x\| = \|(\tfrac{1}{r_1}\|W_1\|_{\mathcal{W}_1}, ..., \tfrac{1}{r_D}\|W_D\|_{\mathcal{W}_{D}})\|_{\mathcal{X}}
\end{equation*}
for radius parameters $r_l>0$.
Notable choices of $\|\cdot\|_{\mathcal{X}}$ include the $\ell_1$-norm \citep{flynn2017duality} and the $\ell_\infty$-norm choice made by the \emph{modular norm} \citep{large2024scalable}.
Interestingly, if $\|\cdot\|_{\mathcal{X}}$ is the max-norm, $\|\cdot\|_{\mathcal{X}}=\|\cdot\|_\infty$, then:
\begin{enumerate}[label=(\roman*)]
\item The $\lmo$s can be computed separately as $\lmo_\mathcal{X}(x)=\{r_1\lmo_{\mathcal{W}_1}(W_1), ..., r_D\lmo_{\mathcal{W}_D}(W_D)\}$
\item The dual norm requires summing over all $l$ elements, i.e., $\|x\|_* = \sum_{l=1}^D \tfrac{1}{r_l}\|W_l\|_{\mathcal{W}_{l,*}}$.
\end{enumerate}
As a particular example, consider the LARS optimizer \citep{you2017large}, which performs normalized SGD layer-wise.
The update rule can be written in terms of the $\lmo$-based scheme \ref{eq:uSCG} with the norm choice $\|x\|=\max_l \|W_l\|_F$.
Writing the analog sharp-operator based scheme (i.e., \ref{eq:SD}), we see that it does \emph{not} correspond to simply removing the normalization as for the $\ell_2$ norm.
Instead, using the relationship \eqref{eq:lmo:sharp}, we see that the correct form for the hybrid \ref{eq:GGNC} method is
\begin{equation*}
W_l^{k+1} = W_l^{k} - \textstyle \gamma \min\{\rho,\sum_i \|d_i^k\|_{F})\} \frac{d_l^k}{\|d_l^k\|_{F}} \quad \forall l \in [D]
\end{equation*}
where $d^k = \{d^k_1, ..., d^k_D\}$ and $\gamma>0$ is the stepsize.
Through this duality, we see that while the $\lmo$ only requires local information, the dual norm computation (and consequently also the sharp-operator in \ref{eq:SD}) requires global information.

In \Cref{alg:uClippedScion,alg:ClippedScion} of the appendix we specialize \Cref{alg:GGNC,alg:SCGSS} to the particular case where $\|\cdot\|_\mathcal{X}$ is the max-norm.
The resulting algorithms can be seen as clipped variants of the (unconstrained) Scion algorithm \citep{pethick2025training} so we refer to them as {\sc (unconstrained) ClippedScion}.

\vspace{-2mm}
\section{Analysis}
\vspace{-2mm}
Why might it be useful to consider a hybrid of \ref{eq:SD} and \ref{eq:uSCG}?
As we will see, the convergence properties of the two methods are complementary.

One can show for \ref{eq:SD} under $L$-smoothness that
\begin{equation*}
f(x^{k+1}) \leq f(x^k) - \gamma (1-\nicefrac{\gamma L}{2}) \|\nabla f(x^k)\|_*^2.
\end{equation*}
In other words, \ref{eq:SD} is a descent method in the sense that it decreases the function value $f(x^k)$ at every iteration.
The price we pay for this descent is that the stepsize needs to be taken sufficiently small, specifically as $\gamma < \nicefrac{2}{L}$.

On the other hand, under the same $L$-smoothness assumption, \ref{eq:uSCG} instead satisfies
\begin{equation*}
f(x^{k+1}) \leq f(x^k) - \gamma\rho\|\nabla f(x^k)\|_* + \tfrac{L\gamma^2\rho^2}{2}.
\end{equation*}
Notice that this is not a descent method, due to the positive contribution of $\tfrac{L\gamma^2\rho^2}{2}$.
However, there are no restrictions on the stepsize, and we can in fact show a fast rate of $O(\nicefrac{1}{k})$ for the norm of the gradient with a constant stepsize (as opposed to $O(\nicefrac{1}{\sqrt{k}})$ of \ref{eq:SD}), albeit only to a neighborhood whose radius is proportional to $\gamma\rho$, as we formalize in the following result.
\begin{proprep}
Suppose $f$ is $L$-smooth with respect to $\|\cdot\|_{\ast}$ and denote $f^\star = \inf_{x\in\mathcal{X}} f(x)$.
Then, the iterates $\{x^k\}_{k\in\mathbb{N}^*}$ of \ref{eq:uSCG} satisfy, for all $n\in\mathbb{N}^*$,
\begin{equation*}
\min_{1\leq k \leq n} \|\nabla f(x^k)\|_* \leq \tfrac{1}{n}\textstyle\sum_{k=1}^n\|\nabla f(x^k)\|_{\ast} \leq \tfrac{f(x^1)- f^\star}{\gamma\rho n} + \frac{L\gamma\rho}{2}.
\end{equation*}
\end{proprep}
\begin{appendixproof}
By the descent lemma for $L$-smooth functions applied at the points $x^k$ and $x^{k+1}$ and the definition of $x^{k+1}$ we have, for all $k\geq 1$,
\begin{align*}
f(x^{k+1}) 
    &\leq f(x^k) 
        - \gamma\rho \|\nabla f(x^k)\|_*
        + \tfrac{L}{2} \gamma^2\rho^2.
\end{align*}
Summing from $k=1$ to $k=n$, and dividing by $n$ gives
\begin{equation*}
    \textstyle\frac{1}{n}\sum_{k=1}^n\|\nabla f(x^k)\|_{\ast} \leq \tfrac{f(x^1)-f^\star}{\gamma\rho n} + \tfrac{L\gamma\rho}{2}.
\end{equation*}
Remarking that the minimum summand is smaller than the average completes the proof.
\end{appendixproof}
Recall that \ref{eq:GGNC} reduces to \ref{eq:uSCG} when the gradient norm is large, so we can expect in the early phase \ref{eq:GGNC} will converge rapidly to a neighborhood of size $\tfrac{L\gamma\rho}{2}$.
If the gradient norm is small in this region, then \ref{eq:GGNC} reduces to \ref{eq:SD}, which converges to an exact critical point even with constant stepsize and which can adapt to the loss landscape through the gradient norm.

We can make this intuition precise by analyzing these algorithms under the following generalization of $(L_0,L_1)$-smoothness to arbitrary norms.
\begin{assumption}\label{asm:Lip} The gradient $\nabla f$ is said to be ($L_0$,$L_1$)-smooth with $L_0,L_1 \in [0,\infty)$ if, for all $x,y \in \mathcal X$ with $\|x-y\|\leq \tfrac{1}{L_1}$, it holds
    \begin{equation}
    \|\nabla f(x) - \nabla f(y)\|_*
    \leq
    (L_0 + L_1 \|\nabla f(x)\|_*)\|x-y\|.
    \end{equation}
\end{assumption}

\subsection{Deterministic case}
\begin{toappendix}
\subsection{Deterministic case}
\end{toappendix}

We now proceed to generalizing \citet[Thm. 2.1]{koloskova2023revisiting} in the deterministic case.
The main argument relies on establishing that \ref{eq:GGNC} (\Cref{alg:GGNC}) is a descent method even under the generalized $(L_0,L_1)$-smoothness assumption, which enables the scheme to converge even for a fixed, horizon-independent stepsize $\gamma$. For the remainder of the paper, we will always denote $f^\star:=\inf_{x\in\mathcal{X}}f(x)$ (where it is understood this infimum is taken over $\beta\mathcal{D}$ for constrained problems) and $\Delta := f(x^1)-f^\star$.
\begin{toappendix}
    Recall the notation $\Delta:=f(x^1)-f^\star$.
\end{toappendix}
\begin{thmrep}\label{thm:det:GGNC}
    Suppose \Cref{asm:Lip} holds and let $n\in\mathbb{N}^*$.
    Consider $\{x^k\}_{1\leq k\leq n}$ generated by \ref{eq:GGNC} with $d^k=\nabla f(x^k)$, and $\gamma \leq \nicefrac{1}{(L_0+\rho L_1)}$.
    Then, the following holds
    \begin{equation*}
    \min_{1\le k\le n}\|\nabla f(x^k)\|_* \leq \sqrt{\tfrac{\Delta}{\gamma n}} + \tfrac{2\Delta}{\gamma \rho n}.
    \end{equation*}
    Specifically, with $\rho=\tfrac{L_0}{L_1}$ and $\gamma = \tfrac{1}{L_0}$, we have
    \begin{equation*}
    \min_{1\le k\le n}\|\nabla f(x^k)\|_* \leq \sqrt{\tfrac{L_0\Delta}n} + \tfrac{2L_1\Delta}{n}.
    \end{equation*}
\end{thmrep}
\begin{remark}
    Note that the condition $\|x^k-x^{k+1}\|\leq \nicefrac{1}{L_1}$ of \Cref{asm:Lip} required in the proof is always satisfied, since $\gamma \rho \leq \nicefrac{1}{L_1}$ holds for any $\rho$.
    We note that descent can also be established for \ref{eq:SD} with an adaptive stepsize $\gamma_k = \nicefrac{1}{L_0 + L_1 \|\nabla f(x^k)\|_*}$ (see e.g., \citet[C.2.2]{balles2020geometry}, which uses a definition of $(L_0,L_1)$-smoothness based on the Hessian).
\end{remark}
\begin{appendixproof}
    For each $1\leq k\leq n$, we can write the formula for $x^{k+1}$ as follows
    \begin{equation*}
    x^{k+1} = x^k - \gamma \tau_k [\nabla f(x^k)]^\sharp
    \end{equation*}
    with $\tau_k = \min \{1, \tfrac{\rho}{\|\nabla f(x^k)\|_*}\}$.
    
    From $(L_0,L_1)$-smoothness and properties of the sharp-operator $\braket{s,s^\sharp}=\|s^\sharp\|^2 =\|s\|_*^2$, we have
    \begin{equation*}
    \begin{aligned}
    f(x^{k+1}) &\le f(x^k) + \langle \nabla f(x^k), -\gamma \tau_k \nabla f(x^k) \rangle + \tfrac{L_0 + \|\nabla f(x^k)\|_* L_1}{2} (\gamma \tau_k \|\nabla f(x^k)\|_* )^2\\
    &= f(x^k) - \gamma \tau_k \|\nabla f(x^k)\|_*^2 + \tfrac{\gamma^2 \tau_k^2}{2} (L_0 + \|\nabla f(x^k)\|_* L_1) \|\nabla f(x^k)\|_*^2.
    \end{aligned}
    \end{equation*}
    A useful observation is that by definition of $\tau_k$ we have
    \begin{equation*}
    \tau_k \|\nabla f(x^k)\|_* \le \rho,
    \end{equation*}
    since if $\|\nabla f(x^k)\|_* > \rho$ then $\tau_k = \rho/\|\nabla f(x^k)\|_*$, and if $\|\nabla f(x^k)\|_* \le \rho$ then $\tau_k=1$. Thus we can upper-bound the term $\tau_k\|\nabla f(x^k)\|_* L_1$ by $\rho L_1$ in the quadratic part, yielding
    \begin{equation*}
    \begin{split}
    f(x^{k+1}) &\le f(x^k) - \gamma \tau_k \|\nabla f(x^k)\|_*^2 + \tfrac{\gamma^2 \tau_k}{2} (\tau_k L_0 + \rho L_1) \|\nabla f(x^k)\|_*^2 \\
    & \leq f(x^k) - \gamma \tau_k (1-\tfrac{\gamma}{2}(L_0 + \rho L_1)) \|\nabla f(x^k)\|_*^2 \\
    & \leq f(x^k) - \tfrac{\gamma \tau_k}{2} \|\nabla f(x^k)\|_*^2.
    \end{split}
    \end{equation*}
    where the middle inequality uses that $\tau^2 \le \tau$ since $\tau \le 1$ and the last inequality uses the stepsize choice $\gamma \le \tfrac{1}{L_0 + \rho L_1}$.
    
    There are now two cases to consider.
    
    \paragraph{Case I} 
    Clipping Active ($\|\nabla f(x^k)\|_* > \rho$).
    
    Here, we have $\tau_k = \frac{\rho}{\|\nabla f(x^k)\|_*}$ so
    \begin{equation*}
    \tau_k \|\nabla f(x^k)\|_*^2 = \rho \|\nabla f(x^k)\|_*.
    \end{equation*}
    Therefore, the descent inequality in this case reads
    \begin{equation*}
    f(x^{k+1}) \le f(x^k) - \tfrac{\rho\gamma}{2} \|\nabla f(x^k)\|_*.
    \end{equation*}
    
    \paragraph{Case II}
    No Clipping ($\|\nabla f(x^k)\|_* \le \rho$).
    
    In this regime, $\tau_k = 1$, so the inequality becomes
    \begin{equation*}
    f(x^{k+1}) \le f(x^k) -\tfrac{\gamma}{2} \|\nabla f(x^k)\|_*^2.
    \end{equation*}
    This is the familiar descent guaranty for the classical steepest descent method with smooth functions.
    
    By combining the two cases and summing over all $k=1$ until $n$ we obtain
    \begin{equation*}
        \textstyle \frac{\gamma}{2}\left(\rho\sum_{k\in \mathcal{A}}\|\nabla f(x^k)\|_{\ast} + \sum_{k\not\in\mathcal{A}}\|\nabla f(x^k)\|_{\ast}^2 \right)\leq f(x^1)-f^\star
    \end{equation*}
    where $\mathcal{A}$ is the set of indices where clipping is active (Case I). Since each sum is nonnegative, we can conclude that
    \begin{equation}\label{eq:splitsums}
        \textstyle\frac{1}{n}\sum_{k\not\in\mathcal{A}}\|\nabla f(x^k)\|_{\ast}^2 \leq \frac{2(f(x^1)-f^\star)}{\gamma n}\quad\mbox{and}\quad\frac{1}{n}\sum_{k\in \mathcal{A}}\|\nabla f(x^k)\|_{\ast} \leq \frac{2(f(x^1)-f^\star)}{\gamma\rho n}.
    \end{equation}

    Recall the following inequality for real numbers: for all $a,b>0$, $a^2 \geq 2ab - b^2$. Applying this with $a=\|\nabla f(x^k)\|_{\ast}$ and $b>0$ gives
    \begin{equation*}
        \textstyle\frac{1}{n}\sum_{k\not\in\mathcal{A}}\|\nabla f(x^k)\|_{\ast}^2 \geq \frac{1}{n}\sum_{k\not\in\mathcal{A}}(2b\|\nabla f(x^k)\|_{\ast} - b^2) = \frac{2b}{n}\left(\sum_{k\not\in\mathcal{A}}\|\nabla f(x^k)\|_{\ast}\right) -\frac{b^2(n-|\mathcal{A}|)}{n}.
    \end{equation*}
    Substituting this estimate into \eqref{eq:splitsums} and using the fact that $|\mathcal{A}|\leq n$ gives
    \begin{equation*}
        \textstyle\frac{2b}{n}\left(\sum_{k\not\in\mathcal{A}}\|\nabla f(x^k)\|_{\ast}\right) -\frac{b^2(n-|\mathcal{A}|)}{n} \leq \frac{2(f(x^1)-f^\star)}{\gamma n}\implies \frac{1}{n}\sum_{k\not\in\mathcal{A}}\|\nabla f(x^k)\|_{\ast}\leq \frac{1}{2}\left(\frac{f(x^1)-f^\star}{b\gamma n} + b\right).
    \end{equation*}
    Then, choosing $b=\sqrt{\frac{f(x^1)-f^\star}{\gamma n}}$ simplifies the above to
    \begin{equation*}
        \textstyle\tfrac{1}{n}\sum_{k\not\in\mathcal{A}}\|\nabla f(x^k)\|_{\ast} \leq \sqrt{\frac{f(x^1)-f^\star}{\gamma n}}.
    \end{equation*}
    Now, we can combine both cases to bound the sum over $1\leq k \leq n$ as
    \begin{equation*}
        \textstyle\tfrac{1}{n}\sum_{k=1}^n\|\nabla f(x^k)\|_{\ast} \leq \sqrt{\frac{f(x^1)-f^\star}{\gamma n}} + \frac{2(f(x^1)-f^\star)}{\gamma\rho n}
    \end{equation*}
    Taking the minimum of the summand over $1\leq k\leq n$ on the left hand side gives the final result.
\end{appendixproof}
In contrast with \ref{eq:GGNC}, \ref{eq:uSCG} is not a descent method and requires a diminishing stepsize to converge as suggested by the following theorem.
The \ref{eq:uSCG} method trades off the descent property with being agnostic to the Lipschitz constant $L_0$.
\begin{thmrep}\label{thm:det:uSCG}
    Suppose \Cref{asm:Lip} holds and let $n\in\mathbb{N}^*$.
    Consider $\{x^k\}_{1\leq k \leq n}$ generated by \ref{eq:uSCG} with $\gamma\rho < \nicefrac{1}{2L_1}$.
    Then, the following holds
    \begin{equation*}
        \min\limits_{1\leq k \leq n}\|\nabla f(x^k)\|_{\ast}\leq \tfrac{2\Delta}{\gamma \rho n} + 2L_0\gamma\rho.
    \end{equation*}
\end{thmrep}
\begin{appendixproof}
    We begin by invoking the descent lemma for $(L_0,L_1)$-smooth functions at the points $x^{k+1}$ and $x^k$, which is justified since $\|x^{k+1}-x^k\| = \gamma\rho\leq \frac{1}{2L_1}$. Then, applying the definition of $x^{k+1}$ we get, for all $1 \leq k \leq n$,
    \begin{equation*}
        \begin{aligned}
            f(x^{k+1})
                & \leq f(x^k) + \gamma \rho \langle \nabla f(x^k), v^k\rangle + \gamma^2\rho^2(L_0 + L_1\|\nabla f(x^k)\|_{\ast})\|v^k\|_{\ast}^2\\
                & = f(x^k) - \gamma \rho \|\nabla f(x^k)\|_{\ast} + \gamma^2\rho^2(L_0+L_1\|\nabla f(x^k)\|_{\ast})\\
                & = f(x^k) + L_0\gamma^2\rho^2 + (L_1\gamma\rho-1)\gamma\rho\|\nabla f(x^k)\|_{\ast}.
        \end{aligned}
    \end{equation*}
    Rearranging the above yields
    \begin{equation*}
        \begin{aligned}
            \frac{1}{2}\|\nabla f(x^k)\|_{\ast}
                &\leq (1-L_1\gamma\rho)\|\nabla f(x^k)\|_{\ast}
                \leq \tfrac{f(x^k) - f(x^{k+1})}{\gamma\rho} + L_0\gamma\rho
        \end{aligned}
    \end{equation*}
    where we have used the assumption that $\gamma\rho \leq \frac{1}{2L_1}$ in the first inequality above. The desired claim immediately follows.
\end{appendixproof}
\begin{remark}
    The assumption that $\gamma\rho \leq \nicefrac{1}{2L_1}$ can be relaxed to $\gamma\rho < \nicefrac{1}{L_1}$ while still ensuring convergence, modulo a different constant in the convergence rate.
\end{remark}
Let us now turn to the constrained case.
The following theorem establishes a convergence rate for \Cref{alg:SCGSS} in the deterministic setting, i.e., with $d^k=\nabla f(x^k)$. The convergence rate is established for a quantity called the Wolfe-gap,
\begin{equation*}
\max\limits_{u\in\beta\mathcal{D}}\langle \nabla f(x), x-u\rangle,    
\end{equation*}
which, when equal to $0$, certifies that $x$ is a critical point for the constrained problem.
It is the equivalent of the dual norm of the gradient but for constrained problems, since the gradient might not vanish at a critical point in the constrained setting.
The theorem also includes an assumption that $f$ is $L$-smooth rather than $(L_0,L_1)$-smooth.
Because the iterates of \Cref{alg:SCGSS} are guaranteed to never leave the compact set $\beta\mathcal{D}$, $L$-smoothness is implied by $(L_0,L_1)$-smoothness here.
\begin{thmrep}
    Suppose $f$ is $L$-smooth and let $n\in\mathbb{N}^*$. Consider $\{x^k\}_{1\leq k \leq n}$ generated by \Cref{alg:SCGSS} with $d^k=\nabla f(x^k)$, $\gamma\leq \frac{1}{L}$, and $\rho \leq L$ so that $\gamma\rho\leq 1$. Then, for all $u\in\beta\mathcal{D}$, the following holds
    \begin{equation*}
        \min\limits_{1\leq k \leq n}\langle\nabla f(x^k),x^k-u\rangle\leq 2\beta\sqrt{\tfrac{\Delta}{\gamma n}}+\tfrac{2\Delta}{\gamma\rho n}.
    \end{equation*}
\end{thmrep}
\begin{appendixproof}
    We start by applying the descent lemma for $L$-smooth functions at the points $x^{k+1}$ and $x^k$ to get, for all $1\leq k \leq n$,
    \begin{equation*}
        \begin{aligned}
            f(x^{k+1})
                & \leq f(x^k) - \gamma\eta_k\langle \nabla f(x^k), v^k\rangle + \frac{L}{2}\gamma^2\eta_k^2\|v^k\|^2.
        \end{aligned}
    \end{equation*}
    Now, we divide the analysis into two cases depending on whether or not clipping is active.
    \paragraph{Case I} Clipping active ($\frac{\langle \nabla f(x^k),v^k\rangle}{\|v^k\|^2}\geq \rho$; $\eta_k = \rho$).
    
    In this case, we can use the fact that $\langle \nabla f(x^k),v^k\rangle \geq \rho\|v^k\|^2$ to get
    \begin{equation*}
        \begin{aligned}
            f(x^{k+1})
                & \leq f(x^k) - \gamma\eta_k\langle \nabla f(x^k),v^k\rangle + \frac{L}{2}\gamma^2\eta_k^2\|v^k\|^2\\
                & = f(x^k) - \gamma\rho\langle \nabla f(x^k),v^k\rangle + \frac{L}{2}\gamma^2\rho^2\|v^k\|^2\\
                & \leq f(x^k) - \gamma\rho\langle \nabla f(x^k), v^k\rangle + \frac{L}{2}\gamma^2\rho\langle \nabla f(x^k),v^k\rangle\\
                & \leq f(x^k) - \frac{1}{2}\gamma\rho\langle \nabla f(x^k), v^k\rangle
        \end{aligned}
    \end{equation*}
    where the final inequality is due to the assumption that $\gamma\leq\frac{1}{L}$. Rearranging this gives
    \begin{equation*}
        \frac{\gamma\rho}{2}\langle \nabla f(x^k),v^k\rangle \leq f(x^k) - f(x^{k+1}).
    \end{equation*}
    \paragraph{Case II} No clipping ($\frac{\langle \nabla f(x^k),v^k\rangle}{\|v^k\|^2}\leq \rho$; $\eta_k = \frac{\langle \nabla f(x^k),v^k\rangle}{\|v^k\|^2}$).

    When clipping is not active, $\eta_k$ acts like a short step which gives
    \begin{equation*}
        \begin{aligned}
            f(x^{k+1})
                &\leq f(x^k) - \gamma\eta_k\langle \nabla f(x^k),v^k\rangle + \frac{L}{2}\gamma^2\eta_k^2\|v^k\|^2\\
                &\leq f(x^k) - \gamma \frac{\langle \nabla f(x^k), v^k\rangle^2}{\|v^k\|^2} + \frac{L}{2}\gamma^2\frac{\langle \nabla f(x^k),v^k\rangle^2}{\|v^k\|^2}\\
                & \leq f(x^k) - \gamma\frac{\langle \nabla f(x^k), v^k\rangle^2}{\|v^k\|^2}
        \end{aligned}
    \end{equation*}
    where the last inequality follows from the assumption that $\gamma \leq \frac{1}{L}$. Rearranging this gives
    \begin{equation*}
        \frac{\gamma}{4\beta^2} \langle \nabla f(x^k),v^k\rangle^2 \leq f(x^k)-f(x^{k+1}).
    \end{equation*}

    \paragraph{Combining both cases}
    
    Denoting $\mathcal{A}$ the set of indices where clipping is active and summing from $k=1$ to $n$ we find
    \begin{equation*}
        \sum\limits_{k\in\mathcal{A}}\frac{\gamma\rho}{2}\langle \nabla f(x^k),v^k\rangle + \sum\limits_{k\not\in\mathcal{A}}\frac{\gamma}{4\beta^2}\langle \nabla f(x^k),v^k\rangle^2 \leq f(x^1)-f^\star
    \end{equation*}
    which, since each summand is nonnegative, implies that
    \begin{equation}\label{eq:splitsumfw}
        \frac{1}{n}\sum\limits_{k\in\mathcal{A}} \langle \nabla f(x^k),v^k\rangle \leq \frac{2(f(x^1)-f^\star)}{\gamma\rho n}
    \quad\mbox{and}\quad
        \frac{1}{n}\sum\limits_{k\not\in\mathcal{A}} \langle \nabla f(x^k),v^k\rangle^2 \leq \frac{4\beta^2(f(x^1)-f^\star)}{\gamma n}.
    \end{equation}

    Recall the following inequality for real numbers: for all $a,b>0, a^2\geq 2ab - b^2$. Applying this with $a=\langle \nabla f(x^k),v^k\rangle$ and $b>0$ gives
    \begin{equation*}
        \frac{1}{n}\sum\limits_{k\not\in\mathcal{A}}\langle \nabla f(x^k),v^k\rangle^2 \geq \frac{1}{n}\sum\limits_{k\not\in\mathcal{A}}(2b\langle\nabla f(x^k),v^k\rangle-b^2) = \frac{2b}{n}\left(\sum\limits_{k\not\in\mathcal{A}}\langle\nabla f(x^k),v^k\rangle\right)-\frac{b^2(n-|\mathcal{A}|)}{n}.
    \end{equation*}

    Substituting this estimate into \eqref{eq:splitsumfw} and using the fact that $|\mathcal{A}|\leq n$ gives
    \begin{equation*}
        \frac{2b}{n}\left(\sum\limits_{k\not\in\mathcal{A}}\langle\nabla f(x^k),v^k\rangle\right) - \frac{b^2(n-|\mathcal{A}|)}{n}\leq \frac{4\beta^2(f(x^1)-f^\star)}{\gamma n}
    \end{equation*}
    which implies that
    \begin{equation*}
        \frac{1}{n}\sum\limits_{k\not\in\mathcal{A}}\langle\nabla f(x^k),v^k\rangle\leq \frac{1}{2}\left(\frac{4\beta^2(f(x^1)-f^\star)}{b\gamma n}+b\right).
    \end{equation*}

    Thus, choosing $b=\sqrt{\frac{4\beta^2(f(x^1)-f^\star)}{\gamma n}}$ simplifies the above to
    \begin{equation*}
        \frac{1}{n}\sum\limits_{k\not\in\mathcal{A}}\langle\nabla f(x^k),v^k\rangle \leq \sqrt{\frac{4\beta^2(f(x^1)-f^\star)}{\gamma n}}=2\beta\sqrt{\frac{f(x^1)-f^\star}{\gamma n}}
    \end{equation*}

    Now, we can combine both cases to bound the sum over $1\leq k \leq n$ as
    \begin{equation*}
        \frac{1}{n}\sum\limits_{k=1}^n\langle \nabla f(x^k),v^k\rangle \leq 2\beta\sqrt{\frac{f(x^1)-f^\star}{\gamma n}} + \frac{2(f(x^1)-f^\star)}{\gamma \rho n}
    \end{equation*}

    Finally, by lower bounding the left hand side by the minimal summand over $1\leq k \leq n$ and using the definition of $v^k$ we arrive, for all $u\in\beta\mathcal{D}$, at
    \begin{equation*}
        \min\limits_{1\leq k \leq n}\langle \nabla f(x^k),x^k-u\rangle \leq 2\left(\beta\frac{\sqrt{f(x^1)-f^\star}}{\sqrt{\gamma n}} + \frac{f(x^1)-f^\star}{\gamma\rho n}\right).
    \end{equation*}
\end{appendixproof}

\subsection{Stochastic case}
\begin{toappendix}
\subsection{Stochastic case}
\end{toappendix}
\begin{toappendix}
\subsubsection{Convergence Analysis of uSCG}
We now generalize the error control lemma \citet[Lem. 6]{mokhtari2020stochastic} to the $(L_0,L_1)$-smooth case and modify it for the clipped algorithm \Cref{alg:GGNC}.
\begin{lemma}[Linear recursive inequality for $\mathbb{E}\norm{\lambda^k}_2^2$ for GGNC]\label{lem:GGNC:error}
    Suppose \Cref{asm:Lip,asm:stoch} hold and let $n\in\mathbb{N}^*$.
    Consider the iterates $\{x^k\}_{1\leq k \leq n}$ generated by \Cref{alg:GGNC}.
    Then, for all $k\in\{1,\ldots,n
    \}$, it holds
    \begin{equation*}
        \mathbb{E}[\norm{\lambda^k}_2^2] \leq \left(1-\frac{\alpha_k}{2}\right)\mathbb{E}[\norm{\lambda^{k-1}}_2^2] 
        + \alpha_k^2\sigma^2 + \tfrac{4\gamma^2\zeta_*^2\rho^2L_0^2}{\alpha_k} 
        +\tfrac{4\gamma^2\zeta_*^2\rho^2L_1^2}{\alpha_k}\|\nabla f(x^k)\|_*^2,
    \end{equation*}
    where $\zeta_* := \max_{x\in\mathcal{X}}\frac{\norm{x}_2}{\norm{x}_{\ast}}$.
\end{lemma}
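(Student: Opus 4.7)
The plan is to exploit the recursive structure of the momentum estimator and the fact that, conditional on the past, the sampling noise has mean zero. Setting $\lambda^k := d^k - \nabla f(x^k)$ and substituting the momentum update $d^k = \alpha_k \nabla f(x^k,\xi_k) + (1-\alpha_k) d^{k-1}$, I add and subtract $\nabla f(x^{k-1})$ to obtain
\[
\lambda^k = \alpha_k\bigl[\nabla f(x^k,\xi_k) - \nabla f(x^k)\bigr] + (1-\alpha_k)\lambda^{k-1} + (1-\alpha_k)\bigl[\nabla f(x^{k-1}) - \nabla f(x^k)\bigr].
\]
Conditioning on $\mathcal F_{k-1}$ (which contains $x^k$), the first bracket is zero-mean, so its cross terms with the other two vanish and $\mathbb{E}_k\|\lambda^k\|_2^2$ splits into a variance piece bounded by $\alpha_k^2\sigma^2$ via \Cref{asm:stoch}, plus the deterministic term $(1-\alpha_k)^2\|\lambda^{k-1} + \nabla f(x^{k-1}) - \nabla f(x^k)\|_2^2$.

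Next, I would apply Young's inequality $\|a+b\|_2^2 \le (1+\beta)\|a\|_2^2 + (1+1/\beta)\|b\|_2^2$ with a weight such as $\beta = \alpha_k/(1-\alpha_k)$ chosen so that the coefficient in front of $\|\lambda^{k-1}\|_2^2$ contracts to at most $1-\alpha_k/2$, while the coefficient in front of the gradient difference is of order $1/\alpha_k$. To control this residual, I would pass from the Euclidean to the dual norm via $\|\nabla f(x^{k-1}) - \nabla f(x^k)\|_2 \le \zeta_* \|\nabla f(x^{k-1}) - \nabla f(x^k)\|_*$ (by definition of $\zeta_*$), and then invoke \Cref{asm:Lip}. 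The smoothness inequality applies because the clipping mechanism in \Cref{alg:GGNC} enforces $\|x^k-x^{k-1}\| \le \gamma\rho$ (since $\eta_{k-1} \le \rho$ and $\|\lmo(d^{k-1})\| \le 1$, equivalently $\tau_{k-1}\|d^{k-1}\|_* \le \rho$), and a stepsize restriction $\gamma\rho \le 1/L_1$ is assumed. This yields $\|\nabla f(x^{k-1}) - \nabla f(x^k)\|_* \le (L_0 + L_1\|\nabla f(x^k)\|_*)\gamma\rho$, and squaring together with $(a+b)^2 \le 2a^2 + 2b^2$ cleanly separates the $L_0^2$ and $L_1^2\|\nabla f(x^k)\|_*^2$ contributions. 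Taking total expectation delivers the claimed recursion, with the absolute constant $4$ arising from the combined Young's factor and the quadratic split.

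The hard part is this last step: $(L_0,L_1)$-smoothness is phrased in the dual norm $\|\cdot\|_*$, whereas the estimator error $\lambda^k$ is naturally measured in $\|\cdot\|_2$, so the analysis must bridge the two geometries through the constant $\zeta_*$, which has no analogue in the purely Euclidean setting where $\|\cdot\|_* = \|\cdot\|_2$ and $\zeta_* = 1$. A secondary subtlety is verifying that \Cref{asm:Lip} is legitimately applicable at every iterate, which hinges on the clipping-induced bound on the step length together with the stepsize restriction $\gamma\rho \le 1/L_1$; without this, the smoothness inequality could not be invoked at the pair $(x^{k-1}, x^k)$.
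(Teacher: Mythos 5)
Your proposal is correct and follows essentially the same route as the paper's proof: decompose $\lambda^k$ via the momentum recursion and an add-and-subtract of $\nabla f(x^{k-1})$, take conditional expectation to kill the noise cross terms and extract $\alpha_k^2\sigma^2$, apply Young's inequality to separate $\|\lambda^{k-1}\|_2^2$ from the gradient-difference term, transfer to $\|\cdot\|_*$ via $\zeta_*$, invoke the $(L_0,L_1)$-smoothness on the clipped step $\|x^k-x^{k-1}\|\le\gamma\rho$, and split $(L_0+L_1\|\nabla f(x^k)\|_*)^2\le 2L_0^2+2L_1^2\|\nabla f(x^k)\|_*^2$. The only cosmetic difference is that you apply Young's in the form $\|a+b\|^2\le(1+\beta)\|a\|^2+(1+1/\beta)\|b\|^2$ directly, whereas the paper first expands the square and applies Young's to the cross term; both give the same contraction factor $1-\alpha_k/2$ and order-$1/\alpha_k$ remainder, so the arguments are interchangeable.
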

\begin{proof}
    The proof is a straightforward adaptation of the arguments laid out in \citet[Lem. 6]{mokhtari2020stochastic}, which in fact do not depend on convexity nor on the choice of stepsize. Let $n\in\mathbb{N}^*$ and $k\in\{2,\ldots,n\}$, then
    \begin{equation*}
        \begin{aligned}
            \norm{\lambda^k}_2^2
                &= \norm{\nabla f(x^k) - d^{k}}_2^2\\
                &= \norm{\nabla f(x^k) - \alpha_k \nabla f(x^k,\xi_k) - (1-\alpha_k)d^{k-1}}_2^2\\
                &= \norm{\alpha_k\left(\nabla f(x^k) - \nabla f(x^k,\xi_k)\right) +(1-\alpha_k)\left(\nabla f(x^{k})-\nabla f(x^{k-1})\right) - (1-\alpha_k)\left(d^{k-1} - \nabla f(x^{k-1})\right)}_2^2\\
                &= \alpha_k^2\norm{\nabla f(x^k) - \nabla f(x^k,\xi_k)}_2^2 + (1-\alpha_k)^2\norm{\nabla f(x^k)-\nabla f(x^{k-1})}_2^2\\
                    &\quad\quad + (1-\alpha_k)^2\norm{\nabla f(x^{k-1})-d^{k-1}}_2^2\\
                    &\quad\quad +2\alpha_k(1-\alpha_k)\langle\nabla f(x^{k-1})-\nabla f(x^{k-1},\xi_{k-1}), \nabla f(x^k)-\nabla f(x^{k-1})\rangle\\
                    &\quad\quad +2\alpha_k(1-\alpha_k)\langle \nabla f(x^k)-\nabla f(x^k,\xi_k), \nabla f(x^{k-1})-d^{k-1}\rangle\\
                    &\quad\quad +2(1-\alpha_k)^2\langle \nabla f(x^k)-\nabla f(x^{k-1}),\nabla f(x^{k-1}) - d^{k-1}\rangle.
        \end{aligned}
    \end{equation*}
    Taking the expectation conditioned on the filtration $\mathcal{F}_k$ generated by the iterates until $k$, i.e., the sigma algebra generated by $\{x^1,\ldots,x^k\}$, which we denote using $\mathbb{E}_k[\cdot]$, and using the unbiased property in \Cref{asm:stoch}, we get,
    \begin{equation*}
        \begin{aligned}
            \mathbb{E}_k[\norm{\lambda^k}_2^2]
                &= \alpha_k^2\mathbb{E}_k[\norm{\nabla f(x^k)-\nabla f(x^k,\xi_k)}_2^2] + (1-\alpha_k)^2\norm{\nabla f(x^k)-\nabla f(x^{k-1})}_2^2\\
                    &\quad\quad + (1-\alpha_k)^2\norm{\lambda^{k-1}}_2^2 + 2(1-\alpha_k)^2\langle \nabla f(x^k)-\nabla f(x^{k-1}),\lambda^{k-1}\rangle.
        \end{aligned}
    \end{equation*}
    For brevity define $L_k := L_0 + L_1 \|\nabla f(x^k)\|_*$.
    From the above expression we can estimate,
    \begin{equation*}
    \begin{aligned}
    \mathbb{E}_k[\norm{\lambda^k}_2^2]
        &\stackrel{\text{(a)}}{\leq} 
            \alpha_k^2\sigma^2 
            + (1-\alpha_k)^2\norm{\nabla f(x^{k})-\nabla f(x^{k-1})}_2^2 
            + (1-\alpha_k)^2\norm{\lambda^{k-1}}_2^2 \\
            & \quad \quad + 2(1-\alpha_k)^2\langle \nabla f(x^k)-\nabla f(x^{k-1}),\lambda^{k-1}\rangle\\
        &\stackrel{\text{(b)}}{\leq} \alpha_k^2\sigma^2 + (1-\alpha_k)^2\norm{\nabla f(x^{k})-\nabla f(x^{k-1})}_2^2 + (1-\alpha_k)^2\norm{\lambda^{k-1}}_2^2\\
            &\quad\quad + (1-\alpha_k)^2\left(\tfrac{\alpha_k}{2}\norm{\nabla f(x^k)-\nabla f(x^{k-1})}_2^2+\tfrac{2}{\alpha_k}\norm{\lambda^{k-1}}_2^2\right)\\
        &\stackrel{\text{(c)}}{\leq} 
            \alpha_k^2\sigma^2 
            + (1-\alpha_k)^2\zeta_*^2\norm{\nabla f(x^{k})-\nabla f(x^{k-1})}^2 + (1-\alpha_k)^2\norm{\lambda^{k-1}}_2^2\\
            &\quad\quad + (1-\alpha_k)^2\left(
                \tfrac{\alpha_k}{2}\zeta_*^2\norm{\nabla f(x^k)-\nabla f(x^{k-1})}^2
                +\tfrac{2}{\alpha_k}\norm{\lambda^{k-1}}_2^2\right)\\
         &\stackrel{\text{(d)}}{\leq} 
            \alpha_k^2\sigma^2 
            + (1-\alpha_k)^2\zeta_*^2L_k^2\norm{x^k-x^{k-1}}^2 
            + (1-\alpha_k)^2\norm{\lambda^{k-1}}_2^2 \\
            & \quad \quad + (1-\alpha_k)^2\left(
                (\tfrac{\alpha_k}{2})\zeta_*L_k^2\norm{x^k-x^{k-1}}^2
                +\tfrac{2}{\alpha_k}\norm{\lambda^{k-1}}_2^2
            \right)\\
         &\stackrel{\text{(e)}}{\leq} 
            \alpha_k^2\sigma^2 
            + (1-\alpha_k)^2L_k^2\zeta_*^2\gamma^2 \eta_k^2 
            + (1-\alpha_k)^2\norm{\lambda^{k-1}}_2^2 
            + (1-\alpha_k)^2\left(
                (\tfrac{\alpha_k}{2})L_k^2\zeta_*^2\gamma^2 \eta_k^2
                +\tfrac{2}{\alpha_k}\norm{\lambda^{k-1}}_2^2
            \right)\\
         &\stackrel{\text{(f)}}{\leq} 
            \alpha_k^2\sigma^2 
                + (1+\tfrac{\alpha_k}{2})(1-\alpha_k)\zeta_*^2L_k^2\gamma^2 \eta_k^2 
                + (1+\tfrac{2}{\alpha_k})(1-\alpha_k)\norm{\lambda^{k-1}}_2^2 \\
         &\stackrel{\text{(g)}}{\leq} 
            \alpha_k^2\sigma^2 
                + 2(1+\tfrac{\alpha_k}{2})(1-\alpha_k)\zeta_*^2\gamma^2 \rho^2 (L_0^2 + L_1^2\|\nabla f(x^k)\|_*^2)
                + (1+\tfrac{2}{\alpha_k})(1-\alpha_k)\norm{\lambda^{k-1}}_2^2,
    \end{aligned}
    \end{equation*}
    using the bounded variance property from \Cref{asm:stoch} for (a), Young's inequality with parameter $\alpha_k/2>0$ for (b), $\zeta_* := \max_{x\in\mathcal{X}}\frac{\norm{x}_2}{\norm{x}_{\ast}}$ for (c), \Cref{asm:Lip} for (d), the definition of $x^{k}$ from \Cref{alg:GGNC} for (e), the fact that $1-\alpha_k < 1$ for (f), and $\eta_k\leq \rho$ and Young's inequality on $L_k^2$ for (g).
    To complete the proof, we note that, for all $k\in\{1,\ldots,n\}$, it holds
    \begin{equation*}
        (1+\tfrac{2}{\alpha_k})(1-\alpha_k)\leq \tfrac{2}{\alpha_k}\quad\text{and}\quad(1-\alpha_k)(1+\tfrac{\alpha_k}{2})\leq 1-\tfrac{\alpha_k}{2}
    \end{equation*}
    which, applied to the previous inequality and taking total expectations, yields
    \begin{equation*}
        \mathbb{E}[\norm{\lambda^k}_2^2] \leq \left(1-\frac{\alpha_k}{2}\right)\mathbb{E}[\norm{\lambda^{k-1}}_2^2] + \alpha_k^2\sigma^2 + \frac{4\gamma^2\zeta_*^2\rho^2}{\alpha_k}(L_0^2 + L_1^2 \|\nabla f(x^k)\|_*^2).
    \end{equation*}
\end{proof}

\begin{lemma}[Bound on $\mathbb{E}\|\lambda^k\|_2^2$ with horizon-dependent $\alpha$ for GGNC]
\label{lem:GGNC:errorbound_modified}
    Suppose \Cref{asm:Lip,asm:stoch} hold, $f$ is $L$-smooth, and let $n\in\mathbb{N}^*$.
    Consider the iterates $\{x^{k}\}_{1\leq k \leq n}$ generated by \Cref{alg:GGNC}
    with a stepsize $\gamma\rho$ satisfying
    \begin{equation}
        \gamma\rho <\tfrac{1}{2n^{3/4}L_1}.
    \end{equation}
    Moreover, consider a momentum $\alpha_{k}= \alpha = \frac{1}{\sqrt{n}}$ for all $k \in \{1, \ldots, n\}$. Then, for all $k\in\{1,\ldots,n\}$ the following holds
    \begin{equation}
        \mathbb{E}[\norm{\lambda^{k}}_{2}^{2}] \leq \tfrac{2(\sigma^{2}+ \nicefrac{\zeta_*^2L^2}{L_1^2})}{\sqrt{n}}.
    \end{equation}
\end{lemma}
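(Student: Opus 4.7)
The strategy is to specialize the recursion from the previous lemma to the $L$-smooth setting and then propagate it via a clean induction with the constant momentum. If one repeats the argument of the previous lemma with $\|\nabla f(x^k)-\nabla f(x^{k-1})\|_* \leq L\|x^k-x^{k-1}\|$ replacing the $(L_0,L_1)$-smooth estimate, the quantity $L_k^2 = (L_0 + L_1\|\nabla f(x^k)\|_*)^2$ is replaced by $L^2$ throughout, so the problematic $\|\nabla f(x^k)\|_*^2$ term (which would otherwise couple the recursion to the gradient norm) disappears. With $\alpha_k = \alpha = 1/\sqrt n$ held constant, this yields a clean geometric recursion of the form
\begin{equation*}
\mathbb{E}[\|\lambda^k\|_2^2] \;\leq\; (1-\alpha/2)\,\mathbb{E}[\|\lambda^{k-1}\|_2^2] + \alpha^2\sigma^2 + \tfrac{4\gamma^2\zeta_*^2\rho^2 L^2}{\alpha}.
\end{equation*}

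Next I would substitute the stepsize constraint $\gamma\rho < \tfrac{1}{2n^{3/4}L_1}$ into the noise terms. A direct calculation (noting $\gamma^2\rho^2 < \tfrac{1}{4 n^{3/2} L_1^2}$ and $1/\alpha = \sqrt n$) shows
\begin{equation*}
\alpha^2\sigma^2 + \tfrac{4\gamma^2\zeta_*^2\rho^2 L^2}{\alpha} \;<\; \tfrac{\sigma^2}{n} + \tfrac{\zeta_*^2 L^2}{n L_1^2} \;=:\; C,
\end{equation*}
and the target bound $B := 2(\sigma^2+\zeta_*^2 L^2/L_1^2)/\sqrt n$ is exactly $2C/\alpha$. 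The heart of the proof is then the algebraic identity $\alpha B/2 = C$, which makes the induction self-consistent: assuming $\mathbb{E}[\|\lambda^{k-1}\|_2^2]\leq B$,
\begin{equation*}
(1-\alpha/2)B + C \;=\; B - \alpha B/2 + C \;=\; B - C + C \;=\; B,
\end{equation*}
so the same bound propagates to step $k$. The particular choice $\gamma\rho = \Theta(n^{-3/4}/L_1)$ and $\alpha = \Theta(n^{-1/2})$ is precisely what is needed to equalize the per-step noise $C \sim 1/n$ against the contraction factor $\alpha \sim 1/\sqrt n$.

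The main technical wrinkle I foresee is the base case, since $\mathbb{E}[\|\lambda^1\|_2^2]$ depends on $\nabla f(x^1)$ and is not manifestly bounded by $B = O(1/\sqrt n)$. The standard fix, which I would adopt, is either to initialize with $\alpha_1 = 1$ so that $\mathbb{E}[\|\lambda^1\|_2^2]\leq\sigma^2$, or to unroll the recursion explicitly and observe that the transient $(1-\alpha/2)^{k-1}\mathbb{E}[\|\lambda^1\|_2^2]$ is damped geometrically and contributes only lower-order terms; either way the dominant contribution is the steady-state value $2C/\alpha$, and the stated constants match.
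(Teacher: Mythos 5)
Your proof follows essentially the same route as the paper: specialize the recursion lemma to $L_0=L$, $L_1=0$, substitute $\alpha=1/\sqrt n$, use the stepsize constraint to reduce the noise term to $C=\frac{1}{n}(\sigma^2+\zeta_*^2L^2/L_1^2)$, and observe that the fixed point of the contraction $u\mapsto(1-\alpha/2)u+C$ is $2C/\alpha=B$. You phrase this as an induction; the paper unrolls the geometric series. These are algebraically identical.

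The base-case concern you flag, however, is not merely a technicality---it exposes a genuine gap that the paper glosses over. The paper's unrolling writes $u_k\leq(1-a)^ku_0+b\sum_{i=0}^{k-1}(1-a)^i$ and then silently drops the first term, which requires $u_0=\mathbb{E}[\|\lambda^0\|_2^2]=0$. But $\lambda^0=d^0-\nabla f(x^0)$ is undefined (the algorithm has no $x^0$), and the recursion in \Cref{lem:GGNC:error} is itself only derived for $k\geq 2$. Since the algorithm initializes $d^0=0$, the actual first error is $\lambda^1=\alpha_1\nabla f(x^1,\xi_1)-\nabla f(x^1)$, so with $\alpha_1=1/\sqrt n$ one gets $\mathbb{E}[\|\lambda^1\|_2^2]=\Theta(\|\nabla f(x^1)\|_2^2)$, which is $\Theta(1)$ and not $O(1/\sqrt n)$; the induction cannot start. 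Your first proposed fix ($\alpha_1=1$, giving $\mathbb{E}[\|\lambda^1\|_2^2]\leq\sigma^2$) still does not rescue the stated \emph{per-iterate} bound: the transient $(1-\alpha/2)^{k-1}\sigma^2$ is only below $B$ once $k\gtrsim\sqrt n\log(\sigma^2\sqrt n)$, so $u_k\leq B$ fails for early $k$. Your second proposal is the one that actually works for the downstream result: unrolling with $u_1\leq\sigma^2$ gives $\frac{1}{n}\sum_{k=1}^n u_k\leq\frac{2u_1}{n\alpha}+\frac{2C}{\alpha}\leq\frac{2\sigma^2}{\sqrt n}+B=O(1/\sqrt n)$, and this averaged bound (combined with Jensen for the $\sqrt{u_k}$ terms in $\epsilon_n$) suffices to prove \Cref{thm:GGNC}. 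So the lemma statement should be weakened to a bound on the time-average, or the initialization $\alpha_1=1$ should be assumed and the constant enlarged to absorb the transient; either way you have identified the correct repair.
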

\begin{proof}
    Let $k\in\{1,\ldots,n\}$. We start from the recursive inequality obtained in \Cref{lem:GGNC:error} for $\mathbb{E}[\norm{\lambda^k}_2^2]$. Since we are now assuming that $f$ is $L$-smooth, this inequality is satisfied with $L_0=L$ and $L_1=0$, which gives
    \begin{equation}\label{eq:recursion_start_modified}
    \mathbb{E}[\norm{\lambda^k}_2^2] \leq \left(1-\tfrac{\alpha}{2}\right)\mathbb{E}[\norm{\lambda^{k-1}}_2^2] 
        + \alpha^2\sigma^2 + \tfrac{4\gamma^2\rho^2\zeta_*^2L^2}{\alpha}.
    \end{equation}
    Now, we substitute the specific choice $\alpha = \tfrac{1}{\sqrt{n}}$ of momentum to find
    \begin{equation}
         \mathbb{E}[\norm{\lambda^k}^{2}_{2}]\leq \big(1-\tfrac{1}{2\sqrt{n}}\big)\mathbb{E}[\norm{\lambda^{k-1}}^{2}_{2}]+\frac{\sigma^{2}}{n}+ 4\zeta_*^2L^2\rho^2\gamma^2\sqrt{n}.
    \end{equation}
    Using the particular choice of $\gamma\rho$, we have
    \begin{align*}
        \mathbb{E}[\norm{\lambda^k}_2^{2}]
            &\leq \big(1-\tfrac{1}{2\sqrt{n}} \big)\mathbb{E}[\norm{\lambda^{k-1}}_2^{2}]+\tfrac{1}{n}(\sigma^{2}+ \tfrac{\zeta_*^2L^2}{L_1^2}).
    \end{align*}
    Let $u_k = \mathbb{E}[\norm{\lambda^k}_2^2]$, $a = \tfrac{1}{2\sqrt{n}}$, and $b = \tfrac{1}{n}(\sigma^{2}+ \tfrac{\zeta_*^2L^2}{L_1^2})$. Unrolling the recurrence relation $u_k \leq (1-a)u_{k-1} + b$, we have
    \begin{align*}
    u_k &\leq (1-a)^k u_0 + b \textstyle\sum_{i=0}^{k-1} (1-a)^i 
        = b \textstyle\sum_{i=0}^{k-1} (1-a)^i 
        = b \frac{1 - (1-a)^k}{1 - (1-a)} 
        = b \frac{1 - (1-a)^k}{a}.
    \end{align*}
    Since $0 < a < 1$, we have $0 < (1-a)^k < 1$ for $k \ge 1$. Thus, $1 - (1-a)^k < 1$.
    Therefore,
    \begin{equation}
        u_k \leq \nicefrac{b}{a}.
    \end{equation}
    Substituting the values for $a$ and $b$, we have
    \begin{equation}
        \mathbb{E}[\norm{\lambda^{k}}_{2}^{2}] \leq \tfrac{2(\sigma^{2}+ \nicefrac{\zeta_*^2L^2}{L_1^2})}{\sqrt{n}}.
    \end{equation}
    This concludes the proof.
\end{proof}
\end{toappendix}

We consider the following standard assumption about the bias and variance of the stochastic oracle.
\begin{assumption}\label{asm:stoch}
For the stochastic gradient estimator $\nabla f(\cdot,\xi):\mathcal X\rightarrow \R^d$ the following holds.
    \begin{assnum}
        \item \label{asm:stoch:unbiased}
            Unbiased:
            \(%
                \mathbb{E}_{\xi}\left[\nabla f(x,\xi)\right] = \nabla f(x) \quad \forall x \in \mathcal X
            \).%
        \item  \label{asm:stoch:var}
            Bounded variance:
            \(%
                \mathbb{E}_{\xi}\left[\|\nabla f(x,\xi)-\nabla f(x)\|_2^2\right] \leq \sigma^2  \quad \forall x \in \mathcal X,\sigma\geq 0.
            \)%
    \end{assnum}
\end{assumption}
In order to establish convergence in what follows, an important quantity to introduce is the error produced by the stochastic estimator $d^k$, which we denote by $\lambda^k:=d^k - \nabla f(x^k)$.

We establish the following order optimal convergence guarantee for \ref{eq:GGNC} under $(L_0,L_1)$-smoothness using a momentum-based estimator. These convergence results for clipping with momentum appear to be new, even in the Euclidean case.
\begin{thmrep}\label{thm:GGNC}
Suppose \Cref{asm:Lip,asm:stoch} hold and let $n\in\mathbb{N}^*$. Consider the iterates $\{x^k\}_{1\leq k \leq n}$ generated by \Cref{alg:GGNC} with a constant stepsize $\gamma \leq \nicefrac{1}{L_0}$ and $\gamma\rho \leq \nicefrac{1}{2L_1}$. Then, 
\begin{align*}
\mathbb{E}[\|\nabla f(\bar x^n)\|_*]
\leq \tfrac{4\sqrt{\Delta}}{\sqrt{\gamma n}}
    +\tfrac{8\Delta}{\gamma\rho n}
    +4\sqrt{\epsilon_n}
    +\tfrac{8\epsilon_n}{\rho}
\end{align*}
where $\Delta := f(x^1) - f^\star$
        and $\epsilon_n := \tfrac{1}{n}\textstyle\sum_{k=1}^n 
            O(\sqrt{\mathbb{E}[\|\lambda^k\|_2^2]}+ \mathbb{E}[\|\lambda^k\|_2^2])$.
        
        \noindent Furthermore, assuming $f$ is $L$-smooth\footnote{Only local Lipschitz is needed in the sense that the condition only needs to hold for $(x,y)$ satisfying $\|x-y\|\leq \nicefrac{1}{L_1}$.
        It is also possible to replace $L$ with $L_n := \max_{k\leq n} L_0 + L_1\|f(x^k)\|_*$, e.g., as is done in \citep[Thm. 2.3]{koloskova2023revisiting}.}
        and taking $\alpha= \nicefrac{1}{\sqrt{n}}$, $\gamma = \nicefrac{1}{\sqrt{n}L_0}$ and $\rho = \nicefrac{L_0}{2n^{1/4}L_1}$ such that $\gamma\rho = \nicefrac{1}{2n^{3/4}L_1}$ we have that
        \begin{equation*}
        \mathbb{E}[\|\nabla f(\bar x^n)\|_*] \leq O\big(\tfrac{1}{n^{1/4}}\big).
        \end{equation*}
\end{thmrep}
\begin{remark}
For ease of exposition, the guarantee is presented with horizon-dependent parameter choices, but the result can be extended to an \emph{any time} guarantee in a straightforward manner by choosing the parameters as a function of $k$ instead of $n$ and modifying the proofs accordingly.
\end{remark}
\begin{appendixproof}
Given that $\|x^k - x^{k+1}\|\leq \nicefrac{1}{L_1}$, which we will ensure by choice of the stepsize $\gamma\eta_k$ and radius $\rho$, we have from \Cref{asm:Lip} that
\begin{align*}
0 
    &\leq f(x^k) - f(x^{k+1}) + \langle \nabla f(x^k), x^{k+1} - x^k \rangle + \tfrac{L_0+L_1\|\nabla f(x^k)\|_*}{2} \|x^{k+1} - x^k\|^2 \\
    &= f(x^k) - f(x^{k+1}) + \gamma \eta_k \langle \nabla f(x^k), \lmo(d^k) \rangle + \tfrac{L_0}{2} \gamma^2 \eta_k^2 \|\lmo(d^k)\|^2 + \tfrac{L_1}{2} \gamma^2 \eta_k^2 \|\nabla f(x^k)\|_*\|\lmo(d^k)\|^2 \\
    &= f(x^k) - f(x^{k+1}) + \gamma \eta_k \langle \nabla f(x^k), \lmo(d^k) \rangle + \tfrac{L_0}{2} \gamma^2 \eta_k^2 + \tfrac{L_1}{2} \gamma^2 \eta_k^2 \|\nabla f(x^k)\|_*
\end{align*}
where we recall that $\eta_k=\min\{\rho, \|d^k\|_*\}$.

To treat the inner product we introduce the error $\lambda^k := d^k - \nabla f(x^k)$ and proceed as follows
\begin{align*}
\langle \nabla f(x^k), \lmo(d^k) \rangle
    &= \langle \nabla f(x^k) - d^k, \lmo(d^k) \rangle + \langle d^k, \lmo(d^k) \rangle \\
    &\leq \langle \nabla f(x^k) - d^k, \lmo(d^k) \rangle - \|d^k\|_* \\
    &= \langle \nabla f(x^k) - d^k, \lmo(d^k) \rangle - \tfrac{1}{2}\|d^k\|_* - \tfrac{1}{2}\|d^k\|_* \\
    \dueto{(Triangle ineq.)}&\leq \langle \nabla f(x^k) - d^k, \lmo(d^k) \rangle - \tfrac{1}{2}\|d^k\|_* - \tfrac{1}{2}\|\nabla f(x^k)\|_* + \tfrac{1}{2}\|\lambda^k\|_* \\
    \dueto{(Cauchy-Schwarz)}&\leq \|\lambda^k\|_* - \tfrac{1}{2}\|d^k\|_* - \tfrac{1}{2}\|\nabla f(x^k)\|_* + \tfrac{1}{2}\|\lambda^k\|_* \\
    &\leq \tfrac{3}{2}\zeta\|\lambda^k\|_2 - \tfrac{1}{2}\|d^k\|_* - \tfrac{1}{2}\|\nabla f(x^k)\|_*
\end{align*}
where $\zeta := \max_{x\in\mathcal{X}}\frac{\norm{x}_{\ast}}{\norm{x}_2}$ is the norm equivalence constant.

Combining the two inequalities we have
\begin{align*}
0 
    &\leq f(x^k) - f(x^{k+1})
        + \gamma \eta_k \tfrac{3}{2}\zeta\|\lambda^k\|_2
        - \gamma \eta_k \tfrac{1}{2}\|d^k\|_*
        - \gamma \eta_k \tfrac{1}{2}\|\nabla f(x^k)\|_*
        + \tfrac{L_0}{2} \gamma^2 \eta_k^2 + \tfrac{L_1}{2} \gamma^2 \eta_k^2 \|\nabla f(x^k)\|_* \\
    &= f(x^k) - f(x^{k+1})
        + \gamma \eta_k \tfrac{3}{2}\zeta\|\lambda^k\|_2
        - \gamma \eta_k \tfrac{1}{2}(\|d^k\|_* - L_0\gamma \eta_k)
        - \gamma \eta_k \tfrac{1}{2}(1 - L_1\gamma \eta_k)\|\nabla f(x^k)\|_*
\end{align*}

\paragraph{Case I} Clipping Active ($\rho < \|d^k\|_*$).

In this case we have that $\eta_k=\rho$, so
\begin{align*}
0
    &\leq f(x^k) - f(x^{k+1})
        + \gamma_k \rho \tfrac{3}{2}\zeta\|\lambda^k\|_2
        - \gamma_k \eta_k^2 \tfrac{1}{2}(1 - L_0\gamma_k)
        - \gamma_k \rho \tfrac{1}{2}(1 - L_1\gamma_k\rho)\|\nabla f(x^k)\|_* \\
    &\leq f(x^k) - f(x^{k+1})
        + \gamma_k \rho \tfrac{3}{2}\zeta\|\lambda^k\|_2
        - \gamma_k \rho \tfrac{1}{2}(1 - L_1\gamma_k\rho)\|\nabla f(x^k)\|_*
\end{align*}
where we have used $\gamma_k \leq \tfrac{1}{L_0}$.

\paragraph{Case II} No Clipping ($\rho \ge \|d^k\|_*$).

Here we have that $\eta_k=\|d^k\|_*$, so
\begin{align*}
0
    &\leq f(x^k) - f(x^{k+1})
        + \gamma_k \rho \tfrac{3}{2}\zeta\|\lambda^k\|_2
        - \gamma_k \eta_k^2 \tfrac{1}{2}(1 - L_0\gamma_k)
        - \gamma_k \tfrac{1}{2}(1 - L_1\gamma_k \rho)\|d^k\|_*\|\nabla f(x^k)\|_*
\end{align*}

Focusing on the last term, we have
\begin{align*}
\|d^k\|_*\|\nabla f(x^k)\|_* 
    &= \|d^k - \nabla f(x^k) + \nabla f(x^k)\|_*\|\nabla f(x^k)\|_* \\
    \dueto{(Triangle ineq.)} &\ge (\|\nabla f(x^k)\|_* - \|\lambda^k\|_*)\|\nabla f(x^k)\|_* \\
    &= \|\nabla f(x^k)\|_*^2 - \|\lambda^k\|_*\|\nabla f(x^k)\|_*.
\end{align*}
For the last term, using the triangle inequality, we have
\begin{align*}
\|\lambda^k\|_*\|\nabla f(x^k)\|_*
 &\leq \|\lambda^k\|_*(\|\nabla f(x^k) - d^k\|_* + \|d^k\|_*) \\
 &= \|\lambda^k\|_*^2 + \|\lambda^k\|_*\|d^k\|_* \\
 &\leq \|\lambda^k\|_*^2 + \|\lambda^k\|_*\rho \\
 &\leq \zeta\|\lambda^k\|_2^2 + \zeta\|\lambda^k\|_2\rho.
\end{align*}
By combining, we have
\begin{align*}
0
    &\leq f(x^k) - f(x^{k+1})
        - \gamma_k \eta_k^2 \tfrac{1}{2}(1 - L_0\gamma_k)
        - \gamma_k \tfrac{1}{2}(1 - L_1\gamma_k \rho)\|\nabla f(x^k)\|_*^2 \\
    &\qquad
        + \gamma_k \rho \tfrac{3}{2}\zeta\|\lambda^k\|_2
        + \gamma_k \tfrac{1}{2}\zeta(1 - L_1\gamma_k \rho)(\|\lambda^k\|_2^2 + \|\lambda^k\|_2\rho) \\
    &= f(x^k) - f(x^{k+1})
        - \gamma_k \eta_k^2 \tfrac{1}{2}(1 - L_0\gamma_k)
        - \gamma_k \tfrac{1}{2}(1 - L_1\gamma_k \rho)\|\nabla f(x^k)\|_*^2 \\
    &\qquad
        + \gamma_k \rho \zeta(2 - \tfrac{1}{2}L_1\gamma_k \rho)\|\lambda^k\|_2
        + \gamma_k \tfrac{1}{2}\zeta(1 - L_1\gamma_k \rho)\|\lambda^k\|_2^2 \\
    &\leq f(x^k) - f(x^{k+1})
        - \gamma_k \tfrac{1}{2}(1 - L_1\gamma_k \rho)\|\nabla f(x^k)\|_*^2 \\
    &\qquad
        + \gamma_k \rho \zeta(2 - \tfrac{1}{2}L_1\gamma_k \rho)\|\lambda^k\|_2
        + \gamma_k \tfrac{1}{2}\zeta(1 - L_1\gamma_k \rho)\|\lambda^k\|_2^2
\end{align*}
where the last inequality uses $\gamma_k \leq \tfrac{1}{L_0}$.

\paragraph{Combining both cases}

Introducing the set of iterates where clipping is active, $\mathcal A := \{k \in [n] \mid \rho < \|d^k\|_*\}$, we can take the expectation of both sides and sum the two cases to find
\begin{align*}
&\textstyle 
\tfrac{1}{2}\gamma(1 - L_1\gamma\rho)(
    \rho\sum_{k \in \mathcal A} \mathbb{E}[\|\nabla f(x^k)\|_*]
    + \sum_{k \not\in \mathcal A} \mathbb{E}[\|\nabla f(x^k)\|_*^2]) \\
&\leq 
    f(x^1) - f^\star
    +\textstyle\sum_{k \in \mathcal A}
        \gamma \rho \zeta\tfrac{3}{2}\mathbb{E}[\|\lambda^k\|_2]
    +\textstyle\sum_{k \not\in \mathcal A} 
        \gamma \rho \zeta(2 - \tfrac{1}{2}L_1\gamma \rho)\mathbb{E}[\|\lambda^k\|_2]
        + \gamma \tfrac{1}{2}\zeta(1 - L_1\gamma \rho)\mathbb{E}[\|\lambda^k\|_2^2] \\
& \leq 
    f(x^1) - f^\star
    +\textstyle\sum_{k \in \mathcal A}
        \gamma \rho \zeta\tfrac{3}{2}\sqrt{\mathbb E[\|\lambda^k\|_2^2]}
    +\textstyle\sum_{k \not\in \mathcal A} 
        \gamma \rho \zeta(2 - \tfrac{1}{2}L_1\gamma \rho) \sqrt{\mathbb E[\|\lambda^k\|_2^2]}
        + \gamma \tfrac{1}{2}\zeta(1 - L_1\gamma \rho)\mathbb E[\|\lambda^k\|_2^2] \\
& \leq 
    f(x^1) - f^\star
    +\textstyle\sum_{k=1}^n 
        \gamma \rho \zeta(2 - \tfrac{1}{2}L_1\gamma \rho) \sqrt{\mathbb E[\|\lambda^k\|_2^2]}
        + \gamma \tfrac{1}{2}\zeta(1 - L_1\gamma \rho)\mathbb E[\|\lambda^k\|_2^2]
\end{align*}
where the second to last inequality is due to Jensen's inequality and the last inequality uses that $\gamma \leq \nicefrac{1}{\rho L_1}$.
Using the stronger requirement that $\gamma \leq \nicefrac{1}{2\rho L_1}$ it follows that
\begin{align*}
\textstyle 
\tfrac{1}{4}\gamma\left(
    \rho\sum_{k \in \mathcal A} \mathbb E[\|\nabla f(x^k)\|_*]
    + \sum_{k \not\in \mathcal A} \mathbb E[\|\nabla f(x^k)\|_*^2]\right) 
\leq 
    \Delta
    +\gamma\epsilon_n
\end{align*}
with $\Delta := f(x^1) - f^\star$ and $\epsilon_n := \tfrac{1}{n}\textstyle\sum_{k=1}^n 
         \rho \zeta\tfrac{7}{4} \sqrt{\mathbb E[\|\lambda^k\|_2^2]}
        + \tfrac{1}{4}\zeta\mathbb E[\|\lambda^k\|_2^2]$.
By nonnegativity of the summands, it follows that
\begin{align}\label{eq:Lsmooth:caseI}
\textstyle 
\tfrac{1}{n}\sum_{k \in \mathcal A} \mathbb E[\|\nabla f(x^k)\|_*] 
\leq 
    \tfrac{4\Delta}{\gamma\rho n}
    +\tfrac{4\epsilon_n}{\rho},
\end{align}
corresponding to the indices from Case I. Using Jensen's inequality, we similarly have
\begin{align}\label{eq:Lsmooth:caseII:intermediate}
\textstyle 
\tfrac{1}{n}\sum_{k \not\in \mathcal A} \mathbb E[\|\nabla f(x^k)\|_*]^2\leq \tfrac{1}{n}\sum_{k \not\in \mathcal A} \mathbb E[\|\nabla f(x^k)\|_*^2] 
\leq 
    \tfrac{4\Delta}{\gamma n}
    +4\epsilon_n =: A
\end{align}
corresponding to the indices from Case II.
We will now use that $2a z - a^2 \leq z^2$ for any $a,z>0$.
Pick $z=\mathbb E[\|\nabla f(x^k)\|_*]$, then we have that
\begin{equation*}
\textstyle \tfrac{1}{n}\sum_{k \not\in \mathcal A} z 
\leq \tfrac{1}{n}\sum_{k \not\in \mathcal A} \tfrac{z^2}{2 a} + \tfrac{a}{2} 
\duetop{\eqref{eq:Lsmooth:caseII:intermediate}}{\leq}
    \tfrac{A}{2 a} + \tfrac{1}{n}\sum_{k \not\in \mathcal A} \tfrac{a}{2}
\leq \tfrac{A}{2 a} + \tfrac{a}{2}
\end{equation*}
Choosing $a = \sqrt{A}$ and using the triangle inequality we have
\begin{equation}\label{eq:Lsmooth:caseII}
\textstyle \tfrac{1}{n}\sum_{k \not\in \mathcal A}\mathbb E[\|\nabla f(x^k)\|_*]
\leq \sqrt{A} 
\leq \tfrac{2\sqrt{\Delta}}{\sqrt{n}}
    +2\sqrt{\epsilon_n}.
\end{equation}
Summing the two cases, \eqref{eq:Lsmooth:caseI} and \eqref{eq:Lsmooth:caseII}, we have
\begin{equation*}\label{eq:GGNC:rate:final}
\textstyle \tfrac{1}{n}\sum_{k=1}^n\mathbb E[\|\nabla f(x^k)\|_*]
\leq \tfrac{4\sqrt{\Delta}}{\sqrt{\gamma n}}
    +\tfrac{8\Delta}{\gamma\rho n}
    +4\sqrt{\epsilon_n}
    +\tfrac{8\epsilon_n}{\rho}.
\end{equation*}
What remains is to bound the error $\epsilon_n$ that is due to the stochastic estimator.
With the choice $\gamma \leq \nicefrac{1}{\sqrt{n}L_0}$, $\gamma\rho \leq \nicefrac{1}{2n^{3/4}L_1}$ and $\alpha_k=\nicefrac{1}{\sqrt{n}}$, invoke \Cref{lem:GGNC:errorbound_modified} from which we have
\begin{equation*}
\mathbb{E}[\norm{\lambda^{k}}_{2}^{2}] \leq \tfrac{2(\sigma^{2}+ \nicefrac{\zeta_*^2L^2}{L_1^2})}{\sqrt{n}}=:B.
\end{equation*}
It follows that 
\begin{equation*}
\epsilon_n \leq  \rho \zeta\tfrac{7}{4} \sqrt{B} + \tfrac{1}{4}\zeta B
\end{equation*}
and in turn the inequality \ref{eq:GGNC:rate:final} simplifies
\begin{align*}
\textstyle \tfrac{1}{n}\sum_{k=1}^n\mathbb E[\|\nabla f(x^k)\|_*]
& \leq
    O(
        \tfrac{\sqrt{\Delta}}{\sqrt{\gamma n}}
        +\tfrac{\Delta}{\gamma\rho n}
        +\sqrt{\rho \zeta\sqrt{B} + \zeta B}
        +\tfrac{\rho \zeta\sqrt{B} + \zeta B}{\rho}
    ) \\
& \leq
    O(
        \tfrac{\sqrt{\Delta}}{\sqrt{\gamma n}}
        +\tfrac{\Delta}{\gamma\rho n}
        +\sqrt{\rho \zeta}B^{1/4} + \sqrt{\zeta B}
        +\zeta\sqrt{B} + \tfrac{\zeta B}{\rho}
    )
\end{align*}
where we have used the triangle inequality in the second inequality.
Letting $b:=2(\sigma^{2}+ \nicefrac{\zeta_*^2L^2}{L_1^2})$ we have 
with the choice $\gamma = \nicefrac{1}{\sqrt{n}L_0}$ and $\rho = \nicefrac{L_0}{2n^{1/4}L_1}$
that
\begin{align*}
\textstyle \tfrac{1}{n}\sum_{k=1}^n\mathbb E[\|\nabla f(x^k)\|_*] 
&\leq O\Big( \tfrac{1}{n^{1/4}} \big( 
    \sqrt{\Delta L_0} + \Delta L_1 
    + \tfrac{\sqrt{\zeta L_0} b^{1/4}}{\sqrt{L_1}} 
    + \sqrt{\zeta b} 
    + \zeta \sqrt{b} 
    + \tfrac{\zeta b L_1}{L_0} \big) \Big) \\
&\leq O\Big( \tfrac{1}{n^{1/4}} \big(
\sqrt{\Delta L_0}
    + \Delta L_1
    + \tfrac{\sqrt{\zeta L_0} (\sigma^{2}+ \nicefrac{\zeta_*^2L^2}{L_1^2})^{1/4}}{\sqrt{L_1}} \\
    & \qquad + (\zeta + \sqrt{\zeta})\sqrt{(\sigma^{2}+ \nicefrac{\zeta_*^2L^2}{L_1^2})}
    + \tfrac{\zeta (\sigma^{2}+ \nicefrac{\zeta_*^2L^2}{L_1^2}) L_1}{L_0}
    \big) \Big) \\
&\leq O\Big( \tfrac{1}{n^{1/4}} \big(
\sqrt{\Delta L_0}
    + \Delta L_1
    + \tfrac{\sqrt{\zeta L_0} (\sqrt{\sigma} + \sqrt{\nicefrac{\zeta_*L}{L_1}})}{\sqrt{L_1}} \\
    & \qquad + (\zeta + \sqrt{\zeta})(\sigma+ \nicefrac{\zeta_*L}{L_1})
    + \tfrac{\zeta (\sigma^{2}+ \nicefrac{\zeta_*^2L^2}{L_1^2}) L_1}{L_0}
    \big) \Big)
\end{align*}
Noting that $\mathbb E[\|\nabla f(\bar x^n)\|_*]=\textstyle \tfrac{1}{n}\sum_{k=1}^n\mathbb E[\|\nabla f(x^k)\|_*]$ completes the proof.
\end{appendixproof}

\begin{toappendix}
\subsubsection{Convergence analysis of S$^3$CG}
Following the same outline as the convergence analysis for \Cref{alg:GGNC} given in the previous subsection, we start with an error control lemma in the vein of \cite[Lem. 6]{mokhtari2020stochastic} that is compatible with our adaptive stepsize.
\begin{lemma}[Linear recursive inequality for $\mathbb{E}\norm{\lambda^k}_2^2$ for S$^3$CG]\label{lem:SCGSS:error_recursive}
    Suppose \Cref{asm:Lip,asm:stoch} hold and let $n\in\mathbb{N}^*$. Consider the iterates $\{x^k\}_{1\leq k \leq n}$ generated by \Cref{alg:SCGSS} with stepsize $\gamma\eta_k\leq \rho$. Then, for all $k\in\{1,\ldots,n
    \}$,
    \begin{equation*}
        \mathbb{E}[\norm{\lambda^k}_2^2] \leq \left(1-\frac{\alpha}{2}\right)\mathbb{E}[\norm{\lambda^{k-1}}_2^2] + \alpha^2\sigma^2 + \frac{8\zeta_*^2L^2\beta\gamma^2\rho^2}{\alpha}
    \end{equation*}
    where $\zeta_* := \max_{x\in\mathcal{X}}\frac{\norm{x}_2}{\norm{x}_{\ast}}$.
\end{lemma}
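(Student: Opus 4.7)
The plan is to follow the same outline as the proof of \Cref{lem:GGNC:error}, with the main modification arising because the S$^3$CG update direction $v^k = x^k - \beta\lmo(d^k)$ differs from the unconstrained $-\lmo(d^k)$ used in \Cref{alg:GGNC}. I would begin by writing $\lambda^k = d^k - \nabla f(x^k)$, substituting the momentum update $d^k = \alpha\nabla f(x^k,\xi_k) + (1-\alpha)d^{k-1}$, and decomposing $\lambda^k$ into three pieces: the stochastic noise $\alpha(\nabla f(x^k)-\nabla f(x^k,\xi_k))$, the successive gradient difference $(1-\alpha)(\nabla f(x^k)-\nabla f(x^{k-1}))$, and the scaled prior error $-(1-\alpha)\lambda^{k-1}$.

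Next, I would take the conditional expectation with respect to the filtration $\mathcal{F}_k$ generated by $\{x^1,\ldots,x^k\}$. By \Cref{asm:stoch:unbiased}, the cross terms involving the stochastic noise vanish, while \Cref{asm:stoch:var} bounds its squared norm by $\sigma^2$. Expanding the remaining square, I would apply Young's inequality with parameter $\alpha/2$ to split the inner product between the successive gradient difference and $\lambda^{k-1}$, so that after regrouping the coefficient of $\|\lambda^{k-1}\|_2^2$ becomes $(1-\alpha)(1+\nicefrac{2}{\alpha})\leq \nicefrac{2}{\alpha}$ and the coefficient of $\|\nabla f(x^k)-\nabla f(x^{k-1})\|_2^2$ becomes $(1-\alpha)(1+\nicefrac{\alpha}{2})\leq 1-\nicefrac{\alpha}{2}$.

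The key step where S$^3$CG diverges from the unconstrained setting is bounding $\|\nabla f(x^k)-\nabla f(x^{k-1})\|_2$. The plan is to pass from $\|\cdot\|_2$ to $\|\cdot\|_*$ via norm equivalence $\|\cdot\|_2 \leq \zeta_*\|\cdot\|_*$, then apply $L$-smoothness to obtain $\zeta_* L\|x^k-x^{k-1}\|$, and finally use the S$^3$CG update $x^k - x^{k-1} = -\gamma\eta_{k-1}v^{k-1}$ together with the diameter bound $\|v^{k-1}\|\leq 2\beta$ (which holds since both $x^{k-1}$ and $\beta\lmo(d^{k-1})$ lie in $\beta\mathcal{D}$) to obtain $\|x^k - x^{k-1}\|^2 \leq 4\beta^2\gamma^2\eta_{k-1}^2$. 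The hypothesis $\gamma\eta_k \leq \rho$ then lets me convert this into $\gamma^2\eta_{k-1}^2 \leq \gamma\rho\cdot\gamma\eta_{k-1} \leq \gamma^2\rho^2$, producing the $L^2\beta^2\gamma^2\rho^2$ scaling. Collecting all the pieces and taking total expectations then yields the stated recursion.

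The main obstacle is primarily careful bookkeeping of the constants through Young's inequality, the norm-equivalence factor $\zeta_*$, and the double use of $\gamma\eta_k\leq\rho$ to convert the $\eta_{k-1}^2$ factor into the clean $\rho^2$ form. The one substantively new ingredient relative to \Cref{lem:GGNC:error} is the diameter bound $\|v^{k-1}\|\leq 2\beta$, which replaces the unit-norm $\|\lmo(\cdot)\|=1$ used there and is what introduces the $\beta$-dependence in the final constant.
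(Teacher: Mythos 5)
Your overall strategy matches the paper exactly: same decomposition of $\lambda^k$, same use of conditional expectation to kill the cross terms, Young's inequality, norm equivalence to pass from $\|\cdot\|_2$ to $\|\cdot\|_*$, $L$-smoothness, the constrained update $x^k-x^{k-1}=-\gamma\eta_{k-1}v^{k-1}$, and the diameter bound $\|v^{k-1}\|\leq 2\beta$ as the genuinely new ingredient replacing $\|\lmo(\cdot)\|=1$. You correctly identify that the $\beta^2$ dependence comes from $\|v^{k-1}\|^2\leq 4\beta^2$, which agrees with the paper's proof (the lemma statement's ``$\beta$'' is a typo for ``$\beta^2$'').

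However, the Young's-inequality weight assignment you state is backwards, and as written the argument does not yield the claimed recursion. You say the coefficient of $\|\lambda^{k-1}\|_2^2$ becomes $(1-\alpha)(1+\nicefrac{2}{\alpha})\leq \nicefrac{2}{\alpha}$ and that of the gradient-difference term becomes $(1-\alpha)(1+\nicefrac{\alpha}{2})\leq 1-\nicefrac{\alpha}{2}$. That gives $\mathbb{E}[\|\lambda^k\|_2^2]\leq \tfrac{2}{\alpha}\mathbb{E}[\|\lambda^{k-1}\|_2^2]+\alpha^2\sigma^2+(1-\nicefrac{\alpha}{2})\cdot 4\zeta_*^2L^2\beta^2\gamma^2\rho^2$, which is an \emph{expanding} recursion (coefficient $\nicefrac{2}{\alpha}>1$) and does not match the stated bound. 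For the lemma to hold you need the opposite assignment: apply Young's inequality as $2\braket{a,b}\leq \tfrac{2}{\alpha}\|a\|^2+\tfrac{\alpha}{2}\|b\|^2$ with $a=\nabla f(x^k)-\nabla f(x^{k-1})$ and $b=\lambda^{k-1}$, so that after collecting terms the coefficient of $\|\lambda^{k-1}\|_2^2$ is $(1-\alpha)^2(1+\nicefrac{\alpha}{2})\leq 1-\nicefrac{\alpha}{2}$ and the coefficient of the gradient-difference bound is $(1-\alpha)^2(1+\nicefrac{2}{\alpha})\leq \nicefrac{2}{\alpha}$; the latter multiplied by $4\zeta_*^2L^2\beta^2\gamma^2\rho^2$ produces the $\nicefrac{8\zeta_*^2L^2\beta^2\gamma^2\rho^2}{\alpha}$ constant. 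The paper's own appendix proof contains a cosmetic version of the same slip (it displays the Young's weights with the wrong association in step (b), and swaps the two elementary inequalities in the final line), but its intended and final bound are consistent with the corrected assignment above. Swap your weights and the rest of your argument goes through.
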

\begin{proof}
    The proof is a straightforward adaptation of the arguments laid out in \citet[Lem. 6]{mokhtari2020stochastic}, which in fact do not depend on convexity of the function $f$ nor on the choice of stepsize $\gamma\eta_k$, as long as it is in $[0,1]$. Let $n\in\mathbb{N}^*$ and $k\in\{1,\ldots,n\}$, then
    \begin{equation*}
        \begin{aligned}
            \norm{\lambda^k}_2^2
                &= \norm{\nabla f(x^k) - d^{k}}_2^2\\
                &= \norm{\nabla f(x^k) - \alpha \nabla f(x^k,\xi_k) - (1-\alpha)d^{k-1}}_2^2\\
                &= \norm{\alpha\left(\nabla f(x^k) - \nabla f(x^k,\xi_k)\right) +(1-\alpha)\left(\nabla f(x^{k})-\nabla f(x^{k-1})\right) - (1-\alpha)\left(d^{k-1} - \nabla f(x^{k-1})\right)}_2^2\\
                &= \alpha^2\norm{\nabla f(x^k) - \nabla f(x^k,\xi_k)}_2^2 + (1-\alpha)^2\norm{\nabla f(x^k)-\nabla f(x^{k-1})}_2^2\\
                    &\quad\quad + (1-\alpha)^2\norm{\nabla f(x^{k-1})-d^{k-1}}_2^2\\
                    &\quad\quad +2\alpha(1-\alpha)\langle\nabla f(x^{k-1})-\nabla f(x^{k-1},\xi_{k-1}), \nabla f(x^k)-\nabla f(x^{k-1})\rangle\\
                    &\quad\quad +2\alpha(1-\alpha)\langle \nabla f(x^k)-\nabla f(x^k,\xi_k), \nabla f(x^{k-1})-d^{k-1}\rangle\\
                    &\quad\quad +2(1-\alpha)^2\langle \nabla f(x^k)-\nabla f(x^{k-1}),\nabla f(x^{k-1}) - d^{k-1}\rangle.
        \end{aligned}
    \end{equation*}
    Taking the expectation conditioned on the filtration $\mathcal{F}_k$ generated by the iterates until $k$, i.e., the sigma algebra generated by $\{x^1,\ldots,x^k\}$, which we denote using $\mathbb{E}_k[\cdot]$, and using the unbiased property in \Cref{asm:stoch}, we get,
    \begin{equation*}
        \begin{aligned}
            \mathbb{E}_k[\norm{\lambda^k}_2^2]
                &= \alpha^2\mathbb{E}_k[\norm{\nabla f(x^k)-\nabla f(x^k,\xi_k)}_2^2] + (1-\alpha)^2\norm{\nabla f(x^k)-\nabla f(x^{k-1})}_2^2\\
                    &\quad\quad + (1-\alpha)^2\norm{\lambda^{k-1}}_2^2 + 2(1-\alpha)^2\langle \nabla f(x^k)-\nabla f(x^{k-1}),\lambda^{k-1}\rangle.
        \end{aligned}
    \end{equation*}
    From the above expression we can estimate,
    \begin{equation*}
        \begin{aligned}
            \mathbb{E}_k[\norm{\lambda^k}_2^2]
                &\stackrel{\text{(a)}}{\leq} 
                    \alpha^2\sigma^2 
                    + (1-\alpha)^2\norm{\nabla f(x^{k})-\nabla f(x^{k-1})}_2^2 
                    + (1-\alpha)^2\norm{\lambda^{k-1}}_2^2 \\
                    & \quad \quad + 2(1-\alpha)^2\langle \nabla f(x^k)-\nabla f(x^{k-1}),\lambda^{k-1}\rangle\\
                &\stackrel{\text{(b)}}{\leq} \alpha^2\sigma^2 + (1-\alpha)^2\norm{\nabla f(x^{k})-\nabla f(x^{k-1})}_2^2 + (1-\alpha)^2\norm{\lambda^{k-1}}_2^2\\
                    &\quad\quad + (1-\alpha)^2\left(\tfrac{\alpha}{2}\norm{\nabla f(x^k)-\nabla f(x^{k-1})}_2^2+\tfrac{2}{\alpha}\norm{\lambda^{k-1}}_2^2\right)\\
                &\stackrel{\text{(c)}}{\leq} 
                    \alpha^2\sigma^2 
                    + (1-\alpha)^2\zeta_*^2\norm{\nabla f(x^{k})-\nabla f(x^{k-1})}^2 + (1-\alpha)^2\norm{\lambda^{k-1}}_2^2\\
                    &\quad\quad + (1-\alpha)^2\left(
                        \tfrac{\alpha}{2}\zeta_*^2\norm{\nabla f(x^k)-\nabla f(x^{k-1})}^2
                        +\tfrac{2}{\alpha}\norm{\lambda^{k-1}}_2^2\right)\\
                 &\stackrel{\text{(d)}}{\leq} 
                    \alpha^2\sigma^2 
                    + (1-\alpha)^2\zeta_*^2L^2\norm{x^k-x^{k-1}}^2 
                    + (1-\alpha)^2\norm{\lambda^{k-1}}_2^2 \\
                    & \quad \quad + (1-\alpha)^2\left(
                        (\tfrac{\alpha}{2})\zeta_*L^2\norm{x^k-x^{k-1}}^2
                        +\tfrac{2}{\alpha}\norm{\lambda^{k-1}}_2^2
                    \right)\\
                 &\stackrel{\text{(e)}}{\leq} 
                    \alpha^2\sigma^2 
                    + 4(1-\alpha)^2\zeta_*^2L^2\beta^2\gamma^2 \eta_k^2 
                    + (1-\alpha)^2\norm{\lambda^{k-1}}_2^2 
                    + (1-\alpha)^2\left(
                        2\alpha \zeta_*^2L^2\beta^2\gamma^2 \eta_k^2
                        +\tfrac{2}{\alpha}\norm{\lambda^{k-1}}_2^2
                    \right)\\
                 &\stackrel{\text{(f)}}{\leq} 
                    \alpha^2\sigma^2 
                        + 4(1+\tfrac{\alpha}{2})(1-\alpha)\zeta_*^2L^2\beta^2\gamma^2 \eta_k^2
                        + (1+\tfrac{2}{\alpha})(1-\alpha)\norm{\lambda^{k-1}}_2^2 \\
                 &\stackrel{\text{(g)}}{\leq} 
                    \alpha^2\sigma^2 
                        + 4(1+\tfrac{\alpha}{2})(1-\alpha)\zeta_*^2L^2\beta^2\gamma^2 \rho^2
                        + (1+\tfrac{2}{\alpha})(1-\alpha)\norm{\lambda^{k-1}}_2^2,
        \end{aligned}
    \end{equation*}
    using the bounded variance property from \Cref{asm:stoch} for (a), Young's inequality with parameter $\alpha/2>0$ for (b), $\zeta_* := \max_{x\in\mathcal{X}}\frac{\norm{x}_2}{\norm{x}_{\ast}}$ for (c), \Cref{asm:Lip} for (d), the definition of $x^{k}$ from \Cref{alg:SCGSS} for (e), the fact that $1-\alpha < 1$ for (f), and $\eta_k\leq \rho$ and for (g).
    To complete the proof, we note that
    \begin{equation*}
        (1+\tfrac{2}{\alpha})(1-\alpha)\leq (1-\tfrac{\alpha}{2}) \quad\text{and}\quad(1-\alpha)(1+\tfrac{\alpha}{2})\leq \tfrac{2}{\alpha}
    \end{equation*}
    which, applied to the previous inequality and taking total expectations, yields
    \begin{equation*}
        \mathbb{E}[\norm{\lambda^k}_2^2] \leq \left(1-\frac{\alpha}{2}\right)\mathbb{E}[\norm{\lambda^{k-1}}_2^2] + \alpha^2\sigma^2 + \frac{8\zeta_*^2L^2\beta^2\gamma^2\rho^2}{\alpha}.
    \end{equation*}
\end{proof}

\begin{lemma}[Bound on the gradient error with horizon-dependent $\alpha$ for S$^3$CG]
\label{lem:SCGSS:error_rate}
    Suppose \Cref{asm:stoch} holds, $f$ is $L$-smooth with respect to $\|\cdot\|_{\ast}$, and let $n\in\mathbb{N}^*$. Consider the iterates $\{x^{k}\}_{1\leq k \leq n}$ generated by \Cref{alg:SCGSS}
    with a stepsize $\gamma\rho$ satisfying
    \begin{equation}
        \gamma\rho <\tfrac{1}{Ln^{3/4}}.
    \end{equation}
    Moreover, consider a constant momentum $\alpha_{k}= \alpha = \frac{1}{\sqrt{n}}$ for all $k \in \{1, \ldots, n\}$. Then, for all $k\in\{1,\ldots,n\}$ the following holds
    \begin{equation}
        \mathbb{E}[\norm{\lambda^{k}}_{2}^{2}] \leq \tfrac{2\sigma^{2}+ 16\zeta_*^2\beta^2}{\sqrt{n}}.
    \end{equation}
\end{lemma}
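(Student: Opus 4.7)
The statement is a direct consequence of the recursive inequality established in the preceding lemma (\Cref{lem:SCGSS:error_recursive}), combined with a standard unrolling of a geometric recurrence. The plan is to pick up the recursion, substitute the horizon-dependent parameter choices, and then close it with a fixed-point style bound.

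First I would invoke the recursion
\[
\mathbb{E}[\|\lambda^k\|_2^2] \leq \Big(1 - \tfrac{\alpha}{2}\Big)\mathbb{E}[\|\lambda^{k-1}\|_2^2] + \alpha^2 \sigma^2 + \tfrac{8\zeta_*^2 L^2 \beta^2 \gamma^2 \rho^2}{\alpha}
\]
from \Cref{lem:SCGSS:error_recursive}, which is valid under \Cref{asm:stoch} and the $L$-smoothness hypothesis on $f$. Note that the hypothesis $\gamma \eta_k \leq \rho$ required by that lemma is automatic, since by construction $\eta_k \leq \rho$ and $\gamma \leq 1$ in \Cref{alg:SCGSS}.

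Next I would substitute the choice $\alpha = 1/\sqrt{n}$. The noise term becomes $\sigma^2/n$, while the discretization error term becomes $8\zeta_*^2 L^2 \beta^2 \gamma^2 \rho^2 \sqrt{n}$. Using the stepsize hypothesis $\gamma \rho < 1/(L n^{3/4})$, so that $\gamma^2 \rho^2 < 1/(L^2 n^{3/2})$, the latter collapses to at most $8\zeta_*^2 \beta^2 / n$. Combining the two terms, the recursion simplifies to
\[
\mathbb{E}[\|\lambda^k\|_2^2] \leq \Big(1 - \tfrac{1}{2\sqrt{n}}\Big)\mathbb{E}[\|\lambda^{k-1}\|_2^2] + \tfrac{\sigma^2 + 8\zeta_*^2 \beta^2}{n}.
\]

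Finally I would close the argument by unrolling this recurrence. Writing it as $u_k \leq (1-a) u_{k-1} + b$ with $a = 1/(2\sqrt{n})$ and $b = (\sigma^2 + 8\zeta_*^2 \beta^2)/n$, the same geometric-series computation used in \Cref{lem:GGNC:errorbound_modified} gives $u_k \leq b/a$ uniformly in $k$, which yields the claimed bound $\mathbb{E}[\|\lambda^k\|_2^2] \leq (2\sigma^2 + 16\zeta_*^2\beta^2)/\sqrt{n}$. There is no real obstacle in this proof; the only subtle point is the coordination between $\alpha$ and $\gamma\rho$, which has been tuned precisely so that the variance contribution $\alpha^2 \sigma^2$ and the discretization contribution $\gamma^2 \rho^2 / \alpha$ are balanced at the same $n^{-1/2}$ scale after dividing by $a$.
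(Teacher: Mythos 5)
Your proof is correct and follows essentially the same route as the paper: invoke the recursion from \Cref{lem:SCGSS:error_recursive}, substitute $\alpha = 1/\sqrt{n}$, use $\gamma\rho < 1/(Ln^{3/4})$ to collapse the discretization term to $8\zeta_*^2\beta^2/n$, and unroll the resulting geometric recurrence $u_k \leq (1-a)u_{k-1}+b$ to get $u_k \leq b/a$. The only addition you make beyond the paper's write-up is explicitly checking that the hypothesis $\gamma\eta_k \leq \rho$ of the preceding lemma is automatically satisfied by the algorithm's construction, which is a reasonable thing to note.
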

\begin{proof}
    Let $n\in\mathbb{N}^*$ and $k\in\{1,\ldots,n\}$. We start from the recursive inequality obtained in \Cref{lem:SCGSS:error_recursive} for $\mathbb{E}[\norm{\lambda^k}_2^2]$ with $L$ the Lipschitz constant of the gradient over the compact set $\mathcal{D}$ to get
    \begin{equation}
    \mathbb{E}[\norm{\lambda^k}_2^2] \leq \left(1-\tfrac{\alpha}{2}\right)\mathbb{E}[\norm{\lambda^{k-1}}_2^2] 
        + \alpha^2\sigma^2 + \frac{8\zeta_*^2L^2\beta^2\gamma^2\rho^2}{\alpha}.
    \end{equation}
    Now, we substitute the specific choice $\alpha = \tfrac{1}{\sqrt{n}}$:
    \begin{equation}
        \mathbb{E}[\norm{\lambda^k}^{2}_{2}]\leq \left(1-\frac{1}{2\sqrt{n}}\right)\mathbb{E}[\norm{\lambda^{k-1}}^{2}_{2}]+\frac{\sigma^{2}}{n}+ 8\zeta_*^2L^2\beta^2\gamma^2\rho^2\sqrt{n}.
    \end{equation}
    Using the particular choice of $\gamma\rho$ specified in the statement of the lemma, we have
    \begin{align*}
        \mathbb{E}[\norm{\lambda^k}_2^{2}]
            &\leq \left(1-\frac{1}{2\sqrt{n}} \right)\mathbb{E}[\norm{\lambda^{k-1}}_2^{2}]+\frac{1}{n}(\sigma^{2}+ 8\zeta_*^2\beta^2)
    \end{align*}
    Let $u_k = \mathbb{E}[\norm{\lambda^k}_2^2]$, $a = \tfrac{1}{2\sqrt{n}}$, and $b = \tfrac{1}{n}(\sigma^{2}+ 8\zeta_*^2\beta^2)$. Unrolling the recurrence relation $u_k \leq (1-a)u_{k-1} + b$, we have
    \begin{align*}
    u_k &\leq (1-a)^k u_0 + b \textstyle\sum_{i=0}^{k-1} (1-a)^i 
        = b \textstyle\sum_{i=0}^{k-1} (1-a)^i 
        = b \frac{1 - (1-a)^k}{1 - (1-a)} 
        = b \frac{1 - (1-a)^k}{a}
    \end{align*}
    Since $0 < a < 1$, we have $0 < (1-a)^k < 1$ for $k \ge 1$. Thus, $1 - (1-a)^k < 1$.
    Therefore,
    \begin{equation}
        u_k \leq \nicefrac{b}{a}.
    \end{equation}
    Substituting the values for $a$ and $b$, we have
    \begin{equation}
        \mathbb{E}[\norm{\lambda^{k}}_{2}^{2}] \leq \tfrac{2\sigma^{2}+ 16\zeta_*^2\beta^2}{\sqrt{n}}.
    \end{equation}
    This concludes the proof.
\end{proof}
\end{toappendix}
In the constrained case, we have the following convergence guarantee for SCG with a clipped short step (\Cref{alg:SCGSS}) using a momentum-based estimator. To the best of our knowledge, this is the first convergence proof using the short step in the stochastic setting.
\begin{thmrep}\label{thm:SCGSS}
    Suppose \Cref{asm:Lip,asm:stoch} hold and let $n\in\mathbb{N}^*$. Consider the iterates $\{x^k\}_{1\leq k \leq n}$ generated by \Cref{alg:SCGSS} (Variant 1) with a constant stepsize $\gamma \leq \nicefrac{1}{L\sqrt{n}}$ and $\rho\leq \nicefrac{1}{n^{1/4}}$. Then, for all $u\in\beta\mathcal{D}$,
    \begin{align*}
        \mathbb E [\langle \nabla f(\bar x^n), \bar x^n-u\rangle]
        \leq \tfrac{4\sqrt{\Delta}}{\sqrt{\gamma n}}
            +\tfrac{8\Delta}{\gamma\rho n}
            +4\sqrt{\epsilon_n}
            +\tfrac{8\epsilon_n}{\rho}
    \end{align*}
    where $\Delta := f(x^1) - f^\star$ and $\epsilon_n := \tfrac{1}{n}\textstyle\sum_{k=1}^n O(\sqrt{\mathbb E\|\lambda^k\|_2^2}+ \mathbb E\|\lambda^k\|_2^2)$.

    \noindent Furthermore, taking $\alpha= \nicefrac{1}{\sqrt{n}}$, $\gamma = \nicefrac{1}{(L\sqrt{n})}$ and $\rho = \nicefrac{1}{n^{1/4}}$ such that $\gamma\rho = \nicefrac{1}{(Ln^{3/4})}$ we have that
    \begin{equation*}
        \mathbb E[\langle\nabla f(\bar x^n),\bar{x}^n-u\rangle] \leq O\big(\tfrac{1}{n^{1/4}}\big).
    \end{equation*}
\end{thmrep}
\begin{appendixproof}
    Note that, since $f$ is continuously differentiable and $\mathcal{D}$ is compact, $f$ must be Lipschitz-smooth on the scaled ball $\beta\mathcal{D}$ with respect to the norm $\|\cdot\|$; call the Lipschitz constant $L>0$. We can therefore start with the descent lemma for $f$ at the points $x^{k+1}$ and $x^k$ to find
    \begin{equation*}
        \begin{aligned}
            0
                & \leq f(x^k)-f(x^{k+1}) + \langle \nabla f(x^k),x^{k+1}-x^k\rangle + \frac{L}{2}\|x^{k+1}-x^k\|^2\\
                & \leq f(x^k)-f(x^{k+1}) - \gamma\eta_k\langle \nabla f(x^k),v^k\rangle + \frac{L}{2}\gamma^2\eta_k^2\|v^k\|^2\\
                & \leq f(x^k)-f(x^{k+1}) - \gamma\eta_k\left(\langle d^k,v^k\rangle + \langle \nabla f(x^k)-d^k,v^k\rangle\right) + \frac{L}{2}\gamma^2\eta_k^2\|v^k\|^2.
        \end{aligned}
    \end{equation*}
    Now we can proceed case-by-case depending on whether clipping is active or not.\\
    
    \paragraph{Case I} Clipping Active ($\gamma \eta_k = \gamma\rho; \frac{\langle d^k, v^k\rangle}{\|v^k\|^2}\geq \rho$).
    
    For all $u\in\beta\mathcal{D}$ it holds,
    \begin{equation*}
        \begin{aligned}
            0
                & \leq f(x^k)-f(x^{k+1}) - \gamma\eta_k\left(\langle d^k,v^k\rangle + \langle \nabla f(x^k)-d^k,v^k\rangle\right) + \frac{L}{2}\gamma^2\eta_k^2\|v^k\|^2\\
                & \leq f(x^k)-f(x^{k+1}) - \gamma\eta_k\left(\frac{1}{2}\langle d^k,v^k\rangle + \frac{1}{2}\langle \nabla f(x^k),x^k-u\rangle + \frac{1}{2}\langle d^k-\nabla f(x^k),x^k-u\rangle + \langle \nabla f(x^k)-d^k,v^k\rangle\right) + \frac{L}{2}\gamma^2\eta_k^2\|v^k\|^2\\
                & \stackrel{\text{(a)}}{\leq}  f(x^k)-f(x^{k+1}) - \gamma\eta_k\left(\frac{1}{2}\langle d^k,v^k\rangle + \frac{1}{2}\langle \nabla f(x^k),x^k-u\rangle - \frac{1}{2}\|\lambda^k\|_2\|x^k-u\|_2 - \|\lambda^k\|_2\|v^k\|_2\right) + \frac{L}{2}\gamma^2\eta_k^2\|v^k\|^2\\
                & \stackrel{\text{(b)}}{\leq}  f(x^k)-f(x^{k+1}) + \gamma\eta_k\left(\frac{L}{2}\gamma\eta_k\|v^k\|^2-\frac{1}{2}\langle d^k,v^k\rangle\right) - \gamma\eta_k\frac{1}{2}\langle \nabla f(x^k),x^k-u\rangle + \gamma\eta_k\frac{3}{2}\|\lambda^k\|_2D_2\\
                & \stackrel{\text{(c)}}{\leq}  f(x^k)-f(x^{k+1}) + \gamma\rho\left(\frac{L}{2}\gamma\rho\|v^k\|^2-\frac{1}{2}\rho\|v^k\|^2\right) - \gamma\rho\frac{1}{2}\langle \nabla f(x^k),x^k-u\rangle + \gamma\rho\frac{3}{2}\|\lambda^k\|_2D_2\\
                & \stackrel{\text{(d)}}{\leq}  f(x^k)-f(x^{k+1}) + \gamma\rho\frac{3}{2}\|\lambda^k\|_2D_2 - \gamma\rho\frac{1}{2}\langle \nabla f(x^k),x^k-u\rangle
        \end{aligned}
    \end{equation*}
    where $D_2 = \max\limits_{x,y\in\beta\mathcal{D}}\|x-y\|_2$ is the diameter of the set $\beta\mathcal{D}$ in the Euclidean norm. The inequality (a) follows by Cauchy-Schwarz, (b) follows by using the diameter of $\beta\mathcal{D}$, (c) follows since clipping is active, and (d) follows since $L \gamma \leq 1$.
    Finally, rearranging gives
    \begin{equation}
        \begin{aligned}
            \gamma\rho\langle \nabla f(x^k),x^k-u\rangle
                & \leq 2\left(f(x^k)-f(x^{k+1})\right) + 3D_2\gamma\rho\|\lambda^k\|_2.
        \end{aligned}
    \end{equation}

    \paragraph{Case II} No Clipping ($\gamma \eta_k = \gamma \frac{\langle d^k,v^k\rangle}{\|v^k\|^2}; \frac{\langle d^k,v^k\rangle}{\|v^k\|^2} \leq \rho$).

    In this case, our stepsize acts like the short step. Starting with the previous inequality from the descent lemma we have, for all $u\in\beta\mathcal{D}$,
    \begin{equation}
        \begin{aligned}
            0
                &\leq f(x^k)-f(x^{k+1}) - \gamma \frac{\langle d^k,v^k\rangle}{\|v^k\|^2}\langle d^k,v^k\rangle - \gamma \eta_k\langle \nabla f(x^k)-d^k,v^k\rangle + \frac{L}{2}\gamma^2\left(\frac{\langle d^k,v^k\rangle}{\|v^k\|^2}\right)^2\|v^k\|^2\\
                &\leq f(x^k)-f(x^{k+1}) - \gamma \frac{\langle d^k,v^k\rangle^2}{\|v^k\|^2} - \gamma\eta_k\langle \nabla f(x^k)-d^k,v^k\rangle + L\gamma^2\frac{\langle d^k,v^k\rangle^2}{2\|v^k\|^2}\\
                &\leq f(x^k)-f(x^{k+1}) - \gamma \frac{\langle d^k,v^k\rangle^2}{2\|v^k\|^2} - \gamma\eta_k\langle \nabla f(x^k)-d^k,v^k\rangle
        \end{aligned}
    \end{equation}
    where in the last inequality we have used that $\gamma \leq \frac{1}{L}$.
    Rearranging, we can estimate
    \begin{equation}
        \begin{aligned}
            0
                & \leq f(x^k)-f(x^{k+1}) - \gamma \frac{\langle d^k, v^k\rangle^2}{2\|v^k\|^2} - \gamma\eta_k\langle \nabla f(x^k)-d^k,v^k\rangle\\
                &\stackrel{\text{(a)}}{\leq}  f(x^k)-f(x^{k+1}) - \frac{1}{2\|v^k\|^2}\left(\frac{\gamma}{2} \langle \nabla f(x^k), x^k-u\rangle^2 - 2\gamma \langle \nabla f(x^k)-d^k, x^k-u\rangle^2\right) - \gamma\eta_k\langle \nabla f(x^k)-d^k,v^k\rangle\\
                & =  f(x^k)-f(x^{k+1}) - \frac{\gamma}{4\|v^k\|^2}\langle \nabla f(x^k), x^k-u\rangle^2 + \frac{\gamma}{\|v^k\|^2} \langle\nabla f(x^k)-d^k,x^k-u\rangle^2 - \gamma\eta_k\langle \nabla f(x^k)-d^k,v^k\rangle \\
                &\stackrel{\text{(b)}}{\leq}  f(x^k)-f(x^{k+1}) - \frac{\gamma}{4\|v^k\|^2}\langle \nabla f(x^k), x^k-u\rangle^2 + \frac{D_2^2\gamma}{\|v^k\|^2} \|\lambda^k\|_2^2 + D_2\gamma\rho\|\lambda^k\|_2 \\
                &\stackrel{\text{(c)}}{\leq}  16\beta^2\left(f(x^k)-f(x^{k+1})\right) - \gamma\langle \nabla f(x^k),x^k-u\rangle^2 + 4D_2^2\gamma\|\lambda^k\|_2^2 + 16D_2\beta^2\gamma\rho\|\lambda^k\|_2
        \end{aligned}
    \end{equation}
    where (a) is due to Young's inequality, (b) is due to Cauchy-Schwarz and the definition of $D_2$ as the diameter of $\beta\mathcal{D}$ in the Euclidean norm, and (c) follows by multiplying everything by $4\|v^k\|^2$ and estimating.
    We can rearrange this to finally arrive at
    \begin{equation*}
        \begin{aligned}
            \gamma\langle \nabla f(x^k),x^k-u\rangle^2
                &\leq 16\beta^2\left(f(x^k)-f(x^{k+1})\right) + 4D_2^2\gamma\|\lambda^k\|_2^2 + 16D_2\beta^2\gamma\rho\|\lambda^k\|_2.
        \end{aligned}
    \end{equation*}
    
    \paragraph{Combining Both Cases} Let $M = \max\{16\beta^2, 2\}$ and $M'=\max\{16D_2\beta^2, 3D_2\}$ and let $\mathcal{A}\subset\{1,2,\ldots,n\}$ denote the indices where clipping is active. Let $n\in\mathbb{N}^*$ and denote
    \begin{equation*}
        \epsilon_n:=\frac{1}{n}\sum\limits_{k=1}^nM'\rho\mathbb{E}[\|\lambda^k\|_2] + \frac{1}{n}\sum\limits_{k=1}^n4D_2^2\mathbb{E}[\|\lambda^k\|_2^2].
    \end{equation*}
    Then, taking expectations, adding from $k=1$ to $n$, and dividing by $n$ gives
    \begin{equation}
        \begin{aligned}
            \frac{1}{n}\sum\limits_{k\in\mathcal{A}}\gamma\rho\mathbb{E}[\langle\nabla f(x^k), x^k-u\rangle] + \frac{1}{n}\sum\limits_{k\not\in\mathcal{A}}\gamma\mathbb{E}[\langle\nabla f(x^k), x^k-u\rangle^2]
                & \leq \frac{M}{n}\left(f(x^1)-f^\star\right) + \gamma\epsilon_n.
        \end{aligned}
    \end{equation}
    We can lower bound the left hand side by the sum over $\mathcal{A}$ and divide by $\gamma\rho$ to get
    \begin{equation}
        \begin{aligned}
            \frac{1}{n}\sum\limits_{k\in\mathcal{A}}\mathbb{E}[\langle \nabla f(x^k),x^k-u\rangle]
                & \leq \frac{M\Delta}{\gamma\rho n} + \frac{\epsilon_n}{\rho}.
        \end{aligned}
    \end{equation}
    Similarly, lower bounding the left hand side by the sum over the complement of $\mathcal{A}$ and dividing by $\gamma$, we get by Jensen's inequality
    \begin{equation}
        \begin{aligned}
            \frac{1}{n}\sum\limits_{k\not\in\mathcal{A}}\mathbb{E}[\langle \nabla f(x^k),x^k-u\rangle]^2\leq \frac{1}{n}\sum\limits_{k\not\in\mathcal{A}}\mathbb{E}[\langle \nabla f(x^k),x^k-u\rangle^2]
                & \leq \frac{M\Delta}{\gamma n} + \epsilon_n.
        \end{aligned}
    \end{equation}
    We use the fact that $2az - a^2\leq z^2$ for any $a,z>0$. Picking $z=\mathbb{E}[\langle \nabla f(x^k),x^k-u\rangle]$, it follows that
    \begin{equation}
        \begin{aligned}
            \frac{1}{n}\sum\limits_{k\not\in\mathcal{A}} z \leq \frac{1}{n}\sum\limits_{k\not\in\mathcal{A}}\frac{z^2}{2a}+\frac{a}{2}\leq \frac{A}{2a}+\frac{1}{n}\sum\limits_{k\not\in\mathcal{A}}\frac{a}{2}
                & \leq \frac{A}{2a}+\frac{a}{2}
        \end{aligned}
    \end{equation}
    where $A:=\frac{M\Delta}{\gamma n}+\epsilon_n$. Choosing $a=\sqrt{A}$ and replacing $z$ by $\mathbb{E}[\langle \nabla f(x^k),x^k-u\rangle]$ we get
    \begin{equation}
        \begin{aligned}
            \frac{1}{n}\sum\limits_{k\not\in\mathcal{A}}\mathbb{E}[\langle \nabla f(x^k),x^k-u\rangle]
                & \leq \sqrt{A} = \frac{\sqrt{M\Delta}}{\sqrt{\gamma n}} + \sqrt{\epsilon_n}.
        \end{aligned}
    \end{equation}
    Adding both of these we get
    \begin{equation}
        \begin{aligned}
            \frac{1}{n}\sum\limits_{k=1}^n\mathbb{E}[\langle \nabla f(x^k),x^k-u\rangle]
                & \leq \frac{\sqrt{M\Delta}}{\sqrt{\gamma n}}+\sqrt{\epsilon_n} + \frac{M\Delta}{\gamma \rho n} + \frac{\epsilon_n}{\rho}.
        \end{aligned}
    \end{equation}
    Let $\Lambda_n^2:=\tfrac{2\sigma^{2}+ 16\zeta_*^2\beta^2}{\sqrt{n}}$. By \cref{lem:SCGSS:error_rate}, $\Lambda_n^2 \geq \mathbb{E}[\|\lambda^k\|_2^2]$ for all $k\leq n$.
    
    Next, we can estimate 
    \begin{equation*}
        \begin{aligned}
            \epsilon_n 
                &\leq M'\rho \Lambda_n + 4D_2^2\Lambda_n^2\\
            \sqrt{\epsilon_n}
                &\leq \sqrt{M'\rho\Lambda_n} + 2D_2\Lambda_n\\
            \frac{\epsilon_n}{\rho}
                &\leq M'\Lambda_n + \frac{4}{\rho}D_2^2\Lambda_n^2.
        \end{aligned}
    \end{equation*}

    Substituting in the definition of $\Lambda_n$, $\gamma$, and $\rho$ while also noting the definition of $\bar{x}^n$, we get
    \begin{equation*}
        \begin{aligned}
            \mathbb{E}[\langle \nabla f(\bar{x}^n),\bar{x}^n-u\rangle] 
                &\leq \frac{\sqrt{M\Delta}}{\sqrt{\gamma n}}+\sqrt{\epsilon_n} + \frac{M\Delta}{\gamma \rho n} + \frac{\epsilon_n}{\rho}\\
                &\leq \frac{\sqrt{M\Delta}}{\sqrt{\gamma n}}+\sqrt{M'\rho\Lambda_n} + 2D_2\Lambda_n + \frac{M\Delta}{\gamma \rho n} + M'\Lambda_n + \frac{4}{\rho}D_2^2\Lambda_n^2\\
                &\leq \frac{\sqrt{LM\Delta}}{n^{1/4}} + \frac{\sqrt{M'\left(2\sigma^2 + 16\zeta_*^2\beta^2\right)}}{n^{1/4}} + \frac{2D_2\sqrt{2\sigma^2 + 16\zeta_*^2\beta^2}}{n^{1/4}}\\
                    &\quad\quad + \frac{LM\Delta}{n^{1/4}} + \frac{M'\sqrt{2\sigma^2 + 16\zeta_*^2\beta^2}}{n^{1/4}} + \frac{4D_2^2\left(2\sigma^2 + 16\zeta_*^2\beta^2\right)}{n^{1/4}}
        \end{aligned}
    \end{equation*}
    which gives a big O rate
    \begin{equation*}
        \mathbb{E}[\langle \nabla f(\bar{x}^n,\bar{x}^n-u\rangle] \leq O\left(\frac{1}{n^{1/4}}\right).
    \end{equation*}
\end{appendixproof}
We additionally provide an identical guarantee for \Cref{alg:SCGSS} (Variant 2) in the appendix.
\begin{toappendix}
\begin{thm}
    Suppose \Cref{asm:Lip,asm:stoch} hold and let $n\in\mathbb{N}^*$. Consider the iterates $\{x^k\}_{1\leq k \leq n}$ generated by \Cref{alg:SCGSS} (Variant 2) with a constant stepsize $\gamma \leq \nicefrac{1}{L\sqrt{n}}$ and $\rho\leq \nicefrac{1}{n^{1/4}}$. Then, for all $u\in\mathcal{D}$,
    \begin{align*}
        \mathbb E [\langle \nabla f(\bar x^n), \bar x^n-u\rangle]
        \leq \tfrac{4\sqrt{\Delta}}{\sqrt{\gamma n}}
            +\tfrac{8\Delta}{\gamma\rho n}
            +4\sqrt{\epsilon_n}
            +\tfrac{8\epsilon_n}{\rho}
    \end{align*}
    where $\Delta := f(x^1) - f^\star$ and $\epsilon_n := \tfrac{1}{n}\textstyle\sum_{k=1}^n O(\sqrt{\mathbb E\|\lambda^k\|_2^2}+ \mathbb E\|\lambda^k\|_2^2)$.

    \noindent Furthermore, taking $\alpha= \nicefrac{1}{\sqrt{n}}$, $\gamma = \nicefrac{1}{(L\sqrt{n})}$ and $\rho = \nicefrac{1}{n^{1/4}}$ such that $\gamma\rho = \nicefrac{1}{(Ln^{3/4})}$ we have that
    \begin{equation*}
        \mathbb E[\langle\nabla f(\bar x^n),\bar{x}^n-u\rangle] \leq O\big(\tfrac{1}{n^{1/4}}\big).
    \end{equation*}
\end{thm}
\end{toappendix}
\begin{appendixproof}
    Note that, since $f$ is continuously differentiable and $\mathcal{D}$ is compact, $f$ must be Lipschitz-smooth on the scaled ball $\beta\mathcal{D}$ with respect to the norm $\|\cdot\|$; call the Lipschitz constant $L$. We can therefore start with the descent lemma for $f$ at the points $x^{k+1}$ and $x^k$ to find
    \begin{equation}
        \begin{aligned}
            0
                & \leq f(x^k)-f(x^{k+1}) + \langle \nabla f(x^k),x^{k+1}-x^k\rangle + \frac{L}{2}\|x^{k+1}-x^k\|^2\\
                & \leq f(x^k)-f(x^{k+1}) - \gamma\eta_k\langle \nabla f(x^k),v^k\rangle + \frac{L}{2}\gamma^2\eta_k^2\|v^k\|^2\\
                & \leq f(x^k)-f(x^{k+1}) - \gamma\eta_k\left(\langle d^k,v^k\rangle + \langle \nabla f(x^k)-d^k,v^k\rangle\right) + \frac{L}{2}\gamma^2\eta_k^2\|v^k\|^2.
        \end{aligned}
    \end{equation}
    Now we can proceed case-by-case depending on whether clipping is active or not.\\
    
    \paragraph{Case I} Clipping Active ($\gamma \eta_k = \gamma\rho; \frac{\langle d^k, v^k\rangle}{4\beta^2}\geq \rho$).
    
    \begin{equation}
        \begin{aligned}
            0
                & \leq f(x^k)-f(x^{k+1}) - \gamma\eta_k\left(\langle d^k,v^k\rangle + \langle \nabla f(x^k)-d^k,v^k\rangle\right) + \frac{L}{2}\gamma^2\eta_k^2\|v^k\|^2\\
                & \leq f(x^k)-f(x^{k+1}) - \gamma\eta_k\left(\frac{1}{2}\langle d^k,v^k\rangle + \frac{1}{2}\langle \nabla f(x^k),x^k-u\rangle + \frac{1}{2}\langle d^k-\nabla f(x^k),x^k-u\rangle + \langle \nabla f(x^k)-d^k,v^k\rangle\right) + \frac{L}{2}\gamma^2\eta_k^2\|v^k\|^2\\
                & \leq f(x^k)-f(x^{k+1}) - \gamma\eta_k\left(\frac{1}{2}\langle d^k,v^k\rangle + \frac{1}{2}\langle \nabla f(x^k),x^k-u\rangle - \frac{1}{2}\|\lambda^k\|_2\|x^k-u\|_2 - \|\lambda^k\|_2\|v^k\|_2\right) + \frac{L}{2}\gamma^2\eta_k^2\|v^k\|^2\\
                & \leq f(x^k)-f(x^{k+1}) + \gamma\eta_k\left(\frac{L}{2}\gamma\eta_k\|v^k\|^2-\frac{1}{2}\langle d^k,v^k\rangle\right) - \gamma\eta_k\frac{1}{2}\langle \nabla f(x^k),x^k-u\rangle + \gamma\eta_k\frac{3}{2}\|\lambda^k\|_2D_2\\
                & \leq f(x^k)-f(x^{k+1}) + \gamma\rho\left(2L\gamma\rho\beta^2-2\rho\beta^2\right) - \gamma\rho\frac{1}{2}\langle \nabla f(x^k),x^k-u\rangle + \gamma\rho\frac{3}{2}\|\lambda^k\|_2D_2\\
                & \leq f(x^k)-f(x^{k+1}) + \gamma\rho\frac{3}{2}\|\lambda^k\|_2D_2 - \gamma\rho\frac{1}{2}\langle \nabla f(x^k),x^k-u\rangle
        \end{aligned}
    \end{equation}
    in the second inequality we have used the fact that $\|v^k\|^2 \leq 2\|x^k\|^2 + 2\|\beta\lmo(d^k)\|^2\leq 4\beta^2$ and the fact that $\|v^k\|_2\leq D_2$.
    Finally, rearranging gives
    \begin{equation}
        \begin{aligned}
            \gamma\rho\langle \nabla f(x^k),x^k-u\rangle
                & \leq 2\left(f(x^k)-f(x^{k+1})\right) + 3D_2\gamma\rho\|\lambda^k\|_2.
        \end{aligned}
    \end{equation}

    \paragraph{Case II} No Clipping ($\gamma \eta_k = \gamma \frac{\langle d^k,v^k\rangle}{4\beta^2}; \frac{\langle d^k,v^k\rangle}{4\beta^2} \leq \rho$).
    \begin{equation}
        \begin{aligned}
            0
                &\leq f(x^k)-f(x^{k+1}) - \gamma \frac{\langle d^k,v^k\rangle}{4\beta^2}\langle d^k,v^k\rangle - \gamma \eta_k\langle \nabla f(x^k)-d^k,v^k\rangle + \frac{L}{2}\gamma^2\left(\frac{\langle d^k,v^k\rangle}{4\beta^2}\right)^2\|v^k\|^2\\
                &\leq f(x^k)-f(x^{k+1}) - \gamma \frac{\langle d^k,v^k\rangle^2}{4\beta^2} - \gamma\eta_k\langle \nabla f(x^k)-d^k,v^k\rangle + L\gamma^2\frac{\langle d^k,v^k\rangle^2}{8\beta^2}\\
                &\leq f(x^k)-f(x^{k+1}) - \gamma \frac{\langle d^k,v^k\rangle^2}{8\beta^2} - \gamma\eta_k\langle \nabla f(x^k)-d^k,v^k\rangle
        \end{aligned}
    \end{equation}
    where in the last inequality we have used that $\gamma \leq \frac{1}{L}$.
    Rearranging,
    \begin{equation}
        \begin{aligned}
            0
                & \leq f(x^k)-f(x^{k+1}) - \gamma \frac{\langle d^k, v^k\rangle^2}{8\beta^2} - \gamma\eta_k\langle \nabla f(x^k)-d^k,v^k\rangle\\
                & \leq f(x^k)-f(x^{k+1}) - \frac{1}{8\beta^2}\left(\frac{\gamma}{2} \langle \nabla f(x^k), x^k-u\rangle^2 - 2\gamma \langle \nabla f(x^k)-d^k, x^k-u\rangle^2\right) - \gamma\eta_k\langle \nabla f(x^k)-d^k,v^k\rangle\\
                &\leq f(x^k)-f(x^{k+1}) - \frac{1}{8\beta^2}\left(\frac{\gamma}{2} \langle \nabla f(x^k), x^k-u\rangle^2\right) + \frac{\gamma}{4\beta^2} \langle\nabla f(x^k)-d^k,x^k-u\rangle^2 - \gamma\eta_k\langle \nabla f(x^k)-d^k,v^k\rangle\\
                &\leq f(x^k)-f(x^{k+1}) - \frac{\gamma}{16\beta^2}\langle \nabla f(x^k), x^k-u\rangle^2 + \frac{\gamma}{4\beta^2} \langle\nabla f(x^k)-d^k,x^k-u\rangle^2 - \gamma\eta_k\langle \nabla f(x^k)-d^k,v^k\rangle \\
                &\leq f(x^k)-f(x^{k+1}) - \frac{\gamma}{16\beta^2}\langle \nabla f(x^k), x^k-u\rangle^2 + \frac{D_2^2\gamma}{4\beta^2} \|\lambda^k\|_2^2 + D_2\gamma\rho\|\lambda^k\|_2 \\
                & \leq 16\beta^2\left(f(x^k)-f(x^{k+1})\right) - \gamma\langle \nabla f(x^k),x^k-u\rangle^2 + 4D_2^2\gamma\|\lambda^k\|_2^2 + 16D_2\beta^2\gamma\rho\|\lambda^k\|_2.
        \end{aligned}
    \end{equation}

The rest of the proof is exactly the same as it was for Variant 1, so we omit it.
\end{appendixproof}

\vspace{-2mm}
\section{Related work}
\vspace{-2mm}
\paragraph{$(L_0,L_1)$-smoothness}
An $(L_0,L_1)$-smoothness condition was introduced based on the Hessian in \citep{zhang2019gradient} and later generalized to the first-order notion that we extend to the non-Euclidean case \citep{zhang2020improved}. ($L_0$,$L_1$)-smoothness was used to analyze signSGD under heavy-tailed noise assumptions in \cite{kornilov2025sign}.
A coordinate-wise ($L_0$,$L_1$)-smoothness condition has also been considered for analyzing a generalized version of signSGD \citep{crawshaw2022robustness}.

In the Euclidean case, a descent property under $(L_0,L_1)$-smoothness was shown for both gradient clipping \citep{zhang2020improved,koloskova2023revisiting} and gradient descent with an appropriate adaptive stepsize as studied in the two concurrent works \citet{gorbunov2024methods} and \citet{vankov2024optimizing}.

\paragraph{Parameter-agnostic}
In the deterministic case, gradient descent with backtracking line-search was shown to converge under $(L_0,L_1)$-smoothness without knowledge of the Lipschitz constants \citep{hubler2024parameter}.
For (star)-convex problems, an interesting connection was established between gradient norm clipping and the Polyak stepsize in \citet{takezawa2024polyak} and further analyzed in \citet{gorbunov2024methods} and \citet{vankov2024optimizing}.
The adaptive stepsize removes the need for knowing both $L_0$ and $L_1$.
Unfortunately, the Polyak stepsize is deeply tied to the Euclidean and (star)-convex structure and thus does not seem to be directly extendable to our more general setting.

In the stochastic case, the current best known parameter-agnostic method introduces an undesirable exponential dependency on $L_1$ in the complexity \citep{hubler2024parameter}.
However, knowledge of $L_0$ can be removed without such issues through either an AdaGrad type stepsize \citep{wang2023convergence,faw2023beyond} or normalized gradient descent with momentum \citep{cutkosky2020momentum} as shown in \citet{hubler2024parameter}. 
This mirrors results from the online learning community where both AdaGrad and gradient normalization are known to adapt to Hölder smoothness \citep{orabona2023normalized}.
\paragraph{Short step}
In contrast with gradient descent, the Frank-Wolfe algorithm \citep{frank1956algorithm} does not ensure descent with an open-loop stepsize even in the deterministic setting.
Descent can be ensured by an adaptive stepsize known as the short step, originally introduced by Frank \& Wolfe \citep{frank1956algorithm} and extended by Rubinov \& Dem'yanov \citep{dem1968minimization}.
See \citet{pokutta2024frank} for an expository treatment.

\paragraph{Spectral norm methods}
\ref{eq:ClippedSpectral} can be viewed as a hybrid method between the stochastic spectral descent \citep{carlson2015bstochastic} and the Muon optimizer \citep{jordan2024muon}, with some crucial differences.

Muon builds the gradient estimator $d^k$ differently. 
Specifically they take $d^k = \nabla f(x^k,\xi_k) + \beta d^{k-1}$ if Nesterov momentum is disabled.
This is equivalent to our choice $d^k = \alpha \nabla f(x^k,\xi_k) + (1-\alpha) d^{k-1}$ for LMO-based schemes, since the LMO is scale-invariant (i.e., $\lmo(a\cdot s)=\lmo(s)$ for $a>0$) \citep{pethick2025training}.
However, for \ref{eq:SD} this equivalence no longer holds (in fact we have $[a\cdot s]^\sharp=a[s]^\sharp$ for $a\in \R$).
The appropriate choice of $d^k$, which generalizes to \ref{eq:SD} and \ref{eq:GGNC}, turns out to be the convex combination.

Stochastic spectral descent \citep{carlson2015bstochastic} does not construct a gradient estimator and instead takes $d^k=\nabla f(x^k,\xi_k)$.
This restricts their convergence result to the case of (mild) relative noise.

In this sense, \ref{eq:ClippedSpectral} could just as well be called Clipped Muon (not to be confused with the unrelated MuonClip \citep{team2025kimi}) but we prefer \ref{eq:ClippedSpectral} as the algorithm itself is not tied to momentum nor to Newton-Schulz, as the name Muon fundamentally is.

\citet{tuddenham2022orthogonalising} also studied an optimization algorithm focused on orthogonalization, however they orthogonalize \emph{before} doing the momentum step. \citet{pethick2025training} analyzed a more general algorithm called Averaged LMO Directional Descent which admits as a special case the algorithm studied in \citet{tuddenham2022orthogonalising}; their empirical and theoretical findings found this algorithm to be worse than orthogonalization \emph{after} the momentum step, e.g., the Scion family of algorithms \citep{pethick2025training}.

We note that many works have recently analyzed the convergence behavior of algorithms using spectral LMOs like Muon and Scion, starting first with \cite{pethick2025training,li2025note} and then \cite{kovalev2025understanding,sfyraki2025lions}, but always under $L$-smoothness assumptions, in contrast to this work.

\vspace{-3mm}
\paragraph{Modular norm}
\citep{large2024scalable} introduced a norm choice for neural networks and established a smoothness condition for a given neural network provided the parameter remains bounded.
The dual norm computation needed in \ref{eq:GGNC} is particularly easy to implement in the accompanying Modula software package since $\|d\|_*:=-\operatorname{flatten}(\lmo(d))^\top \operatorname{flatten}(d)$, which in Modula code reads as \texttt{dual\_norm=-sum(model.dualize(d)*d)}.

\vspace{-3mm}
\paragraph{Weight decay}
Weight decay \citep{pratt1992non} is a crucial component in deep learning and has become standard in training modern neural networks through its integration with Adam \citep{loshchilov2017decoupled}.
When combined with LMO based updates such as sign descent and the normalized gradient descent the resulting methods can be seen as instantiations of the conditional gradient method for constrained optimization problems \citep{chen2023lion,d2023we,xie2024implicit,pethick2025training}.
Our adaptive stepsize in \Cref{alg:SCGSS} effectively scales the weight decay as well as the update.
This is similar to scheduled weight decay \citep{xie2023overlooked} which uses the adaptive stepsize in Adam to also scale the weight decay parameter.

\begin{toappendix}
\section{Experiments}\label{app:exp}
Our implementations follow Unconstrained ClippedScion and ClippedScion \Cref{alg:uClippedScion} and \Cref{alg:ClippedScion} (Variant 2), respectively.
For simplicity, we absorb the latter's factor of $4$ into the clipping threshold $\rho$, so both algorithms directly clip $\sum_{l=1}^D \braket{d^k_l,v^k_l}$ at $\rho$.

CIFAR10 experiments are run on a single A100 NVIDIA GPU, NanoGPT runs are run on 4 $\times$ H100 NVIDIA GPUs, and ViT experiments use 16 $\times$ GH200 NVIDIA GPUs.
Hyperparameters are provided in \Cref{tbl:hyperparams:airbench,tbl:hyperparams:nanoGPT,tbl:hyperparams:DeiT}.

\vspace{-2mm}
\begin{figure}[h]
    \centering
    \includegraphics[width=0.45\linewidth]{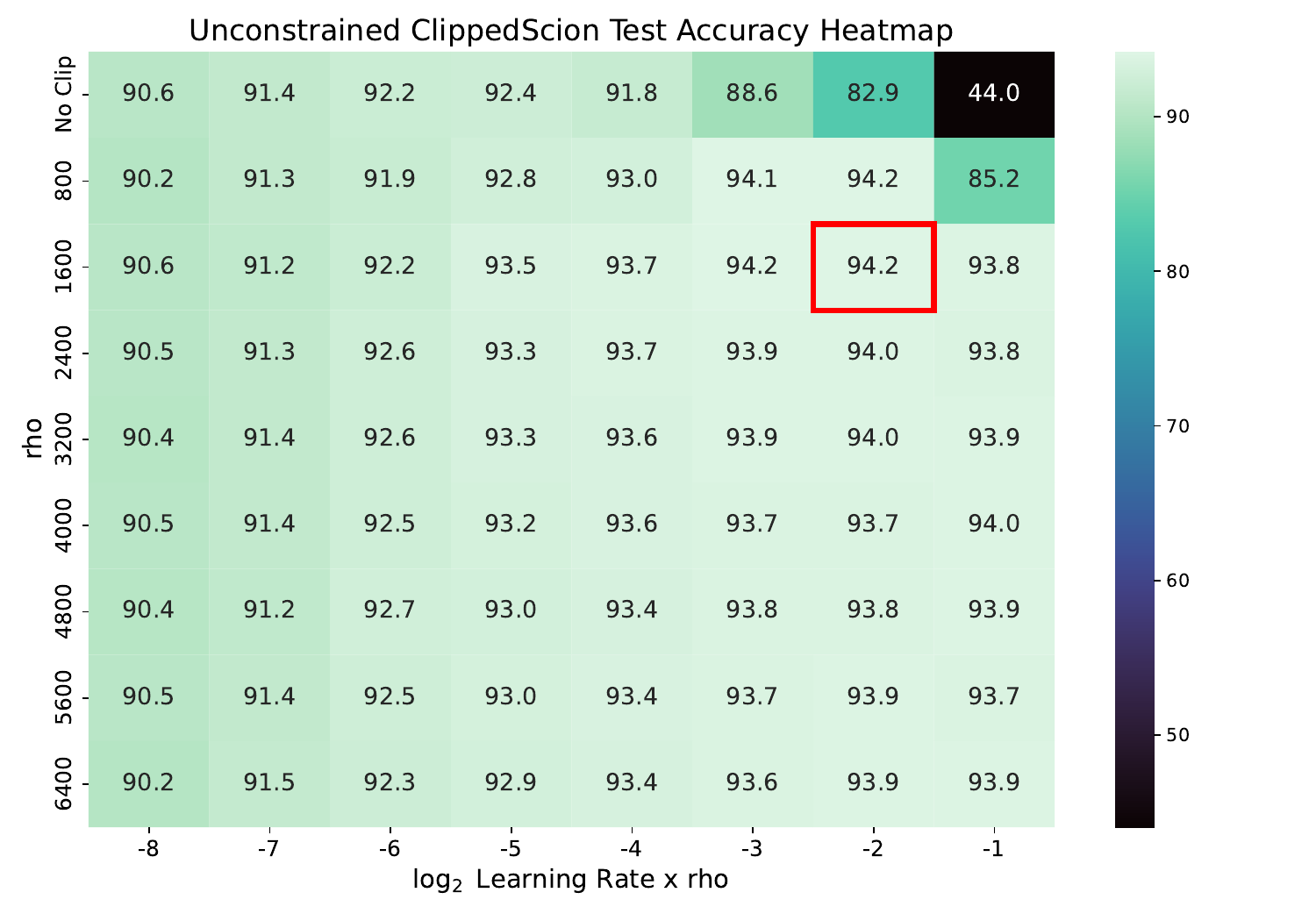}
    \includegraphics[width=0.45\linewidth]{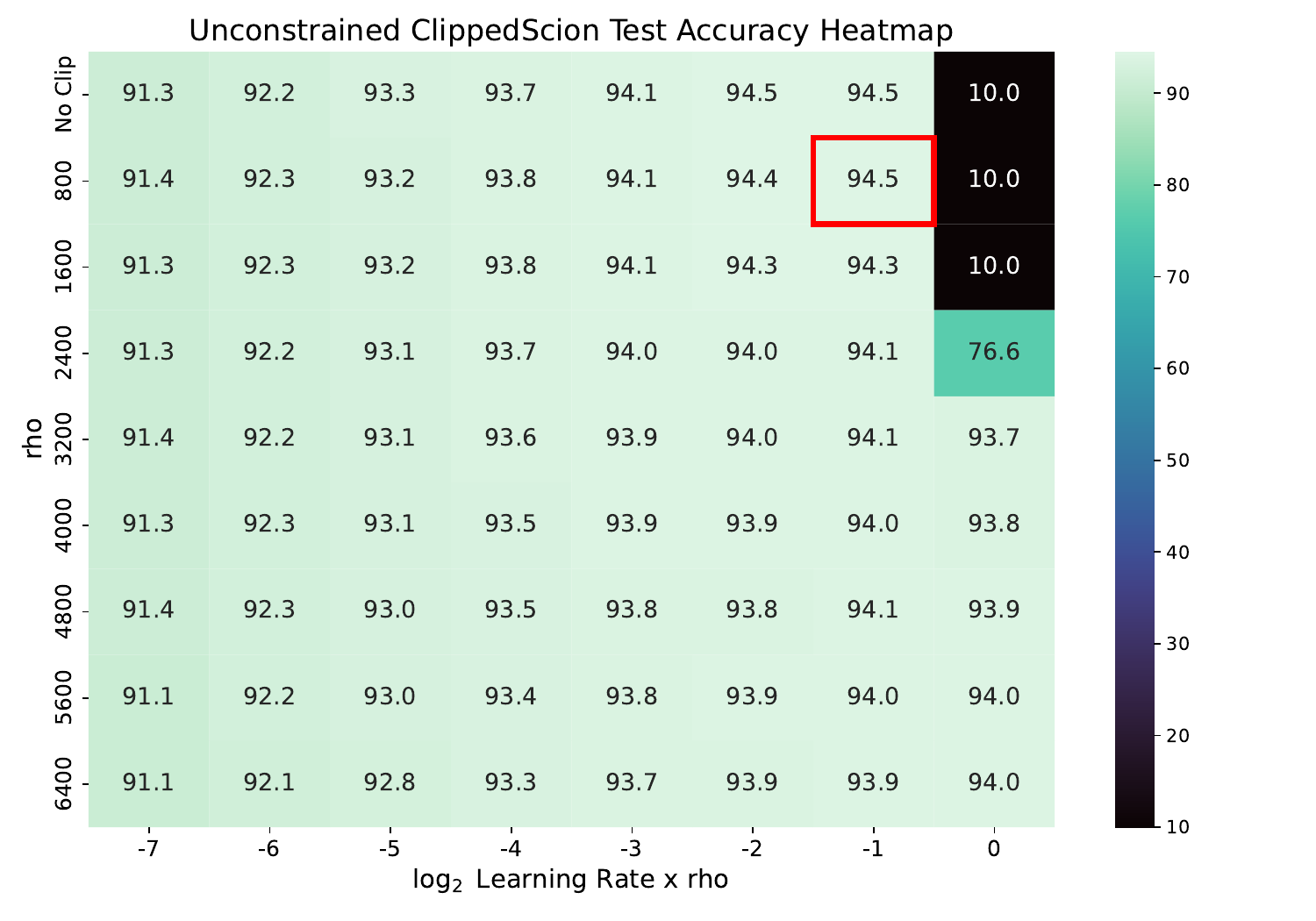}
    \caption{The optimal hyperparameters for Unconstrained ClippedScion on CIFAR10 for 80 epochs, (left) no stepsize decay (right) with stepsize decay.
(indicated in red). The first row indicated with "No Clip" corresponds to Unconstrained Scion.}
    \label{fig:cifar10:sweep:unconstrained}
\end{figure}
\vspace{-3mm}

\begin{figure}[h]
    \centering
    \includegraphics[width=0.45\linewidth]{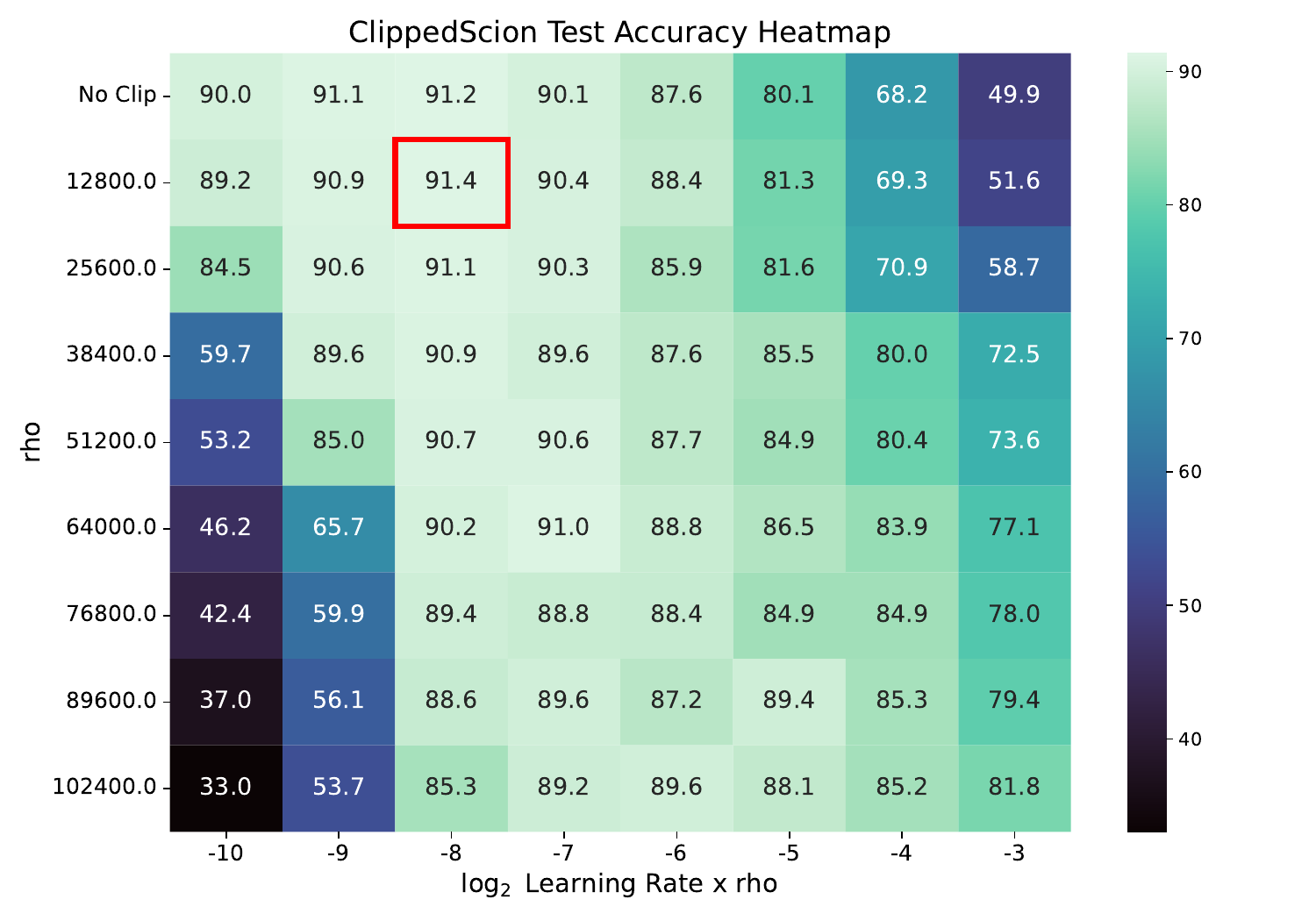}
    \caption{The optimal hyperparameters for ClippedScion on CIFAR10 for 80 epochs, no stepsize decay (indicated in red). The first row indicated with "No Clip" corresponds to Scion.}
    \label{fig:cifar10:sweep:constrained}
\end{figure}

\begin{figure}
    \centering
    \includegraphics[width=0.45\linewidth]{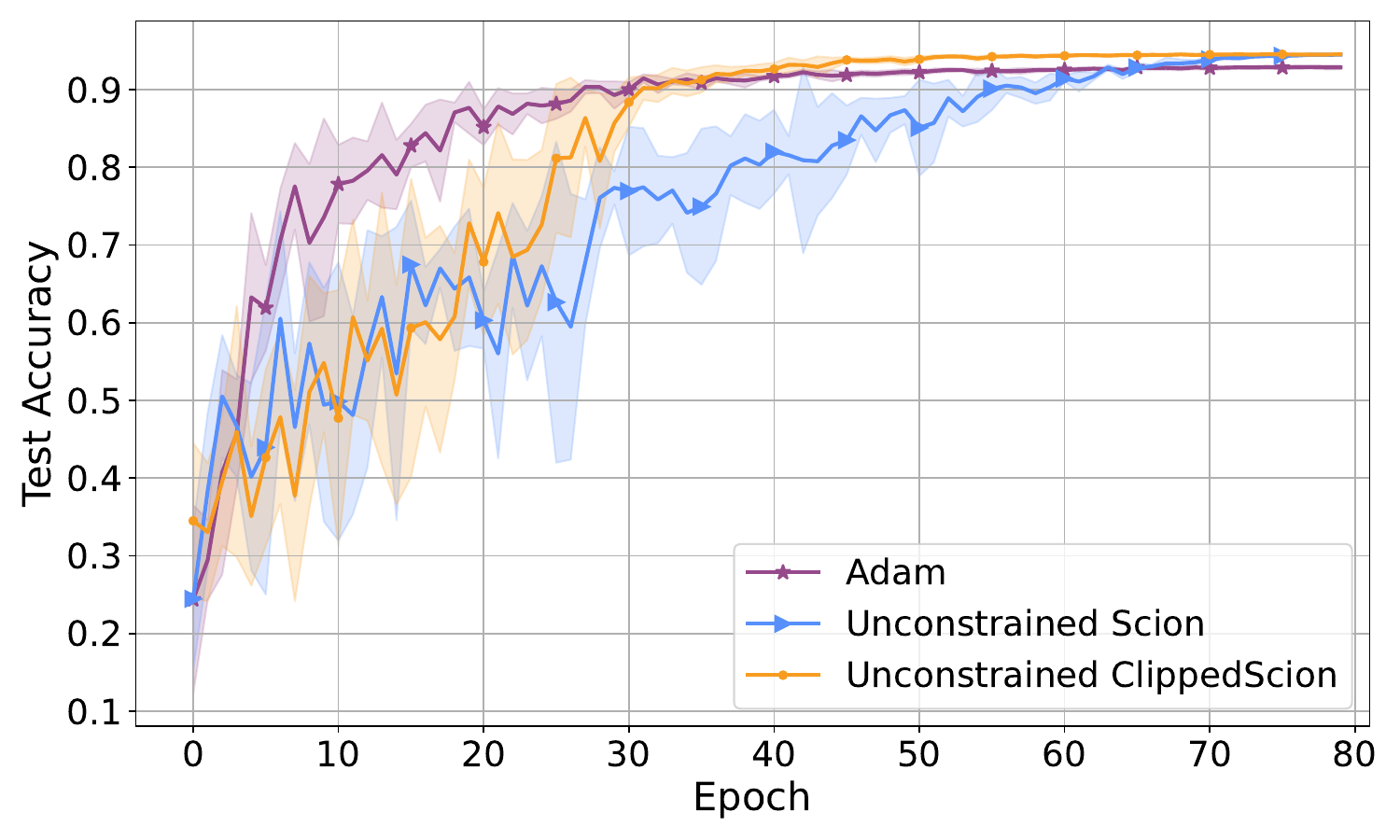}
    \quad
    \includegraphics[width=0.45\linewidth]{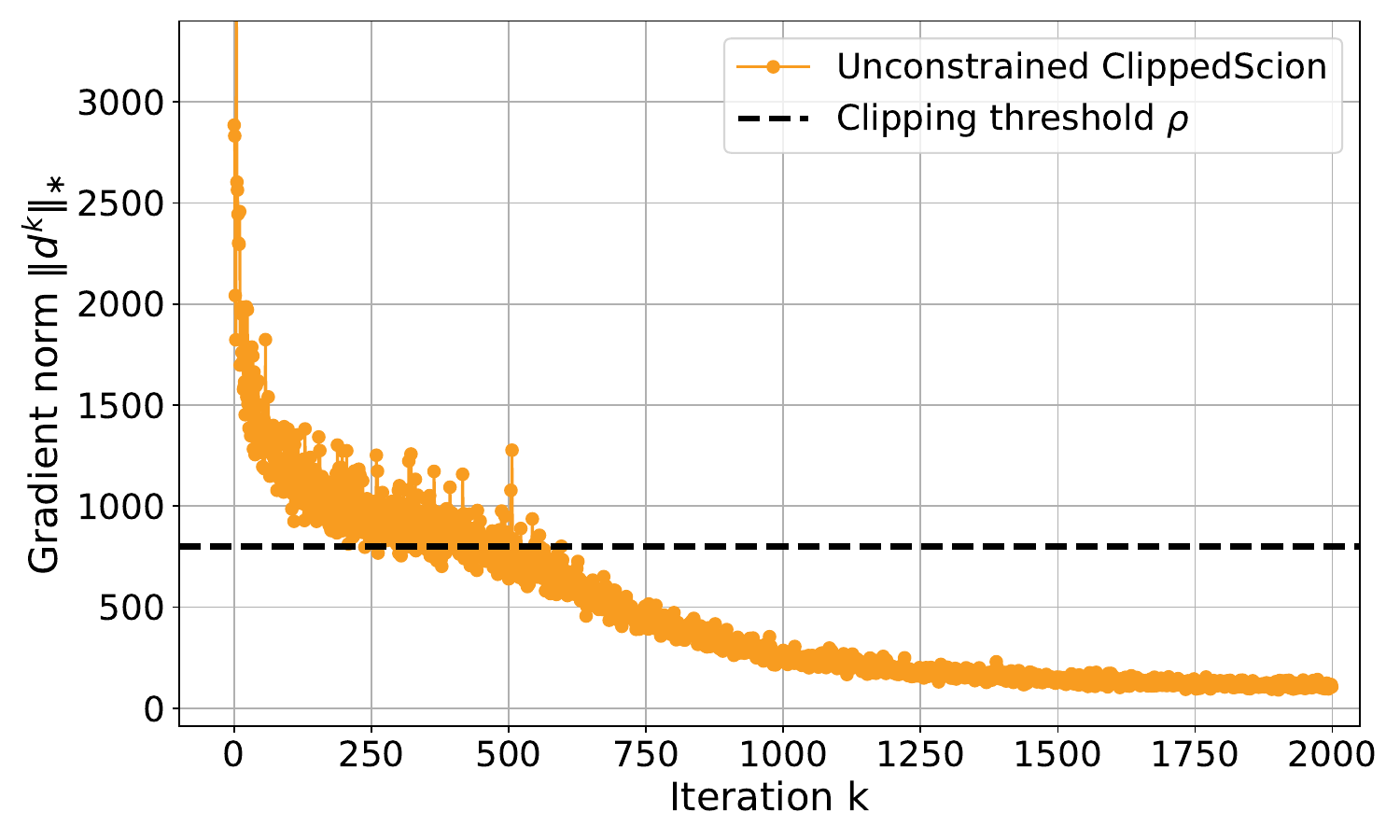}
    \caption{For CIFAR10 experiments with stepsize decay; Unconstrained Scion and Unconstrained ClippedScion achieve similar performance as expected.}
    \label{fig:cifar10:unconstrained}
\end{figure}
\vspace{-2mm}

\begin{figure}
    \centering
    \includegraphics[width=0.45\linewidth]{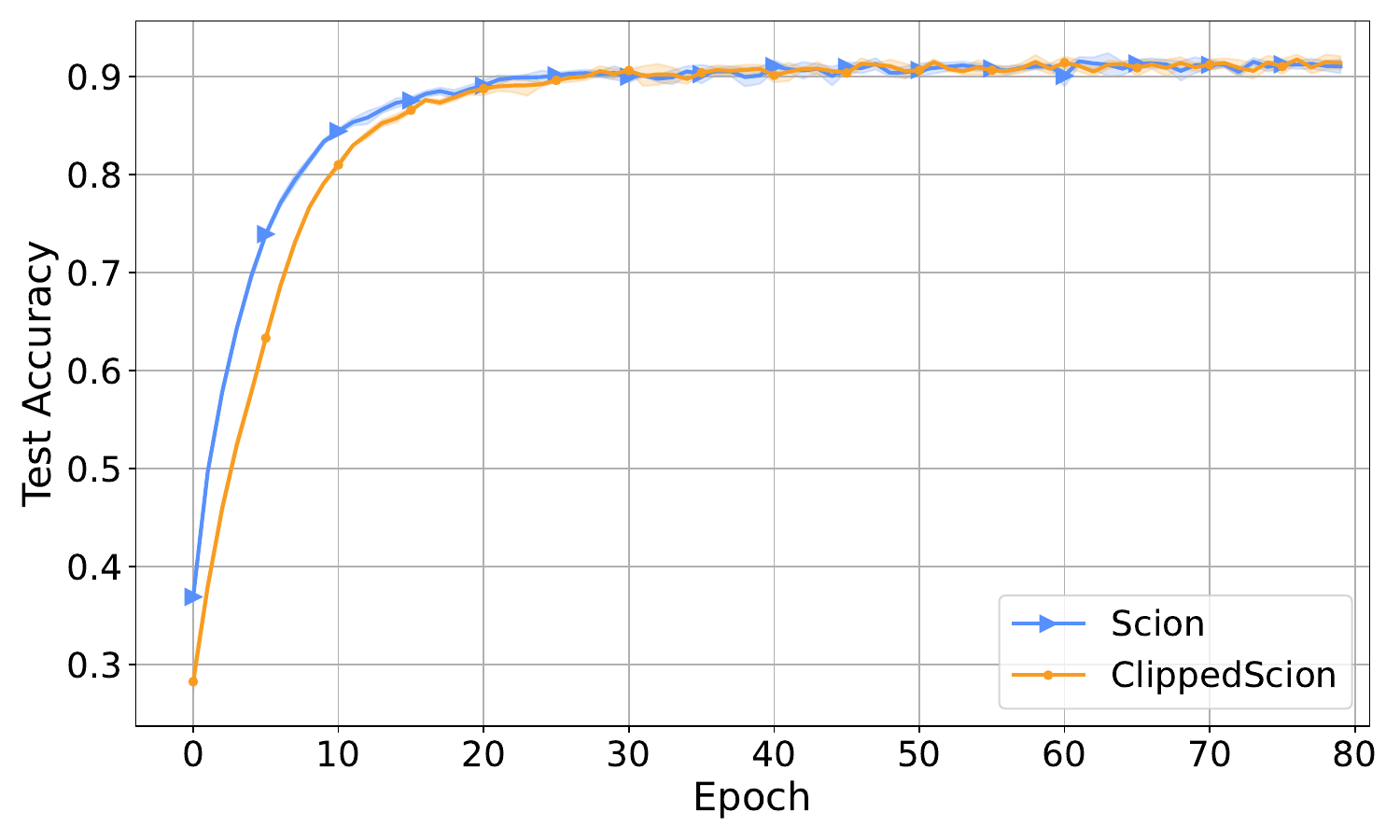}
    \quad
    \includegraphics[width=0.45\linewidth]{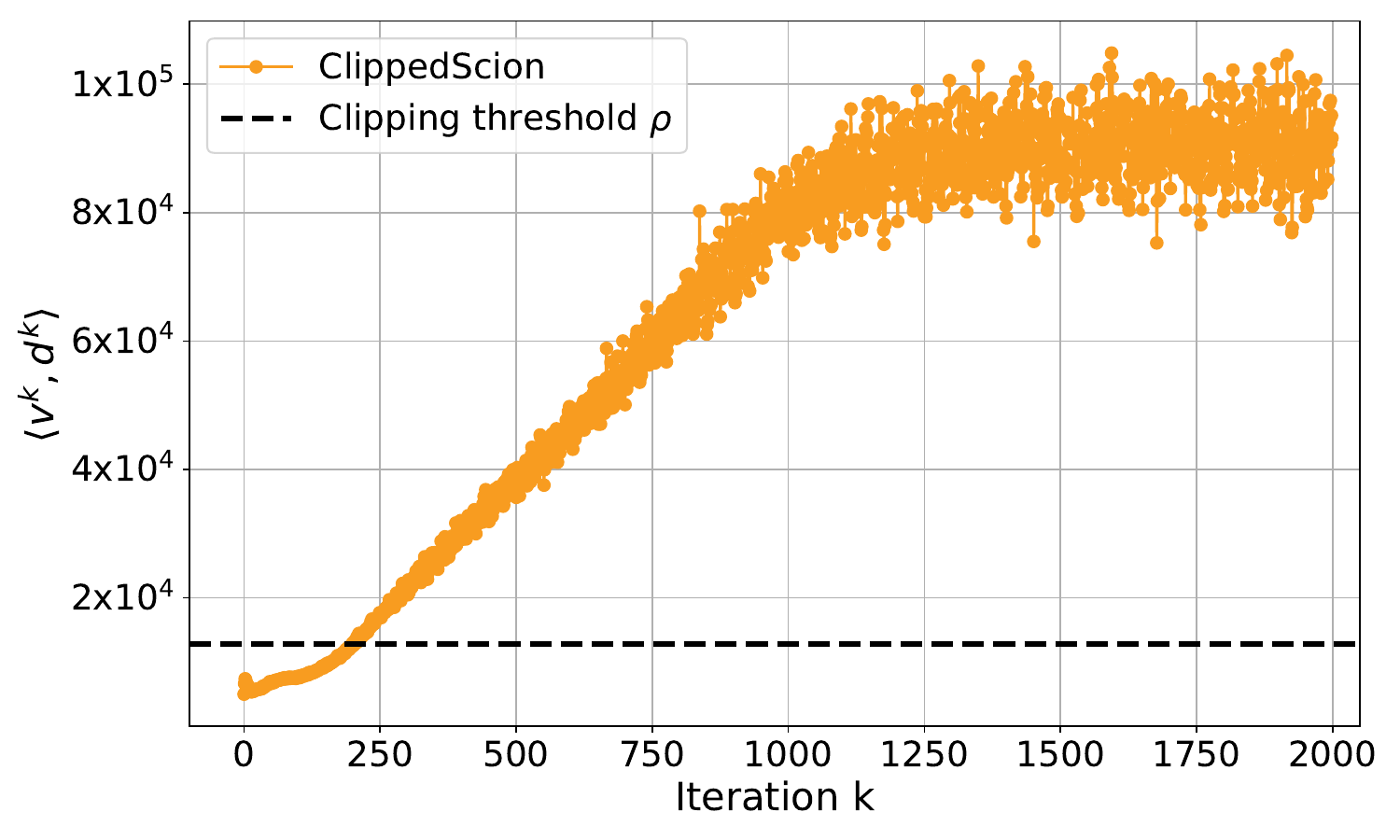}
    \caption{For CIFAR10 experiments for constrained variant of the algorithms without stepsize decay; clipping is less effective due to the surprising increase of $\braket{v^k,d^k}$. 
    We observe that even the (deterministic) Wolfe gap is increasing, which is otherwise expected to go to zero.}
    \label{fig:cifar10:constrained}
\end{figure}

\begin{figure}
    \centering
    \includegraphics[width=0.45\linewidth]{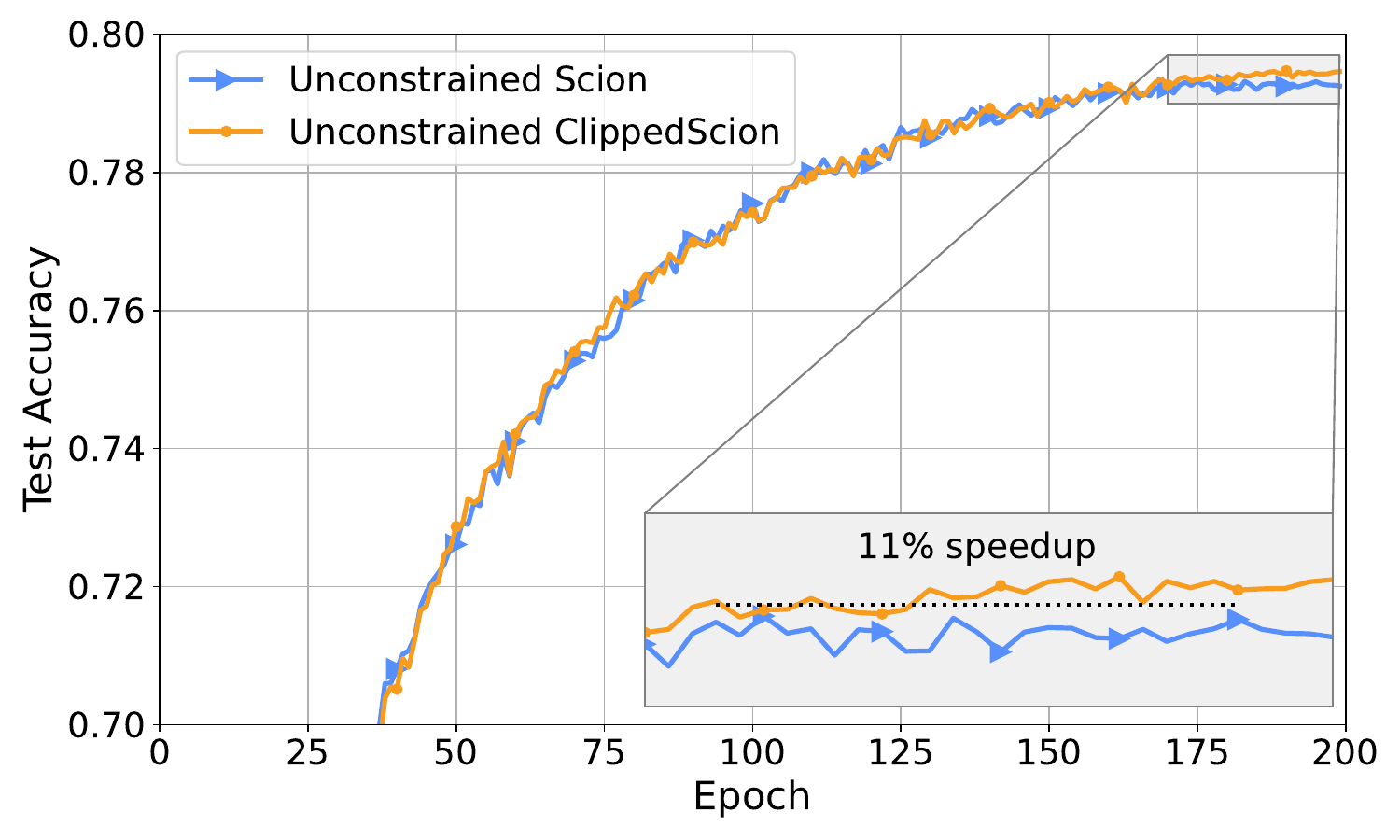}
    \quad
    \includegraphics[width=0.45\linewidth]{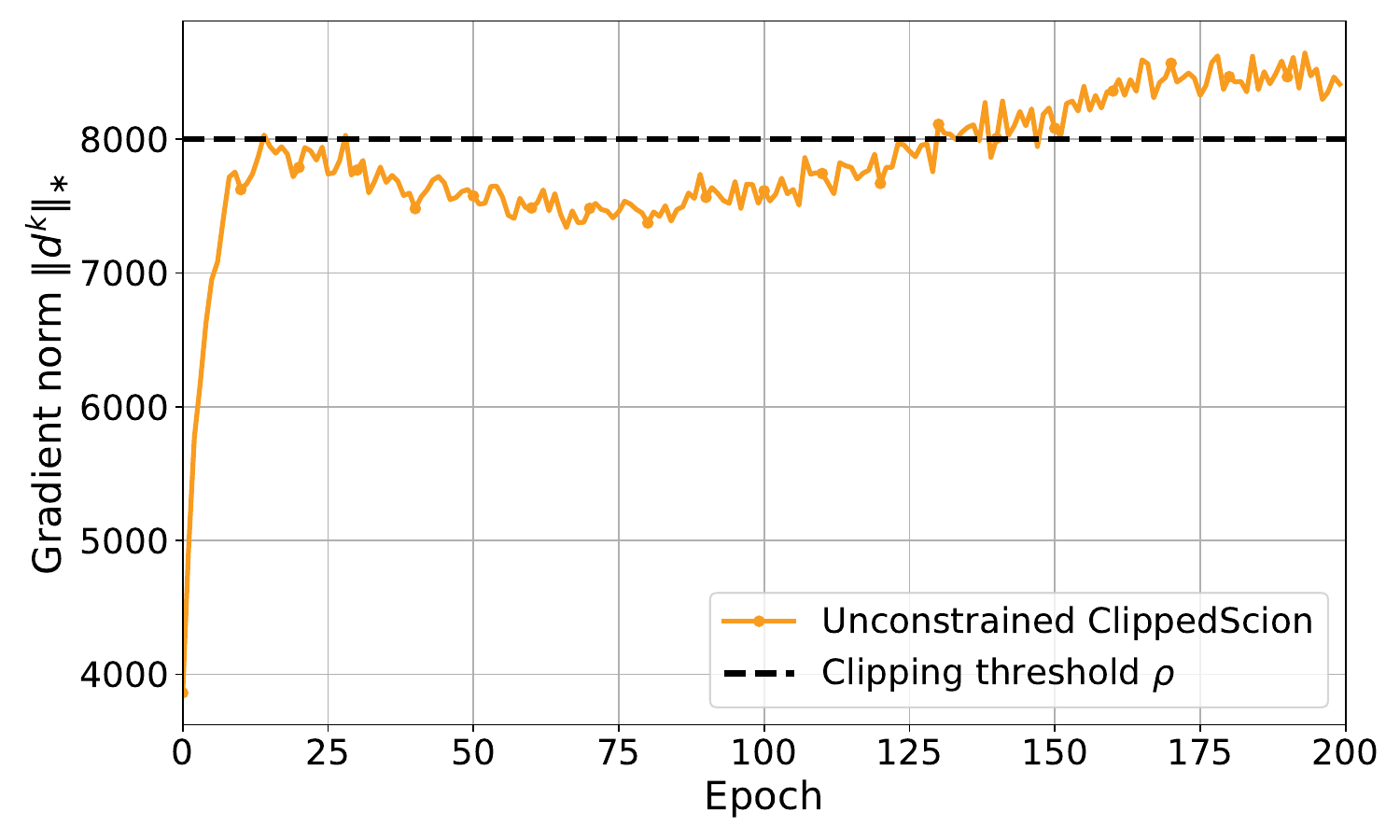}
    \caption{Clipping improves over Scion by a 11\% speedup on DeiT-base. Cosine learning rate schedule is used.} %
    \label{fig:deit}
\end{figure}

\begin{figure}
    \centering
    \includegraphics[width=0.45\linewidth]{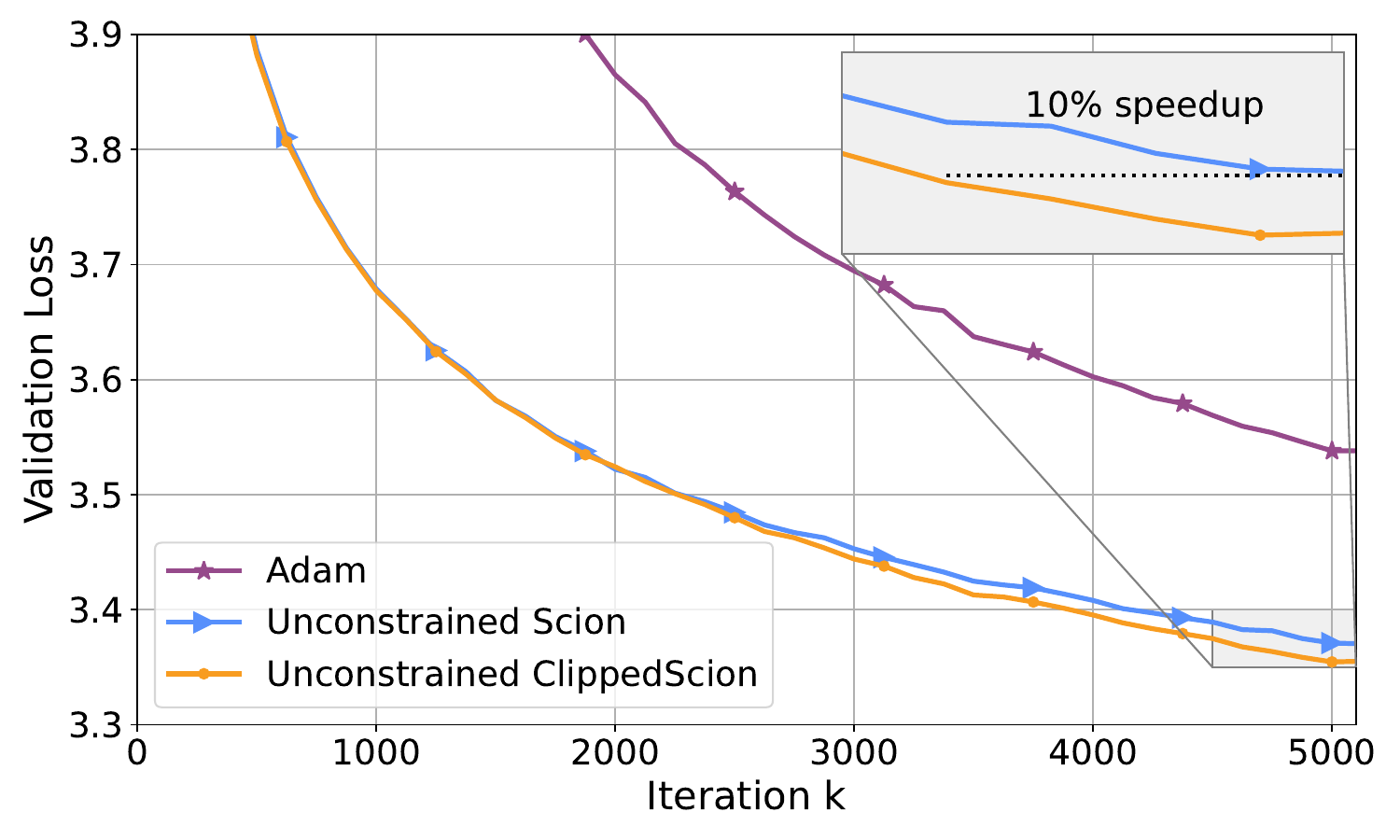}
    \quad
    \includegraphics[width=0.45\linewidth]{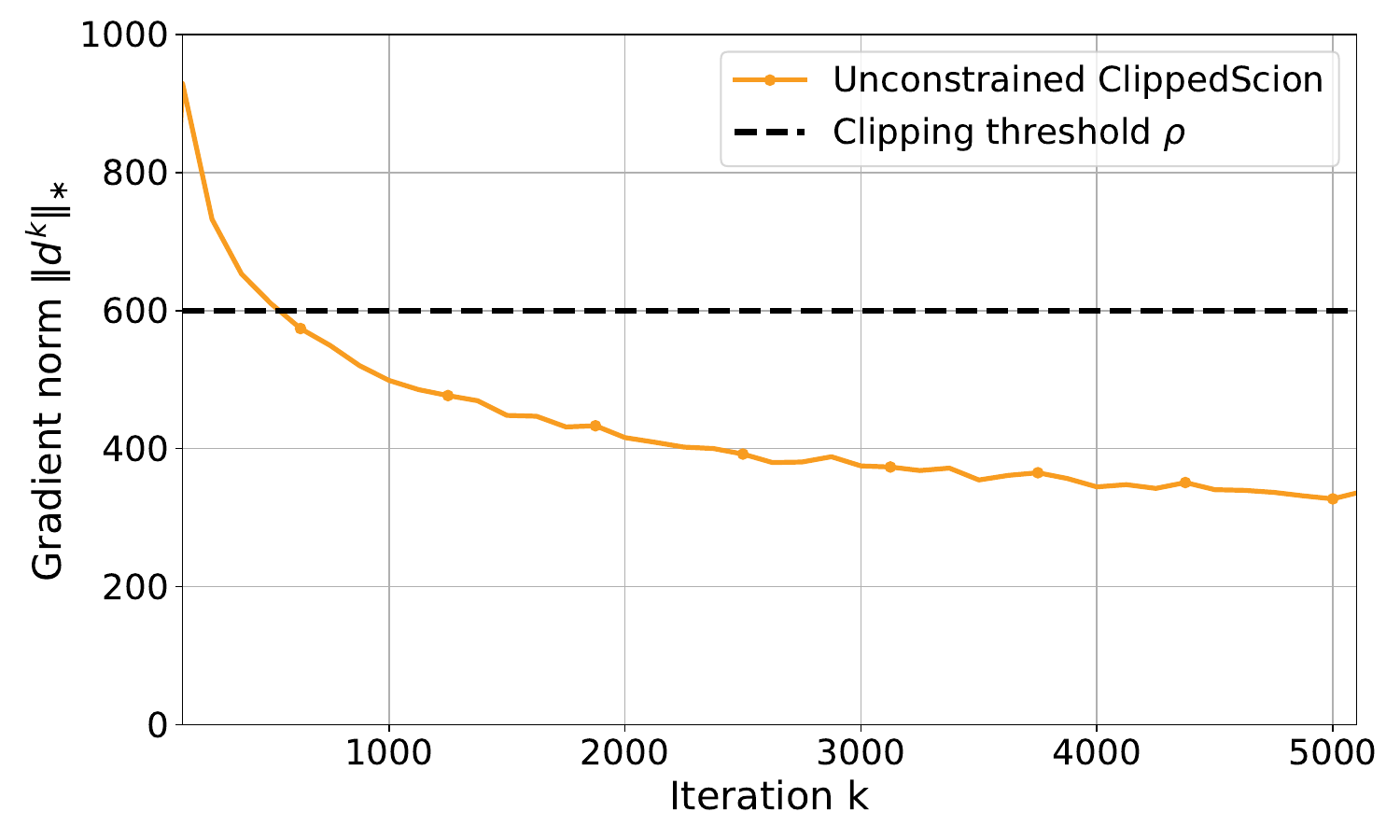}
    \caption{For fixed stepsize comparison clipping improves over Scion by more than a 10\% speedup on NanoGPT (124M).} %
    \label{fig:nanogpt}
\end{figure}

\begin{figure}
    \centering
    \includegraphics[width=0.5\linewidth]{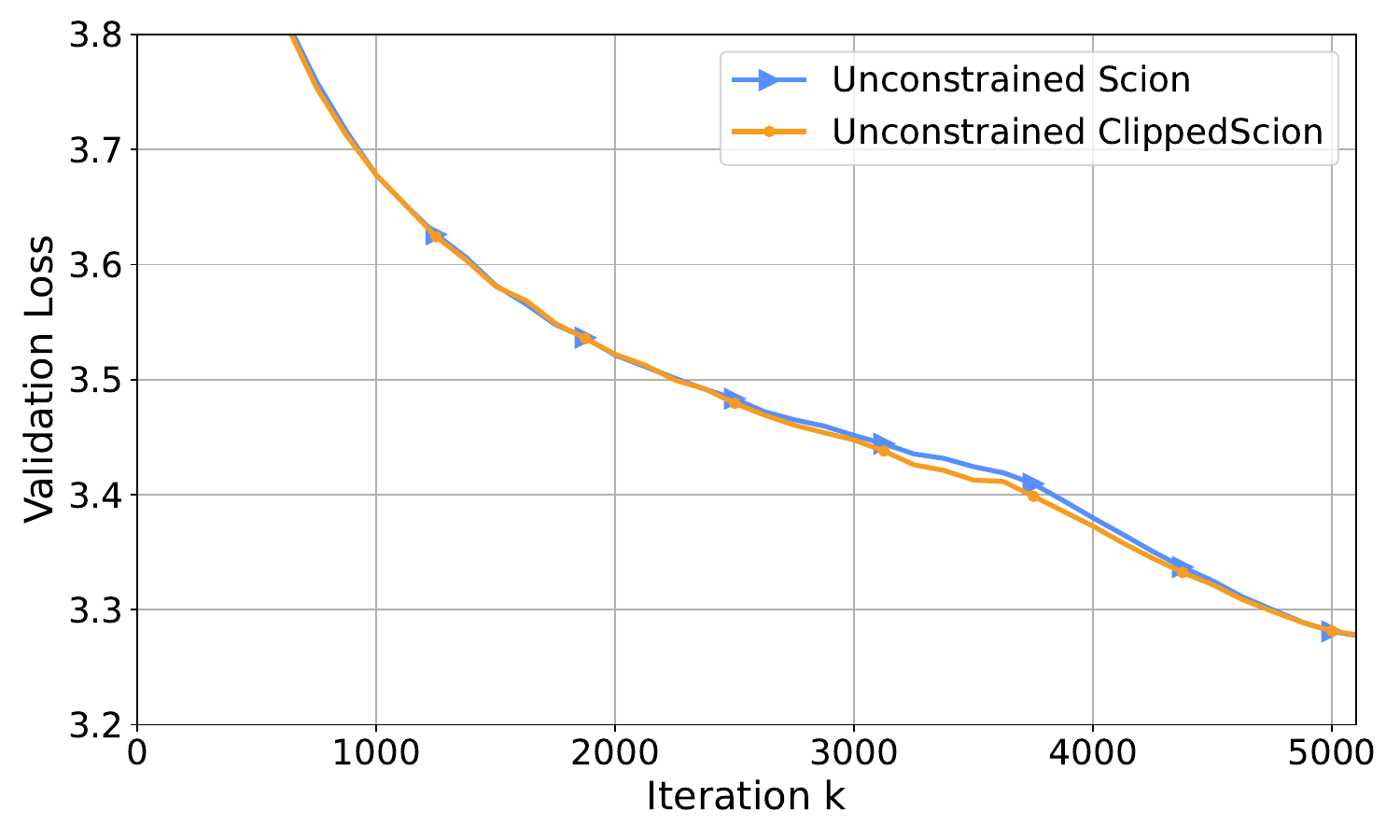}
    \caption{NanoGPT (124M) with stepsize decay. Unconstrained Scion and Unconstrained ClippedScion similar performance for the final iterate as expected under stepsize decay.}
    \label{fig:nanogpt_unconstrained_lrdecay}
\end{figure}

\begin{figure}
    \centering
    \includegraphics[width=0.45\linewidth]{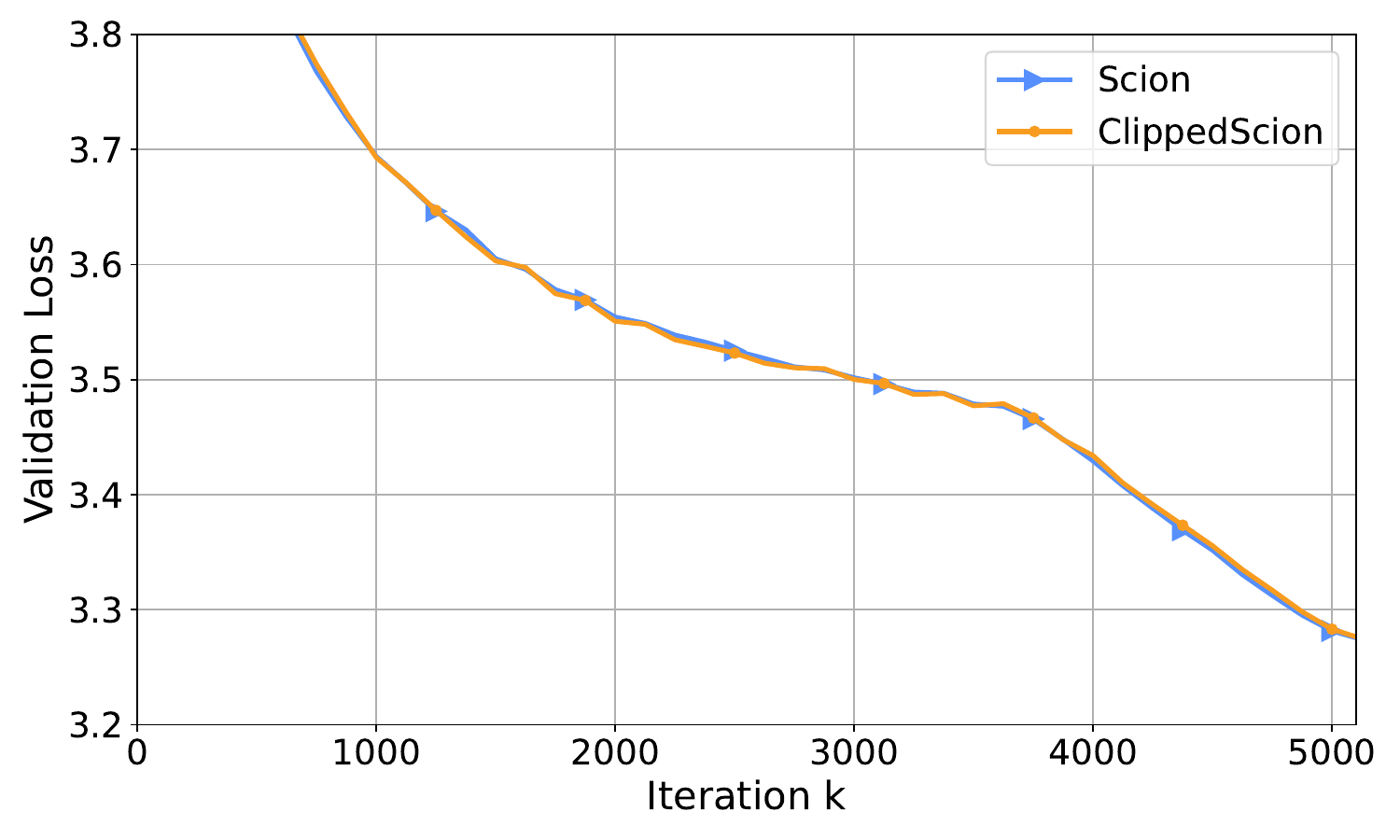}
    \quad
    \includegraphics[width=0.45\linewidth]{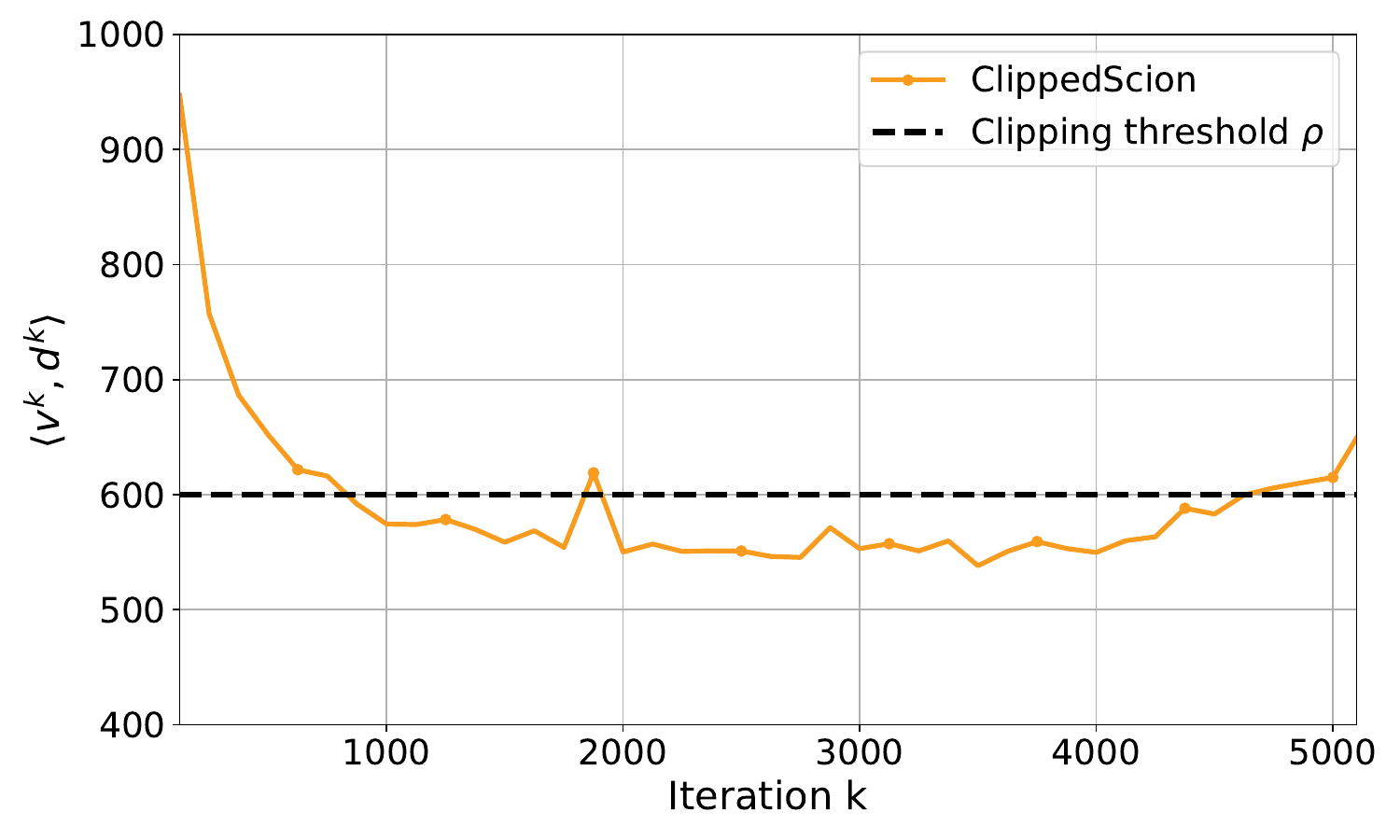}
    \caption{NanoGPT (124M) for constrained variants of the algorithm with stepsize decay. 
    An interesting observation, which requires further investigation, is that $\braket{v^k,d^k}$ surprisingly increases during the linear stepsize decay.} %
    \label{fig:nanogpt_constrained_fixlr}
\end{figure}

\begin{figure}
    \centering
    \includegraphics[width=0.5\linewidth]{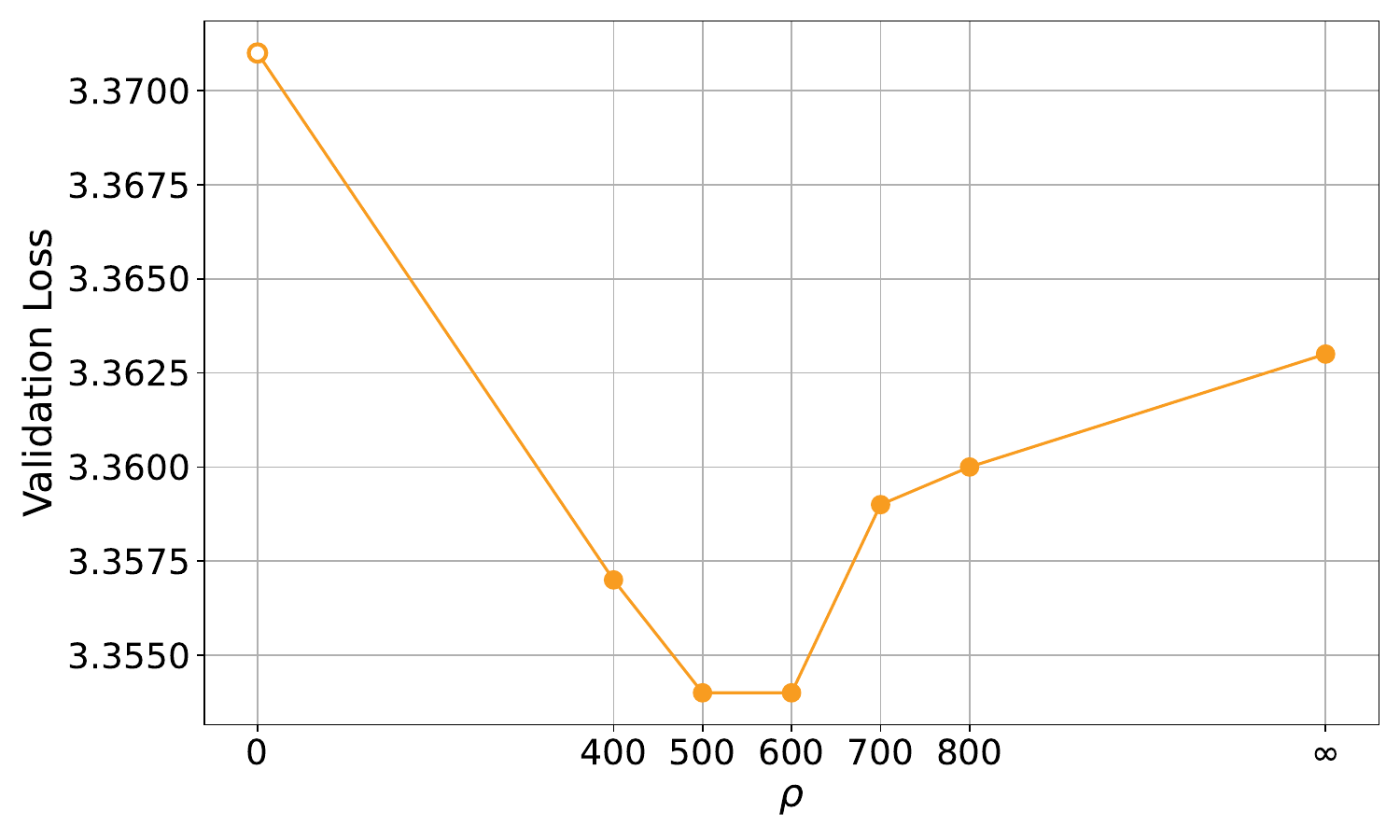}
    \caption{NanoGPT (124M) for Unconstrained ClippedScion with $\rho$ sweeping. The sweep range is set according to the gradient norm from \cref{fig:nanogpt} (right). 
    Both steepest descent ($\rho=\infty$) and conditional gradient ($\rho\rightarrow0$) perform worse than clipping.}
    \label{fig:nanogpt_unconstrained_lrdecay}
\end{figure}

\begin{table}[!h]
\centering
\caption{Hyperparameters for the CIFAR10 experiments building on \texttt{airbench} \citep{airbench_2024}.}
\label{tbl:hyperparams:airbench}
    \begin{tabular}{|c|c|c|c|c|}
        \hline 
        Hyperparameter & Adam & (Clipped)Scion & Unconst. Scion & Unconst. ClippedScion \\
        \hline \hline 
        Block size (b1, b2, b3) & \multicolumn{4}{c|}{width factor $\times$ (64, 256, 256)} \\
        Activation function &  \multicolumn{4}{c|}{GELU}   \\
        Dataset &  \multicolumn{4}{c|}{CIFAR10 (50000 training examples)}  \\
        batch size &  \multicolumn{4}{c|}{2000}   \\
        Epochs & \multicolumn{4}{c|}{80} \\
        Stepsize schedule & \multicolumn{4}{c|}{Linear decay $\gamma_k = \gamma \cdot (1-k/n)$} \\
        \hline 
        Averaging parameter $\alpha$ & 0.9 & \multicolumn{3}{c|}{0.5} \\
        \hline 
        Stepsize $\gamma$ & 1e-3 & $2^{-8}$ & $2^{-5}$ & $2^{-2}$ \\
        Initial stepsize $\gamma$ for decay & 2e-3 & - & $2^{-1}$ & $2^{-1}$ \\
        \hline 
        Clipping parameter $\rho$ & - & 12800 & - & 1600 \\
        \hline 
        Radius $r_1$ / $r_\ell$ / $r_D$ & - & \multicolumn{1}{c|}{1 / 5 / 2000} & \multicolumn{1}{c|}{1 / 5 / 200} & \multicolumn{1}{c|}{1 / 5 / 200} \\
        \hline
    \end{tabular}
\end{table}

\begin{table}[!h]
\centering
\caption{DeiT-base hyperparameters following the tuned hyperparameters of \citet{pethick2025training}}
\label{tbl:hyperparams:DeiT}
    \begin{tabular}{|c|c|c|}
        \hline 
        Hyperparameter & Unconstrained Scion & Unconstrained ClippedScion \\
        \hline \hline 
        Layers &  \multicolumn{2}{c|}{12}   \\
        Head dim &  \multicolumn{2}{c|}{64}   \\
        Activation function & \multicolumn{2}{c|}{$\sqrt{2}\cdot$ GELU (scaled to preserve variance)} \\
        Normalization function & \multicolumn{2}{c|}{RMSNorm} \\
        Sequence Length &  \multicolumn{2}{c|}{197}   \\
        Dataset &  \multicolumn{2}{c|}{ImageNet-1k}   \\
        Stepsize schedule & \multicolumn{2}{c|}{Cosine decay} \\
        Max lr & \multicolumn{2}{c|}{$0.00024$} \\
        Warmup epochs & \multicolumn{2}{c|}{0} \\
        End lr & \multicolumn{2}{c|}{$10^{-7}$} \\
        Batch size & \multicolumn{2}{c|}{4096}   \\
        Epochs & \multicolumn{2}{c|}{200} \\
        Averaging parameter $\alpha$ & \multicolumn{2}{c|}{0.1} \\
        Radius $\rho_1$ / $\rho_\ell$ / $\rho_L$ &  \multicolumn{2}{c|}{25 / 25 / 500} \\
        \hline
        Clipping parameter $\rho$ & \multicolumn{1}{c|}{-} & 8000 \\
        \hline
    \end{tabular}
\end{table}

\begin{table}[!h]
    \centering
\caption{NanoGPT hyperparameters following the tuned hyperparameters of \citet{pethick2025training}.}
\label{tbl:hyperparams:nanoGPT}
    \begin{tabular}{|c|c|c|c|c|} 
        \hline 
        Hyperparameter & AdamW & (Unconstrained) Scion & (Unconstrained) ClippedScion \\
        \hline \hline 
        Layers &  \multicolumn{3}{c|}{12}   \\
        Head dim &  \multicolumn{3}{c|}{128}   \\
        Activation function &  \multicolumn{3}{c|}{$2\cdot \operatorname{ReLU}(x)^2$}   \\
        Vocabulary size &  \multicolumn{3}{c|}{50304}   \\
        Dataset &  \multicolumn{3}{c|}{FineWeb}   \\
        batch size &  \multicolumn{3}{c|}{512}   \\
        block size &  \multicolumn{3}{c|}{1024}   \\
        Iterations $n$ & \multicolumn{3}{c|}{5100} \\
        Warmdown & \multicolumn{3}{c|}{28.5\% or 0\%} \\
        Stepsize schedule & \multicolumn{3}{c|}{Constant then linear decay $\gamma_k = \begin{cases} \gamma & \text{if } k < n-m \\ \gamma \cdot (\frac{n - k}{m}) & \text{if } k \geq n-m \end{cases}$} \\
        \hline 
        Warmup & 5\% & \multicolumn{2}{c|}{0} \\
        Gradient clipping & Yes & \multicolumn{2}{c|}{No} \\
        Momentum $\beta_1$ / $\beta_2$ & 0.9 / 0.95 & \multicolumn{2}{c|}{-} \\
        Averaging parameter $\alpha$ & - & \multicolumn{2}{c|}{0.1} \\
        \hline 
        Stepsize $\gamma\rho$ & \multicolumn{1}{c|}{0.0018} & \multicolumn{2}{c|}{0.00036} \\
        Clipping parameter $\rho$ & \multicolumn{1}{c|}{-} & \multicolumn{2}{c|}{600 for 124M model and 6000 for 1B model} \\
        Radius $r_1$ / $r_\ell$ / $r_D$ & \multicolumn{1}{c|}{-} & \multicolumn{2}{c|}{- /50 / 3000} \\
        \hline
    \end{tabular}
\end{table}

\end{toappendix}

\vspace{-2mm}
\section{Experiments}
\vspace{-2mm}

For the norm choice of Scion and ClippedScion we use the (Sign $\rightarrow$ Spectral $\rightarrow$ Sign) and (Spectral $\rightarrow$ Spectral $\rightarrow$ Sign) configurations for language modeling and image classication respectively (see \citet[Tbl. 2-4]{pethick2025training} for the associated scaling factors).
To compute the spectral $\lmo$ we use the efficient implementation provided in \citet{jordan2024muon} of the Newton-Schultz iteration proposed in \citet{bernstein2024old}.
There have been recent efforts to move beyond the ``N'' (Newton-Schulz) in Muon, the most popular algorithm computing the spectral LMO, through alternative subroutines; our algorithm is compatible with these alternatives, like the optimized PolarExpress routine \citep{amsel2025polar} or power iterations \citep{ahn2025dion,vogels2019powersgd}, although we do not explore them here.

\paragraph{Image classification}
We test on a convolutional neural network (CNN) on the CIFAR10 dataset.
Hyperparameters can be found in \Cref{tbl:hyperparams:airbench} in \Cref{app:exp}.
We consider both a fixed stepsize setting and stepsize scheduling using linear rampdown to investigate if the theoretical results are predictive of practice.
We report the experimental results in \Cref{fig:cifar10} where mean and standard deviation are computed over 5 independent runs.

We find that clipping can substantially improve the test accuracy in the fixed stepsize setting, when the gradient norm (i.e. $\|d^k\|_*=\braket{d^k,v^k}$) is decreasing. %
This separation is in agreement with the theoretical separation between \Cref{thm:det:GGNC} and \Cref{thm:det:uSCG} on fixed stepsizes.
In the constrained case (\Cref{alg:SCGSS}) we surprisingly find that $\braket{d^k,v^k}$ is increasing (cf. \Cref{fig:cifar10:constrained} in \Cref{app:exp}) which requires further investigation.
With stepsize scheduling we observe that clipping (i.e., Unconstrained ClippedScion) and normalization (i.e., Unconstrained Scion) achieve similar performance, which aligns with the matching theoretical rates of \ref{eq:GGNC} (\Cref{thm:GGNC}) and uSCG (\citet[Thm. 5.4]{pethick2025training}) in the stochastic case when stepsizes are taken decreasing.

We also evaluate the unconstrained case (\Cref{alg:GGNC}) using Vision Transformers (ViT) on the ImageNet dataset. 
We train a DeiT-base model using the DeiT codebase \citep{touvron2021training} with replacing LayerNorm by RMS norm following \citep{pethick2025training}. 
\Cref{tbl:hyperparams:DeiT} in \Cref{app:exp} contains the hyperparameter details.
As shown in \Cref{fig:deit} (\Cref{app:exp}), Unconstrained ClippedScion achieves an 11\% speedup over Unconstrained Scion, even though its gradient norm ($\|d^k\|_{\ast}$) is increasing. This observation requires further exploration.

\paragraph{NanoGPT}
We additionally test on NanoGPT \citet{karpathy2023nanogpt} in \Cref{fig:nanogpt_1B} with modernizations following \citep{modded_nanogpt_2024}: rotary embeddings are used instead of positional embeddings, RMS norm is used instead of LayerNorm, and the ReLU$^2$ \citep{so2021searching} instead of GELU activation function.
All methods are trained for $5100$ iterations with a batchsize of $512$ and context length of $1024$ on the FineWeb dataset (see \Cref{tbl:hyperparams:nanoGPT} \Cref{app:exp} for further details).
The empirical observations matches those for CIFAR10 experiments.

\begin{figure}
    \centering
    \includegraphics[width=0.38\linewidth]{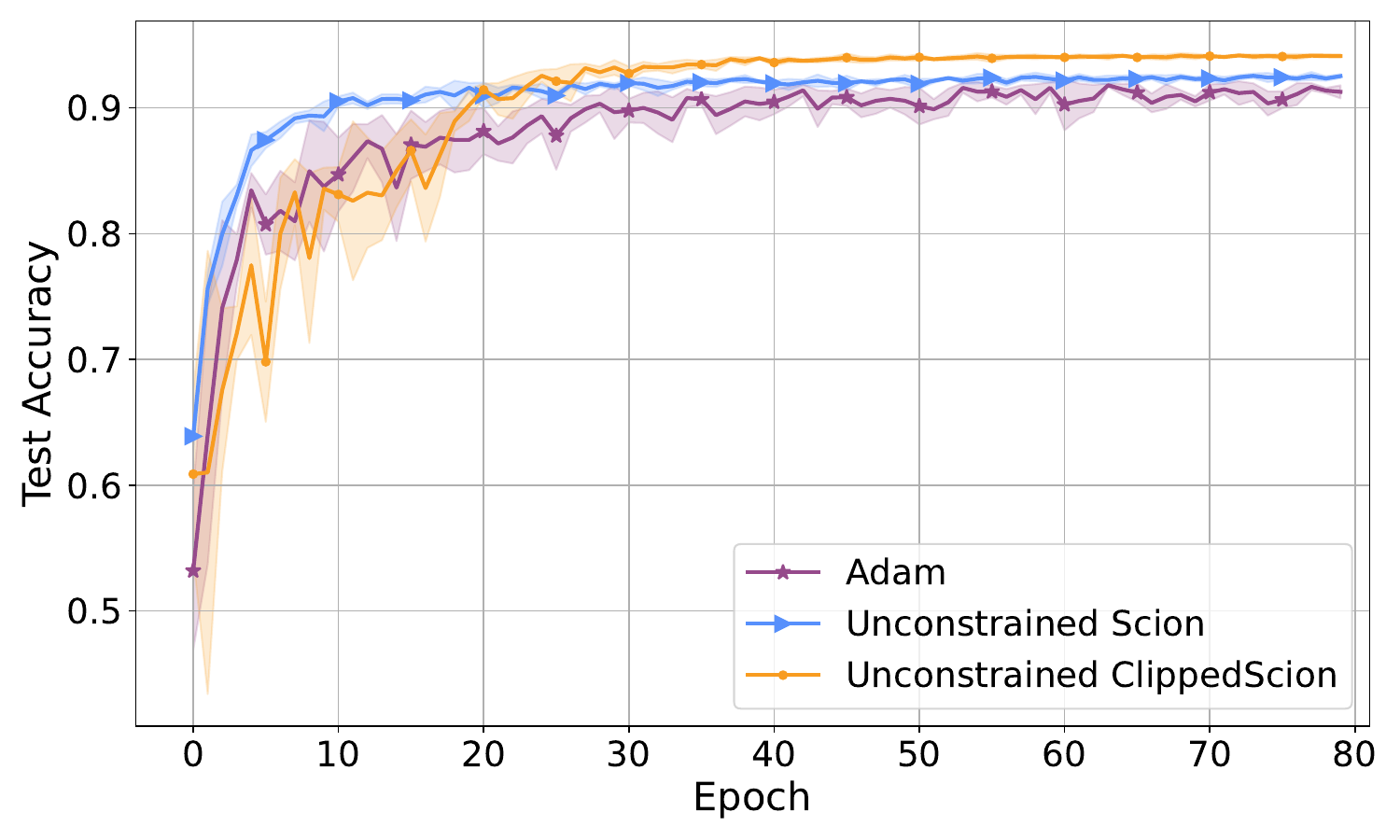}
    \quad
    \includegraphics[width=0.38\linewidth]{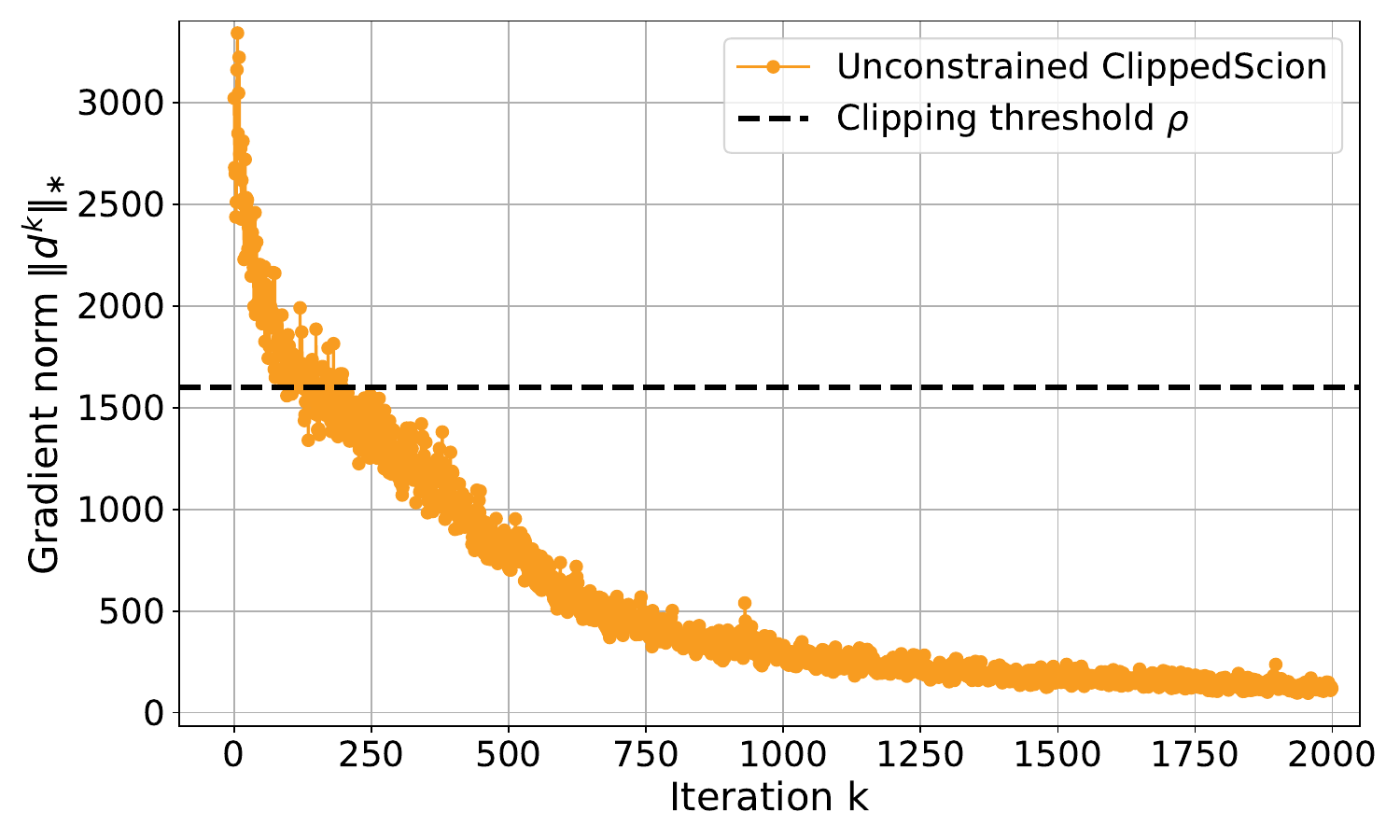}
    \caption{For CIFAR10 experiments with fixed stepsize clipping leads to a substantial improvement.}
    \label{fig:cifar10}
\end{figure}

\begin{figure}
    \centering
    \includegraphics[width=0.38\linewidth]{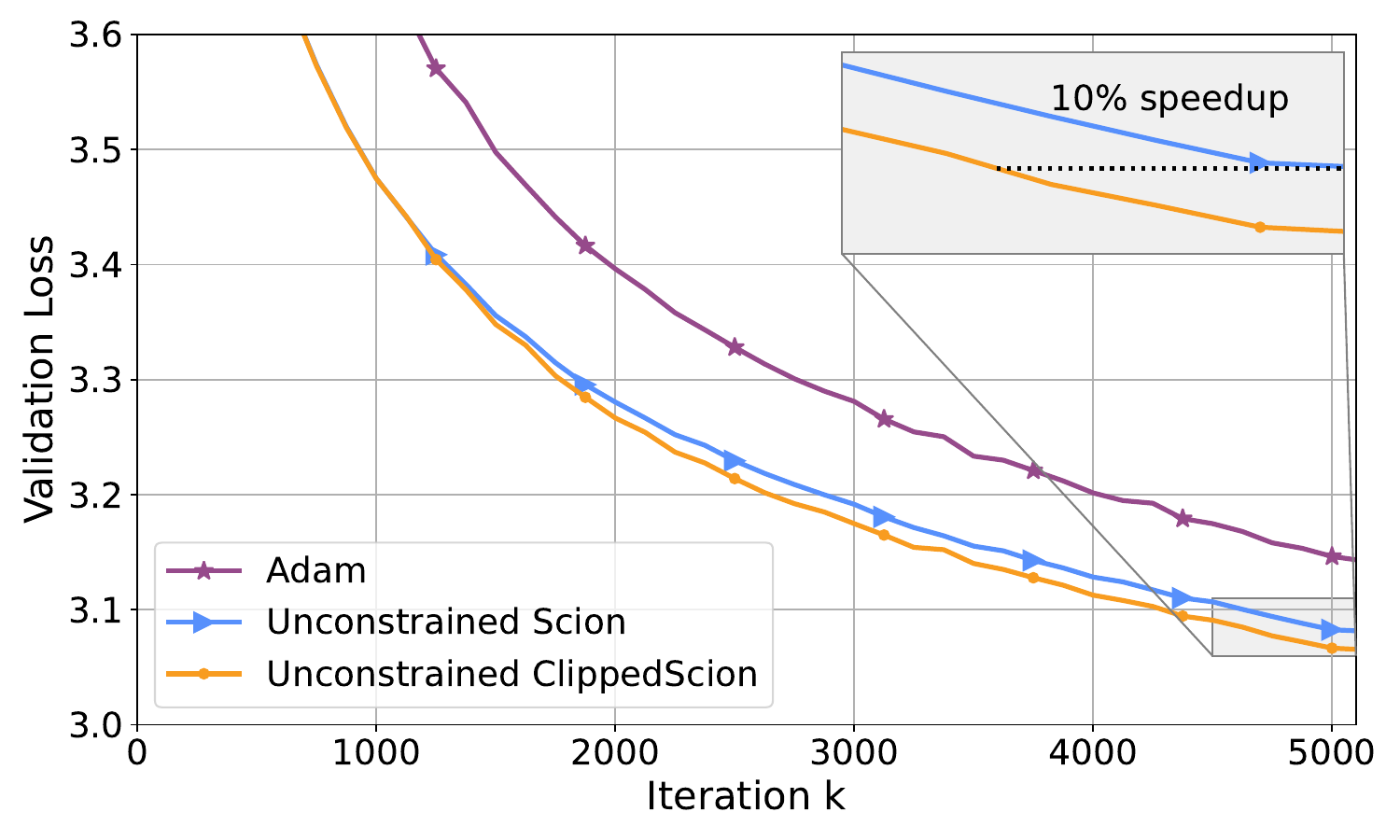}
    \quad
    \includegraphics[width=0.38\linewidth]{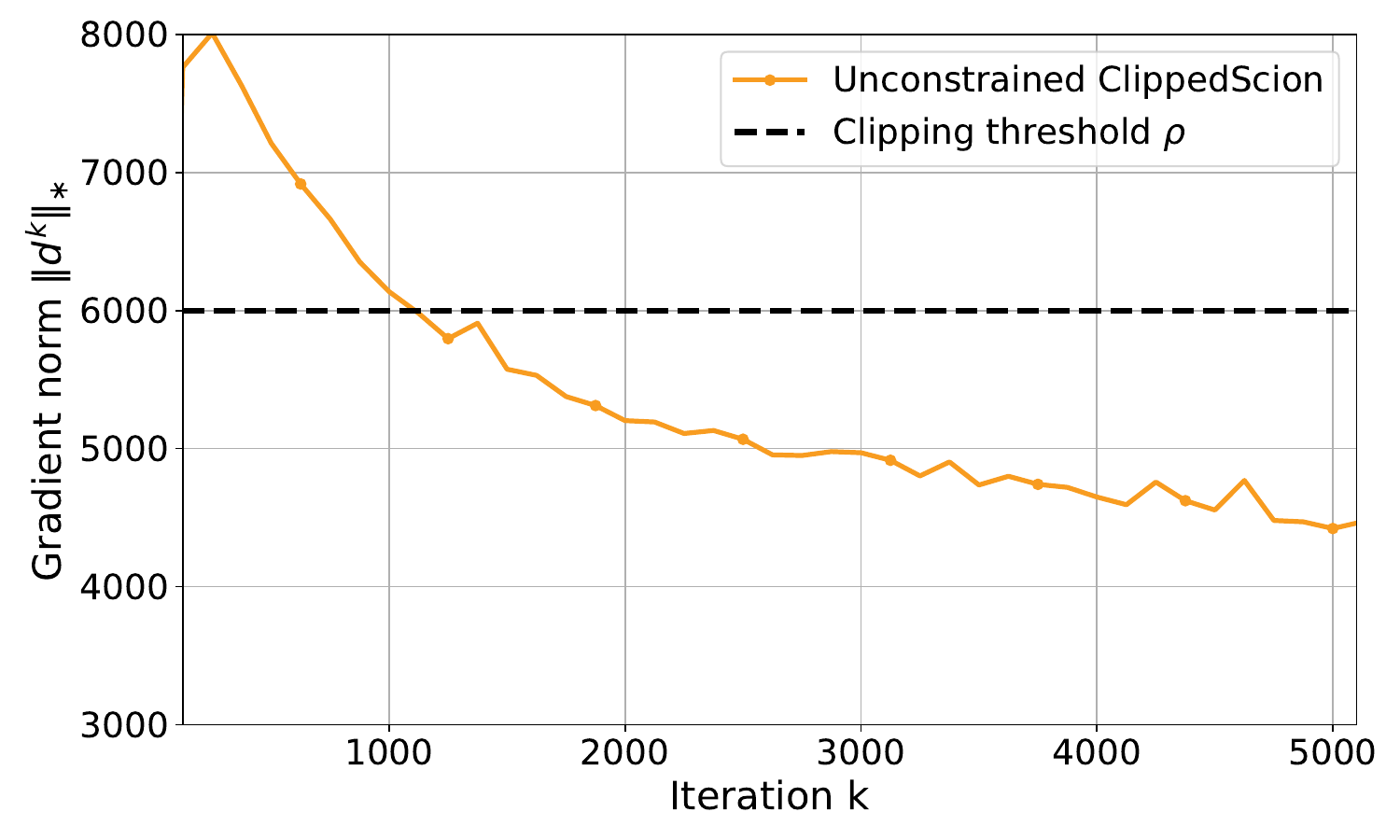}
    \caption{For fixed stepsize comparison clipping improves over Scion by more than a 10\% speedup on NanoGPT (1B).
    We observe similar gains on the smaller 124M parameter model size (\textit{cf}. \Cref{app:exp}).} %
    \label{fig:nanogpt_1B}
\end{figure}

\vspace{-2mm}
\section{Conclusion}\label{sec:conclusion}
\vspace{-2mm}
We have shown that clipping can be extended to non-Euclidean settings and even constrained problems by establishing a precise connection to the Frank-Wolfe short step.
A descent property was established under a generalized notion of $(L_0,L_1)$-smoothness, which opens up a range of interesting directions:
    
    The descent property both in the unconstrained and constrained case enables integration with adaptive stepsize choices such as AdaGrad and backtracking line-search.
    
    The non-Euclidean notion of $(L_0,L_1)$-smoothness we introduce might be a suitable condition to study for neural networks.
    \citet{large2024scalable} showed that neural networks are smooth in the modular norm provided that the parameters are constrained.
    However, in practice, violating the constraints seem to be unproblematic for optimization, which suggests that a looser smoothness assumption might hold such as \Cref{asm:Lip}.

\section*{Acknowledgment}\label{sec:acknowledgement}
This work was supported as part of the Swiss AI Initiative by a grant from the Swiss National Supercomputing Centre (CSCS) under project ID a06 on Alps.
This work was supported by the Swiss National Science Foundation (SNSF) under grant number 200021\_205011. 
This work was supported by Hasler Foundation Program: Hasler Responsible AI (project number 21043).
Research was sponsored by the Army Research Office and was accomplished under Grant Number W911NF-24-1-0048.

\bibliographystyle{plainnat}
\bibliography{refs.bib}

\newpage
\appendix
\onecolumn

\begin{center}
\vspace{7pt}
{\Large \fontseries{bx}\selectfont Appendix}
\end{center}

\renewcommand{\contentsname}{Table of Contents}
\etocdepthtag.toc{mtappendix}
\etocsettagdepth{mtchapter}{none}
\etocsettagdepth{mtappendix}{section}
\tableofcontents

\newpage

\end{document}